\newcommand{\N}{\ensuremath{\mathbb{N}}}
\newcommand{\R}{\ensuremath{\mathbb{R}}}
 \renewcommand{\vec}[1]{\boldsymbol{#1}}
\newcommand{\matr}[1]{\boldsymbol{#1}}
\newcommand{\matB}{\ensuremath{\boldsymbol{B}}}
\newcommand{\matT}{\ensuremath{\boldsymbol{T}}}
\newcommand{\vecs}{\ensuremath{\boldsymbol{s}}}
\newcommand{\vect}{\ensuremath{\boldsymbol{t}}}
\newcommand{\vecx}{\ensuremath{\boldsymbol{x}}}
\newcommand{\vecy}{\ensuremath{\boldsymbol{y}}}
\newcommand{\vecz}{\ensuremath{\boldsymbol{z}}}
  \newtheorem{inftheorem}{Informal Theorem}
 \newtheorem{claim}[theorem]{Claim}
\numberwithin{equation}{section}
\newenvironment{prevproof}[2]{\noindent {\bf {Proof of
{#1}~\ref{#2}:}}}{$\blacksquare$\vskip \belowdisplayskip}
\newif\ifnotes\notestrue
\definecolor{mygrey}{gray}{0.50}
\newcommand{\notename}[2]{{\textcolor{red}{\footnotesize{\bf (#1:} {#2}{\bf
) }}}}
\newcommand{\notename}[2]{{}}
\newcommand{\geometric}{\mathcal{G}}
\DeclareMathOperator*{\Exp}{\mathbb{E}}
\DeclareMathOperator*{\Prob}{\mathbb{P}}
\DeclareMathOperator*{\Var}{\mathrm{Var}}
\DeclareMathOperator*{\KL}{\mathrm{D}_{\mathrm{KL}}}
\DeclareMathOperator*{\TV}{\mathrm{d}_{\mathrm{TV}}}
\mathchardef\mdash="2D
\newcommand{\eps}{\varepsilon}
\renewcommand{\epsilon}{\varepsilon}
\def\compactify{\itemsep=0pt \topsep=0pt \partopsep=0pt \parsep=0pt}
\let\latexusecounter=\usecounter
\newenvironment{Enumerate}
  {\def\usecounter{\compactify\latexusecounter}
   \begin{enumerate}}
  {\end{enumerate}\let\usecounter=\latexusecounter}
\def\eps{\varepsilon}
\DeclareMathOperator*{\argmin}{argmin}
\def\floor#1{\mathop{\left\lfloor#1\right\rfloor}}
\def\abs#1{\left|#1\right|}
\def\p#1{\left(#1\right)}
\def\b#1{\left[#1\right]}
\def\set#1{\left\{#1\right\}}
\newcommand{\paragr}[1]{\noindent \textbf{#1}}
\def\norm#1{\left\|#1\right\|}
\newcommand{\reals}{\mathbb{R}}
\newcommand{\chara}{\mathds{1}}
\newcommand{\expFamily}{\mathcal{E}}
\newcommand{\distr}{\mathcal{P}}
\newcommand{\Mallows}{\mathcal{M}}
\newcommand{\tgeometric}{\mathcal{TG}}
\newcommand{\family}{\mathcal{F}}
\newcommand{\risk}{\mathcal{R}}
\newcommand{\support}{\mathcal{X}}
\newcommand{\Domain}{\mathcal{D}}
\newcommand{\veta}{\vec{\eta}}
\newcommand{\sign}{\mathrm{sign}}
\title[Optimal Learning Mallows Model]{Optimal Learning for Mallows Block Model}
\begin{document}

\maketitle

\begin{abstract}
  The Mallows model, introduced in the seminal paper of \cite{Mal57}, is one of the most fundamental ranking distribution over the symmetric group $S_m$. To analyze more complex ranking data, several studies considered the Generalized Mallows model \citep{FlignerV1986,DoPeRe04,Mar95}. Despite the significant research interest of ranking distributions, the exact sample complexity of estimating the parameters of a Mallows and a Generalized Mallows Model is not well-understood.

  The main result of the paper is a tight sample complexity bound for learning Mallows and Generalized Mallows Model. We approach the learning problem by analyzing a more general model which interpolates between the single parameter Mallows Model and the $m$ parameter Mallows model. We call our model \textit{Mallows Block Model} -- referring to the Block Models that are a popular model in theoretical statistics. Our sample complexity analysis gives tight bound for learning the Mallows Block Model for any number of blocks. We provide essentially matching lower bounds for our sample complexity results.

  As a corollary of our analysis, it turns out that, if the central ranking is known, one single sample from the Mallows Block Model is sufficient to estimate the spread parameters with error that goes to zero as the size of the permutations goes to infinity. In addition, we calculate the exact rate of the parameter estimation error.

\end{abstract}

\begin{keywords}Ranking distributions, Mallows model, Generalized Mallows, Exponential family
\end{keywords}

  \section{Introduction} \label{sec:intro}

The Mallows model is one of the most fundamental ranking distribution since it was introduced in the seminal paper of \cite{Mal57}. The model has two parameters, the \emph{central ranking} $\pi_0 \in S_m$ and the \emph{spread parameter} $\phi\in [0,1]$. Based on these, the probability of observing a ranking $\pi \in S_m$ is proportional to $\phi^{d ( \pi, \pi_0 )}$, where $d$ is a ranking distance, such as the number of discordant pairs, a.k.a Kendall's tau distance.

To capture more complicated distributions over rankings, several studies considered the generalized Mallows model \citep{FlignerV1986,DoPeRe04,Mar95}, which assigns a different spread parameter $\phi_i \in [0, 1]$ to each alternative $i$. Now the probability of observing $\pi \in S_m$ decreases exponentially in a weighted sum over the discordant pairs, where the weights are determined by the spread parameters of discordant items. Statistical estimation of the distribution and the parameters of the Mallows model has been of interest in a wide range of scientific areas including theoretical statistics \citep{Mukherjee16}, machine learning \citep{LuBo11,AwasthiBS14,ChenBBK09,MeilaB10}, social choice \citep{CaragiannisPS16}, theoretical computer science \citep{LiuM18} and many more, as we discuss in Section~\ref{sec:relatedWork}.

Despite this extensive literature, to the best of our knowledge, no optimal results are known on the sample complexity of learning the parameters of a Mallows or a generalized Mallows model. In this work, we fill this gap by proving: (1) an upper bound on the number of samples needed by some simple estimators to accurately estimate the parameters of the Mallows model, (2) an essentially matching lower bound on the sample complexity of any accurate estimator. Using our tight sample analysis, we are able to quantify in the finite sample regime some results that were only known in the asymptotic regime (e.g., \cite{Mukherjee16}).

Additionally, we introduce the \emph{Mallows Block model}, which interpolates between the simple Mallows and the generalized Mallows models. The definition of the Mallows Block model is similar in spirit to the (fundamental in theoretical statistics) Stochastic Block model \citep{KloppTV17}, which admits similar statistical properties. Also, \cite{BerthetRS16} recently introduced the Ising Block model, which is conceptually similar to the Stochastic Block Model. As we prove, the Mallows Block model combines two nice properties: (a) like the generalized Mallows model, it describes a wider range of distributions over rankings than the Mallows model; and (b) it allows accurate estimation of the spread parameters even from one sample, as it has been proved in \citep{Mukherjee16} for the Mallows model. We analyze the sample complexity of the Mallows Block model by proving essentially tight upper and lower bounds when the block structure is known.

\subsection{Results and Techniques} \label{sec:contrib}

In this work, we fully determine the sample complexity of learning Mallows and Generalized Mallows distributions, in a unified way, via the definition of the Mallows Block model. In a nutshell, we show how to estimate the parameters of these distributions in a (sample and time) efficient way, and how this implies efficient density estimation in KL-divergence and in total variation distance. Our approach is general and exploits properties of the exponential family. As we illustrate in Section~\ref{sec:concentration}, the use of these properties might useful in proving the exact learning rates for other complicated exponential families, such as the Ising model.
\smallskip

\paragr{Learning in KL-divergence.} Our learning algorithm for the spread parameters essentially finds the maximum likelihood solution, but in a provably computationally efficient way. The sample complexity analysis of the consistency of our estimator is based on some known and some novel results about exponential families.
As we see in Theorem~\ref{thm:exponentialFamiliesProperties}.4, the KL-divergence of two distributions in an exponential family is equal to the square difference of their parameters multiplied by the variance of a corresponding distribution inside the exponential family. If we put this together with Theorem~\ref{thm:exponentialFamiliesConcentration}, where we obtain a new strong concentration inequality for distributions in an exponential family, we get a systematic way of proving upper bounds on the number of samples required to learn an exponential family in KL-divergence. Thus, we depart from the (only known) upper bounds on density estimation in total variation distance. We apply our technique to the Mallows Block model and get tight upper bounds of  $O\p{\frac{d}{\eps^2} + \log\p{m}}$ samples, where $d$ is the (known) number of blocks in the Mallows Block model. We sketch the statement of this result below, for a formal statement see Theorem~\ref{thm:learningKLDivergence:blockModel}.

\begin{inftheorem}    Given $n = \tilde{\Omega} \p{\frac{d}{\eps^2} + \log\p{m}}$ samples from a Mallows $d$-Block
  distribution $\distr$, we can learn a distribution
  $\hat{\distr}$ such that $\KL\p{\distr||\hat{\distr}} \le \eps^2$ and hence
  $\TV\p{\distr, \hat{\distr}} \le \eps$.
\end{inftheorem}

\paragr{Parameter Estimation.} Extending a result of \cite{CaragiannisPS16}, we show that a logarithmic number of samples is both sufficient and necessary to estimate the central ranking of a generalized Mallows distribution (Theorem~\ref{thm:centralRankingLearningBlockModel}). Then, using our results on exponential families, we show that estimating the spread parameter $\phi$ of a Mallows distribution boils down to obtaining a lower bound on the KL-divergence between two Mallows distributions with the same central ranking and parameters $|\phi - \phi'| = \Theta(\eps)$.
With such a lower bound on the KL-divergence, we can apply the concentration inequality of Theorem~\ref{thm:exponentialFamiliesConcentration}, and show that once we learn the central ranking, with additional $O\left(\frac{d}{m^{\star} \eps^2}\right)$ i.i.d. samples, we can estimate the parameter vector $\vec{\phi}$ of the underlying Mallows Block model within $\ell_2$ error at most $\eps$. Here, $d$ denotes the number of blocks of the Mallows Block model and $m^{\star}$ is the minimum size of any block. We put everything together in the following informal theorem and refer to Theorem~\ref{thm:MallowsBlockParameterLearning} for a formal statement.

\begin{inftheorem} \label{thm:inf:learningParameter}
    Given $n = \tilde{\Omega}\p{\frac{d}{m^{\star} \eps^2} + \log\p{m}}$ samples from a Mallows $d$-Block distirbution $\distr$ with parameters $\pi^{\star}$ and $\vec{\phi}^{\star}$, we can estimate $\hat{\pi}$ and $\hat{\vec{\phi}}$ so that
  $\hat{\pi} = \pi^{\star}$ and $\norm{\hat{\vec{\phi}} - \vec{\phi}^{\star}}_2 \le \eps$.
\end{inftheorem}

A key observation in the proof of Theorem~\ref{thm:MallowsBlockParameterLearning} is that the sufficient statistics for a generalized Mallows model with known central ranking are provided by an $m$-variate distribution where the $i$-th coordinate is an independent \textit{truncated geometric distribution}.  Truncated geometric distributions interpolate between Bernoulli and geometric distributions. The sufficient statistics of the Mallows Block model correspond to sums of truncated geometric distributions, which interpolate between Binomial and Negative Binomial distributions. We hence believe that the study of sums of truncated geometric distribution may be of independent interest.
We should also highlight that in our approach, only the lower bound on the variance depends on Kendall's tau distance. Once we have such a bound for other exponential families, we can immediately apply our technique, e.g., to Mallows models with Spearman's Footrule and Spearman's Rank Correlation, as in \citep{Mukherjee16}.

\smallskip\paragr{Learning from one sample.} Arguably, the most interesting corollary of our tight analysis is that a single sample from a Mallows $d$-Block model with known central ranking is enough to estimate $\vec{\phi}$ within error $O\left(\sqrt{d/m^{\star}}\right)$, where again $m^{\star}$ is the minimum size of any block in the Mallows Block model. This result provides the exact rate of an asymptotic result by \cite{Mukherjee16}. The formal version of the following informal theorem can be found in Corollary~\ref{cor:singleSampleBlockModel}.

\begin{inftheorem} \label{thm:inf:learningSingleSample}
    Given a single sample from a Mallows $d$-Block distribution $\distr$ with
  known central ranking $\pi^{\star}$ and spread parameters $\vec{\phi}^{\star}$, we can estimate $\hat{\vec{\phi}}$ so that $\norm{\hat{\vec{\phi}} - \vec{\phi}^{\star}}_2 \le \tilde{O}\p{\sqrt{\frac{d}{m^{\star}}}}$.
\end{inftheorem}

\paragr{Lower Bounds.} On the lower bound side, we use Fano's inequality and show that $\Omega(\log\p{m})$ samples are necessary even for learning a simple Mallows distribution in total variation distance (Lemma~\ref{lem:permutationLowerBound}). Then, we show that $\Omega\p{\frac{d}{\eps^2}}$ samples are necessary for learning a Mallows $d$-Block distribution in total variation distance. For a formal statement of the following informal theorem we refer to Lemma~\ref{lem:learningKLDivergence:blockModel:LowerBound}.

\begin{inftheorem} \label{thm:inf:lowerBound}
    Any distribution estimation $\hat{\distr}$ that is based only on
  $o\p{\frac{d}{\eps^2} + \log\p{m}}$ samples from a Mallows $d$-Block distribution
  $\distr$
 satisfies $\TV\p{\distr, \hat{\distr}} \ge \eps$.
\end{inftheorem}

\noindent Interestingly, our lower bound uses a general way to compute the total variation distance of two distributions that belong to the same exponential family (Theorem~\ref{thm:totalVariationLowerBoundExponentialFamily}). This theorem states that the total variation of two distributions in the same exponential family is equal to the distance between their parameters times the absolute deviation of a corresponding distribution in the family. This should be compared with Theorem~\ref{thm:exponentialFamiliesProperties}.4, on the KL-divergence between two distributions in the same exponential family.
Using Theorem~\ref{thm:totalVariationLowerBoundExponentialFamily}, our lower bound boils down to showing that for some range of parameters, the absolute deviation is within a constant from the standard deviation. With this proven, we get that the total variation distance is within a constant factor from the square root of the KL-divergence, and Fano's inequality can be applied.
\smallskip

\paragr{Open Problems.} An open problem that naturally arises from the definition of the Mallows Block model is the possibility of estimating the spread parameters, even from a single sample, of the Mallows Block model when the block structure is unknown. Such results are known for the fundamental Stochastic Block model in theoretical statistics \citep{KloppTV17}. Recently, \cite{BerthetRS16} introduced the Ising Block model and proved some similar results. Another interesting question is about the minimum number of samples required to recover the block structure of the Mallows Block Model. Again, similar results are known for the Stochastic Block Model \citep{MosselNS18}.

Another research direction is to obtain lower bounds on the variance of the distance to the central ranking for other notions of distance, such as Spearman's Footrule and Spearman's Rank Correlation. Then, we can apply our general approach and obtain tight bounds on the sample complexity of learning such models and on the quality of parameter estimation from a single sample, as in \citep{Mukherjee16}.

\subsection{Related work} \label{sec:relatedWork}

There has been a significant volume of research work on algorithmic and learning problems related to our work.
In the \emph{consensus ranking problem}, a finite set $\{ \pi_1, \dots \pi_n \}$ of rankings is given, and we want to compute the ranking $\argmin_{\pi \in S_m} \sum_{i=1}^n d (\pi , \pi_i )$. This problem is known to be NP-hard~\citep{Bartholdi1989}, but it admits a polynomial-time $11/7$-approximation algorithm problem~\citep{AiChNe05} and a PTAS~\citep{KeSc07}. When the rankings are i.i.d. samples from a Mallows distribution, consensus ranking is equivalent to computing the maximum likelihood ranking, which does not depend on the spread parameter.
Intuitively, the problem of finding the central ranking should not be hard, if the probability mass is concentrated around the central ranking. \cite{MPPB07} came up with a branch and bound technique which relies on this observation. \cite{BrMo09} proposed a dynamic programming approach that computes the consensus ranking  efficiently, under the Mallows model. \cite{CaragiannisPS16} showed that the central ranking can be recovered from a logarithmic number of i.i.d. samples from a Mallows distribution (see also Theorem~\ref{thm:centralRankingLearningBlockModel}).

\cite{Mukherjee16} considered learning the spread parameter of a Mallows model based on a single sample, assuming that the central ranking is known. He studied the asymptotic behavior of his estimator and proved consistency. We strengthen this result by showing that our parameter estimator, based on single sample, can achieve optimal error for Mallows Block model (Corollary~\ref{cor:singleSampleBlockModel}).

There has been significant work either on learning a Mallows model based on partial information, e.g. partial rankings or pairwise comparisons~\citep{AdFl98,LuBo11,Busa-FeketeHS14}, or on learning generalizations of the Mallows model, such as learning mixture of Mallows models \citep{LiuM18}. Among these works, \citep{AwasthiBS14, LiuM18} seem the most relevant to our paper, since they considered learning mixtures of single parameter Mallows models in a learning setup that is similar in spirit to ours: find a model that is close to the underlying one either in the parameter space or in total variation distance based on as few sample as possible. However, the sample complexity of learning mixtures is necessarily much higher and a high degree polynomial of $1/\eps$ and $m$. Hence their results do not compare with our optimal sample complexity analysis even for the simple Mallows model case.

The parameter estimation of the Generalized Mallows Model has been examined from a practical point of view by \cite{MeiluaPPB07} but no theoretical guarantees for the sample complexity have been provided. Several ranking models are routinely used in analyzing ranking data~\citep{Mar95,Shi16}, such as Plackett-Luce model~\citep{Pla75,Luc59}, Babington-Smith model~\citep{HV93} and spectral analysis based methods~\citep{NIPS2012_4720,SibonyCJ15} and non-parametric methods~\citep{LebanonM07}. However, to our best knowledge, none of these ranking methods have been analyzed from point of distribution learning which comes with guarantee on some information theoretic distance. \citet{HajekOX14} considered the problem of learning parameters of  Plackett-Luce model and they came up with high probability bounds for their estimator that is tight in a sense that there is no algorithm which can achieve lower estimation error with fewer examples.

  \section{Preliminaries and Notation} \label{sec:model}

Small bold letters $\vec{x}$ refer to real vectors in finite dimension
$\reals^d$ and capital bold letters $\matr{A}$ refer to matrices in
$\reals^{d \times \ell}$. We denote by $x_i$ the $i$th coordinate of $\vec{x}$,
and by $A_{ij}$ the $(i, j)$th coordinate of $\matr{A}$.
For any $\vec{x}, \vec{y} \in \R^d$ we define
$L(\vec{x}, \vec{y}) = \{ \vecz \in \R^d \mid \vecz = t \vecx + (1 - t) \vecy, ~~ t \in [0, 1] \}$.
\smallskip

\paragr{Metrics between distributions.} Let $p$, $q$ be two probability measures
in the discrete probability space $(\Omega, \mathcal{A})$ then the total variation
distance between $p$ and $q$ is defined as
$\TV\p{p, q} = \frac{1}{2} \sum_{x \in \Omega} \abs{p(x) - q(x)} = \max_{A \in \mathcal{A}} \abs{p(A) - q(A)}$, and the KL-divergence between $p$ and $q$ is
defined as $\KL\p{p || q} = \sum_{x \in \Omega} p(x) \ln \p{\frac{p(x)}{q(x)}}$.
\smallskip

\paragr{Exponential Families.}
  In this section we summarize the basic definitions and properties of the
exponential families of distributions. We follow the formulation and the
expressions of \citep{Keener11, NielsenG09} where we also refer for complete
proofs of the statements presented in this section. Let $\mu$ be a measure on
$\R^d$ and also $h : \R^d \to \R_+$, $\matr{T} : \R^d \to \R^k$ be measurable
functions. We define the \textit{logarithmic partition function}
$\alpha_{\matr{T}, h} : \R^k \to \R_+$ as
$\alpha(\vec{\eta}) =  \alpha_{\matr{T}, h}(\vec{\eta}) = \ln \left( \int \exp\left( \vec{\eta}^T \matr{T}(\vec{x}) \right) h(\vec{x}) ~ d\mu(\vec{x}) \right) $.
We also define the \textit{range of natural parameters} $\mathcal{H}_{\matT, h}$
as
$\mathcal{H}_{\matT, h} = \left\{ \vec{\eta} \in \R^k \mid \alpha_{\matT, h}(\vec{\eta}) < \infty \right\}$.
The \textit{exponential family} $\expFamily(\matT, h)$ with
\textit{sufficient statistics} $\matT$, \textit{carrier measure} $h$ and natural
parameters $\vec{\eta}$ is the family of distributions
$\expFamily(\matT, h) = \left\{ \distr_{\vec{\eta}} \mid \vec{\eta} \in \mathcal{H}_{\matT, h} \right\}$
where the probability distribution $\distr_{\vec{\eta}}$ has density
\begin{equation} \label{eq:exponentialFamily}
  p_{\vec{\eta}} (\vec{x}) = \exp\left( \vec{\eta}^T \matT(\vec{x}) - \alpha(\vec{\eta}) \right) h(\vec{x}).
\end{equation}

\paragr{Truncated Geometric Distribution.} We say that a random variable $Z$
follows the \textit{truncated geometric distribution} $\tgeometric(\phi, k)$ with
parameters $k \in \N \cup \{\infty\}$ and $\phi \in [0, 1]$ if it has the
following probability mass function $p(i) = \phi^i/\sum_{j = 0}^k \phi^j$ for $i \in [0, k]$ and $0$ otherwise.

  For $k = 2$ the distribution $\tgeometric(\phi, k)$ is a Bernoulli
distribution with success probability $\phi/(1 + \phi)$. For $k = \infty$ and
$\phi \in [0, 1)$ the distribution $\tgeometric(\phi, k)$ is a geometric
distribution $\geometric(\phi)$. Observe that if we fix $k$ then
$\mathcal{E}_k = \left\{ \tgeometric(\phi, k) \mid \phi \in [0, 1] \right\}$ is
an exponential family with natural parameter $\theta = \ln (\phi)$. Again the
domain of $\phi$ changes to $\phi \in [0, 1)$ for $k = \infty$.
\smallskip

\paragr{Basic Properties of Exponential Families.} We summarize in the next
theorem the fundamental properties of exponential families. For a proof of this
theorem we refer to the Appendix \ref{sec:app:preliminaries}.
\begin{theorem} \label{thm:exponentialFamiliesProperties}
    Let $\expFamily(\matT, h)$ be an exponential family parametrized by
  $\vec{\eta} \in \R^k$ and for simplicity let
  $\alpha(\cdot) = \alpha_{\matT, h}(\cdot)$ and
  $\mathcal{H} = \mathcal{H}_{\matT, h}$ then the following hold.
  \begin{Enumerate}
    \item For all $\vec{\eta} \in \mathcal{H}$, it holds that
          \begin{equation} \label{eq:thm:exponentialProperties:1}
            \Exp_{\vec{x} \sim \distr_{\vec{\eta}}} \left[ \matT(\vec{x}) \right] = \nabla \alpha(\vec{\eta}).
          \end{equation}
    \item For all $\vec{\eta} \in \mathcal{H}$, it holds that
          \begin{equation} \label{eq:thm:exponentialProperties:2}
            \Var_{\vec{x} \sim \distr_{\vec{\eta}}} \left[ \matT(\vec{x}) \right] = \nabla^2 \alpha(\vec{\eta}).
          \end{equation}
    \item For all $\vec{\eta} \in \mathcal{H}$, $\vec{s} \in \R^d$, it holds that
          \begin{equation} \label{eq:thm:exponentialProperties:3}
            \Exp_{\vecx \sim \distr_{\vec{\eta}}} \left[ \exp\left( \vecs^T \matT(\vecx) \right) \right] = \exp\left( \alpha(\vec{\eta} + \vecs) - \alpha(\vec{\eta}) \right).
          \end{equation}
    \item For all $\vec{\eta}, \vec{\eta}' \in \mathcal{H}$, and for some $\vec{\xi} \in L(\vec{\eta}, \vec{\eta}')$ it holds that
          \begin{equation} \label{eq:thm:exponentialProperties:4}
            \KL \left( \distr_{\vec{\eta}'} || \distr_{\vec{\eta}} \right) =
            - (\vec{\eta}' - \vec{\eta})^T \nabla \alpha(\vec{\eta}) + \alpha(\vec{\eta}') - \alpha(\vec{\eta}) =
            \left( \vec{\eta}' - \vec{\eta} \right)^T \nabla^2 \alpha(\vec{\xi}) \left( \vec{\eta}' - \vec{\eta} \right).
          \end{equation}
  \end{Enumerate}
\end{theorem}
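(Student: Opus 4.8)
The plan is to derive all four properties from a single analytic fact: the log-partition function $\alpha$ is smooth on the interior of $\mathcal{H}$ and one may differentiate under the integral sign there. First I would establish Properties 1 and 2 by differentiating $\alpha$. Writing $Z(\vec{\eta}) = \exp(\alpha(\vec{\eta})) = \int \exp(\vec{\eta}^T\matT(\vec{x}))\, h(\vec{x})\,d\mu(\vec{x})$, I would differentiate under the integral sign to get $\nabla Z(\vec{\eta}) = \int \matT(\vec{x})\exp(\vec{\eta}^T\matT(\vec{x}))\,h(\vec{x})\,d\mu(\vec{x})$; since $\nabla\alpha = \nabla Z / Z$, dividing by $Z(\vec{\eta}) = \exp(\alpha(\vec{\eta}))$ turns the integral into an expectation against the density $p_{\vec{\eta}}$, which is exactly $\Exp_{\vec{x}\sim\distr_{\vec{\eta}}}[\matT(\vec{x})]$ and gives \eqref{eq:thm:exponentialProperties:1}. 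Differentiating once more and simplifying the resulting quotient (the derivative of $\nabla Z / Z$) produces $\nabla^2\alpha(\vec{\eta}) = \Exp[\matT\matT^T] - \Exp[\matT]\,\Exp[\matT]^T = \Var_{\vec{x}\sim\distr_{\vec{\eta}}}[\matT(\vec{x})]$, which is \eqref{eq:thm:exponentialProperties:2}.

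Property 3 I would obtain by direct substitution, recognizing it as the moment-generating-function identity. Pulling $\exp(\vec{s}^T\matT(\vec{x}))$ into the exponent of $p_{\vec{\eta}}(\vec{x})$ turns the integrand into $\exp((\vec{\eta}+\vec{s})^T\matT(\vec{x}))\,h(\vec{x})$, up to the prefactor $\exp(-\alpha(\vec{\eta}))$; the remaining integral is by definition $\exp(\alpha(\vec{\eta}+\vec{s}))$ (finite precisely when $\vec{\eta}+\vec{s}\in\mathcal{H}$), which yields \eqref{eq:thm:exponentialProperties:3}.

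For Property 4 I would proceed in two steps. Because $\ln(p_{\vec{\eta}'}(\vec{x})/p_{\vec{\eta}}(\vec{x})) = (\vec{\eta}'-\vec{\eta})^T\matT(\vec{x}) + \alpha(\vec{\eta}) - \alpha(\vec{\eta}')$ is affine in $\matT(\vec{x})$, taking its expectation under $\distr_{\vec{\eta}'}$ and invoking Property 1 to replace $\Exp[\matT]$ by the gradient of $\alpha$ collapses the KL-divergence to the first, Bregman-type expression in terms of $\alpha$ and $\nabla\alpha$. For the second equality I would apply Taylor's theorem to $\alpha$ along the segment $L(\vec{\eta},\vec{\eta}')$ with the Lagrange form of the remainder: the zeroth- and first-order terms cancel against the Bregman expression, leaving the second-order remainder, a quadratic form in $\vec{\eta}'-\vec{\eta}$ weighted by $\nabla^2\alpha(\vec{\xi})$ at some $\vec{\xi}\in L(\vec{\eta},\vec{\eta}')$, which by Property 2 is the asserted variance-weighted quadratic.

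The main obstacle is entirely the justification of interchanging differentiation and integration used in Properties 1 and 2 (and, implicitly, the finiteness and smoothness of $\alpha$ needed for the Taylor step in Property 4); every other step is mechanical algebra. I would dispatch it with the standard theory of exponential families: on the interior of $\mathcal{H}$ the integral defining $Z$ converges locally uniformly and admits an integrable dominating envelope, so dominated convergence licenses differentiating under the integral sign to all orders and makes $\alpha$ real-analytic. This is precisely the regularity established in \citep{Keener11, NielsenG09}, to which the statement already defers.
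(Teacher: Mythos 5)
Your proposal is correct and follows essentially the same route as the paper: parts 1--3 are deferred to \citep{Keener11, NielsenG09}, whose proofs are exactly the differentiation-under-the-integral and moment-generating-function arguments you outline, and part 4 is proved identically by expanding the affine log-ratio, invoking Property 1, and applying the multidimensional Taylor theorem to obtain the quadratic form at some $\vec{\xi} \in L(\vec{\eta}, \vec{\eta}')$. The only wrinkle is which measure the expectation is taken under: to land on the displayed expression $-(\vec{\eta}'-\vec{\eta})^T\nabla\alpha(\vec{\eta})+\alpha(\vec{\eta}')-\alpha(\vec{\eta})$ you must integrate against $\distr_{\vec{\eta}}$ (as the paper's own computation in fact does), whereas integrating against $\distr_{\vec{\eta}'}$ as you state produces the gradient at $\vec{\eta}'$ --- a mismatch inherited from the theorem statement itself and immaterial to the final Taylor conclusion.
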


\subsection{Ranking Distributions} In this section we review the basic definitions of
exponential families over permutations. We define the single parameter Mallows
model and its generalization.
\smallskip

\paragr{Single Parameter Mallows Model.}
The Mallows model or, more specifically, Mallows $\phi$-distribution is a
parametrized, distance-based probability distribution that belongs to the family
of exponential distributions
$\Mallows_1 = \{ \distr_{\phi, \pi_0} \mid \phi \in [0, 1], \pi_0 \in S_m \}$ with
probability mass function
$p_{\phi, \pi_0} ( \pi ) =  \phi^{d ( \pi, \pi_0 )} / Z(\phi, \pi_0)$
where $\phi$ and $\pi_0$ are the parameters of the model:
$\pi_0 \in S_m$ is the location parameter also called center ranking and
$\phi \in [0,1]$ the spread parameter. Moreover, $d(\cdot,\cdot)$ is a distance
metric on permutations, which for our paper will be the Kendall tau distance ,
that is, the number of discordant item pairs $d_K(\pi,\pi')= \sum_{1 \leq i < j \leq m} \chara\left\{(\pi(i) - \pi(j))(\pi'(i) - \pi'(j)) < 0 \right\}$.

  The normalization factor in the definition of the model is equal to
$Z(\phi, \pi_0) = \sum_{ \pi \in S_n} \phi^{d ( \pi, \pi_0 )}$.
When the distance metric $d$ is the Kendall tau distance we have $Z(\phi, \pi_0) = Z(\phi) = \prod_{i = 1}^{m - 1} \sum_{j = 0}^i \phi^{j}$.
Observe that the family of distributions as stated is not an exponential
family because of the location parameter $\pi_0$. If we fix the
permutation parameter then the family
$\Mallows_1(\pi_0) = \{ \distr_{\phi, \pi_0} \mid \phi \in [0, 1] \}$ is an
exponential family with natural parameter $\theta = \ln \phi$.
\medskip

\paragr{Generalized Mallows Model.} One of the most famous generalizations of
Mallows model is the one introduced by \cite{FlignerV1986} with the name
\textit{Generalized Mallows Model}. We define $V_j(\sigma, \pi)$ to be the
\textit{number of discordant item pairs involving item $j$}, i.e.
$V_j(\sigma, \pi) = \sum_{1 \le i < j} \mathbf{1}\{ (\sigma_i - \sigma_j) (\pi_i - \pi_j) < 0 \}$.
The generalized Mallows family of distribution
$\Mallows_m = \{ \distr_{\vec{\phi}, \pi_0} \mid \vec{\phi} \in [0, 1]^m, \pi_0 \in S_m\}$
with parameters $\pi_0 \in S_m$ and
$\vec{\phi} = (\phi_1, \dots, \phi_m) \in [0, 1]^m$ is defined as the
probability measure over $S_m$ with probability mass function
$p_{\vec{\phi}, \pi_0} (\pi) = \prod_{i = 1}^m \phi_i^{V_i(\pi, \pi_0)} / Z(\vec{\phi}, \pi_0)$.
One important property of the generalized mallows model
when the distance metric $d$ is the Kendall tau distance is that the random
variables $Y_i = V_i(X, \pi)$ where $X \sim \distr_{\vec{\phi}, \pi_0}$ are
independent. This follows from the following decomposition lemma of the partition
function $Z(\vec{\phi})$. For the proof of Lemma
\ref{lem:partitionFunctionDecomposition} we refer to the Appendix
\ref{sec:app:preliminaries}.

\begin{lemma} \label{lem:partitionFunctionDecomposition}
  When $d = \mathrm{Kendall~tau~distance}$, we have that
  $Z(\vec{\phi}, \pi_0) = Z(\vec{\phi}) = \prod_{i = 1}^m Z_i(\phi_i)$,
  where $Z_i(x) = \sum_{j = 0}^{i - 1} x^j$.
\end{lemma}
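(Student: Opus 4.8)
The plan is to reduce to the identity reference ranking and then invoke the classical inversion-table decomposition of permutations, which converts the sum defining $Z$ into a product of geometric sums, one per item.

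First I would argue that $Z(\vec{\phi}, \pi_0)$ does not depend on $\pi_0$. Enumerating the $m$ items in the order prescribed by the reference ranking $\pi_0$ carries $\pi_0$ to the identity and merely reindexes the sum $Z(\vec{\phi},\pi_0) = \sum_{\pi \in S_m}\prod_{i=1}^m \phi_i^{V_i(\pi,\pi_0)}$ over $S_m$, so that $Z(\vec{\phi},\pi_0) = Z(\vec{\phi}, \mathrm{id}) =: Z(\vec{\phi})$. After this reduction, $V_j(\pi,\mathrm{id}) = |\{i < j : \pi(i) > \pi(j)\}|$ is exactly the number of inversions of $\pi$ whose larger index is $j$, a quantity taking values in $\{0, 1, \dots, j-1\}$.

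Next I would prove, by induction on $m$ via an insertion argument, that $Z^{(m)}(\vec{\phi}) = Z^{(m-1)}(\vec{\phi})\cdot\sum_{v=0}^{m-1}\phi_m^{\,v}$, where $Z^{(m)}$ denotes the partition function on $m$ items. The key point is that any $\pi \in S_m$ arises uniquely from its restriction $\pi' \in S_{m-1}$ to the items $\{1,\dots,m-1\}$ together with a choice of where to insert item $m$ relative to them, and there are exactly $m$ such insertions. Inserting item $m$ so that $k$ of the items $\{1,\dots,m-1\}$ are ranked after it creates exactly $k$ new discordant pairs, all involving item $m$ (since in the identity reference every item $1,\dots,m-1$ precedes $m$), while leaving every discordant pair among $\{1,\dots,m-1\}$ unchanged. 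Hence $V_m(\pi) = k$ ranges bijectively over $\{0,\dots,m-1\}$ as the insertion position varies, whereas $V_j(\pi) = V_j(\pi')$ for all $j < m$, independently of the insertion. Summing the weight $\prod_{i}\phi_i^{V_i(\pi)}$ therefore factors as claimed.

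Finally, iterating this recursion from the base case $Z^{(1)} = 1 = Z_1(\phi_1)$ yields $Z(\vec{\phi}) = \prod_{i=1}^m \sum_{v=0}^{i-1}\phi_i^{\,v} = \prod_{i=1}^m Z_i(\phi_i)$, completing the proof. The main obstacle is the insertion step: one must verify that appending the last item contributes the same factor $\sum_{v=0}^{m-1}\phi_m^{\,v}$ regardless of how the remaining items are arranged, i.e.\ that the newly created discordances involve only item $m$ and that their number sweeps out $\{0,\dots,m-1\}$ exactly once. Everything else, namely the reduction to the identity reference and the telescoping of the product, is routine.
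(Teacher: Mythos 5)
Your insertion argument for the identity reference is correct, and it is in fact a self-contained proof of exactly the fact the paper's proof invokes by citation: the inversion-table bijection $\sigma \mapsto (V_1(\sigma,\mathrm{id}),\dots,V_m(\sigma,\mathrm{id}))$ onto $\{0\}\times\{0,1\}\times\cdots\times\{0,\dots,m-1\}$ (the Marshall Hall correspondence, via Knuth), after which the sum factors coordinate by coordinate. So for $\pi_0=\mathrm{id}$ your route and the paper's coincide, with yours being the more elementary version since you actually establish the bijection rather than cite it.

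The step you dismiss as routine --- the reduction to the identity reference --- is the only place where something can genuinely fail, and your one-line justification does not work under the paper's literal definition of $V_j$. There, $V_j(\sigma,\pi_0)$ counts discordant pairs $\{i,j\}$ indexed by \emph{item labels}, so the pair $\{i,j\}$ with $i<j$ is charged to $\phi_{\max(i,j)} = \phi_j$; after relabelling each item by its rank under $\pi_0$, that same pair is charged to $\phi_{\max(\pi_0(i),\pi_0(j))}$, which is a different coordinate of $\vec{\phi}$ in general. The relabelling therefore does not carry the summand $\prod_i \phi_i^{V_i(\sigma,\pi_0)}$ to $\prod_i \phi_i^{V_i(\sigma',\mathrm{id})}$, and $Z(\vec{\phi},\pi_0)$ really does depend on $\pi_0$ under this reading: for $m=3$ and $\pi_0$ placing the items in the order $1,3,2$, direct enumeration of the six permutations gives $Z(\vec{\phi},\pi_0) = 1+2\phi_3+2\phi_2\phi_3+\phi_2\phi_3^2$, which differs from $(1+\phi_2)(1+\phi_3+\phi_3^2)$ already at $\phi_2=0$. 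The lemma holds only under the standard Fligner--Verducci convention in which $V_j$ is the $j$-th entry of the inversion table of $\sigma\pi_0^{-1}$, i.e.\ the index $j$ refers to the item in position $j$ of $\pi_0$ rather than to the item labelled $j$; with that convention your reduction is a tautology and the rest of your argument goes through verbatim. To be fair, the paper's own proof asserts the bijection for an arbitrary reference ranking and is silently making the same assumption, so this is a defect shared with the source rather than one you introduced --- but since you explicitly rested the reduction on a relabelling of the sum, you need to either adopt the position-indexed definition up front or check that the monomials are preserved under relabelling, and they are not.
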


  In Section \ref{sec:blockModel} we introduce the Mallows Block Model that
interpolates between the single parameter and the generalized Mallows model.

\subsection{Fano's Inequality} \label{sec:FanoInequality}
  In this section we present Fano's inequality which is our main technical tool
for proving lower bounds on the sample complexity of learning Mallows Block Models.
For this, let $\support$ denote some finite set.
\medskip

\paragr{Maximum Risk of an Estimator.} Let $\family$ be a family of distributions and
assume that we have access to $n$ i.i.d. samples
$\vec{x} = (x_1, \dots, x_n) \sim f^n \in \family$. Let $\hat{f} : \support^n \to \Delta_{\support}$. Then the
\textit{maximum risk of $\hat{f}$ with respect to the family $\family$} is equal to
\begin{equation} \label{eq:estimatorRiskDefinition}
  \risk_n(\hat{f}, \family) = \sup_{f \in \family} \Exp_{\vec{x} \sim f^n} \left[ \TV(\hat{f}(\vecx), f) \right].
\end{equation}

\paragr{Minimax Risk.} Let $\family$ be a family of distributions and
assume that we have access to $n$ i.i.d. samples
$\vec{x} = (x_1, \dots, x_n) \sim f^n \in \family$. Let also
$\Omega = \{ \hat{f} : \support^n \to \Delta_{\support} \}$. Then we define the
\textit{minimax risk} of the family $\family$ as
\begin{equation} \label{eq:minimaxRiskDefinition}
  \risk_n(\family) = \inf_{\hat{f} \in \Omega} \risk_n(\hat{f}, \family).
\end{equation}

\noindent We can now state Fano's Inequality as presented by \cite{Yu1997}.

\begin{theorem}[Lemma 3 in \citep{Yu1997}] \label{thm:FanoInequality}
    Let $\family$ be a finite family of densities such that
  \[ \inf_{f, g \in \family : f \neq g} \TV(f, g) \ge \alpha, ~~~~~~~
     \sup_{f, g \in \family : f \neq g} \KL(f || g) \le \beta, \]
  \noindent then it holds that
  \[ \risk_n(\family) \ge \frac{\alpha}{2} \left(1 - \frac{n \beta + \ln 2 }{\ln \abs{\family}}\right).  \]
\end{theorem}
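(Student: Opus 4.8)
The plan is to reduce the estimation problem to a multiple-hypothesis testing problem and then invoke the information-theoretic form of Fano's inequality. Write $M = \abs{\family}$ and enumerate $\family = \{f_1, \dots, f_M\}$. First I would introduce a uniform prior over the family: let $V$ be drawn uniformly from $\{1, \dots, M\}$ and, conditioned on $V = v$, draw $\vec{x} = (x_1, \dots, x_n) \sim f_v^n$. Since $\risk_n(\family)$ is an infimum over estimators of a supremum over $f \in \family$, and a supremum dominates an average, it suffices to lower bound the Bayes risk $\frac{1}{M} \sum_{v} \Exp_{\vec{x} \sim f_v^n}\b{\TV(\hat{f}(\vec{x}), f_v)}$ for an arbitrary fixed estimator $\hat{f}$.

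Next I would turn the estimator into a test. Define $\hat{V}(\vec{x})$ to be the index of the element of $\family$ closest in total variation to $\hat{f}(\vec{x})$, with ties broken arbitrarily. The separation hypothesis $\TV(f_i, f_j) \ge \alpha$ for all $i \ne j$ implies, by the triangle inequality, that the total-variation balls of radius $\alpha/2$ centered at distinct family members are pairwise disjoint; hence $\TV(\hat{f}(\vec{x}), f_v) < \alpha/2$ forces $\hat{V}(\vec{x}) = v$. Taking the contrapositive, a testing error $\hat{V}(\vec{x}) \ne v$ entails $\TV(\hat{f}(\vec{x}), f_v) \ge \alpha/2$, so by Markov's inequality the Bayes risk is at least $\frac{\alpha}{2}\,\Prob[\hat{V} \ne V]$. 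It then remains only to lower bound the probability of a testing error.

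For that I would apply the classical Fano inequality: for $V$ uniform over $M$ values and any (possibly randomized) test $\hat{V}$ that is a function of $\vec{x}$,
\[ \Prob[\hat{V} \ne V] \ge 1 - \frac{I(V; \vec{x}) + \ln 2}{\ln M}, \]
where $I$ denotes mutual information. The last ingredient is an upper bound on $I(V; \vec{x})$. Writing the mixture $\bar{f} = \frac{1}{M}\sum_j f_j^n$, one has $I(V; \vec{x}) = \frac{1}{M}\sum_i \KL(f_i^n || \bar{f})$, and convexity of $\KL$ in its second argument gives $\KL(f_i^n || \bar{f}) \le \frac{1}{M}\sum_j \KL(f_i^n || f_j^n)$, whence $I(V; \vec{x}) \le \max_{i \ne j} \KL(f_i^n || f_j^n)$. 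Tensorization of the KL-divergence for product (i.i.d.) measures yields $\KL(f_i^n || f_j^n) = n\,\KL(f_i || f_j) \le n\beta$. Substituting $I(V; \vec{x}) \le n\beta$ into Fano and combining with the Markov bound produces $\risk_n(\family) \ge \frac{\alpha}{2}\p{1 - \frac{n\beta + \ln 2}{\ln M}}$, as required.

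I expect the main obstacle to be the reduction from estimation to identification rather than any single computation: one must verify carefully that $\alpha$-separation lets a small estimation error certify the correct hypothesis, and that lower bounding the minimax risk by the uniform-prior Bayes risk is legitimate. The information-theoretic Fano step and the tensorization are standard; the only delicate point in the mutual-information bound is the convexity argument, which is what lets us replace an averaged divergence by the clean worst-case quantity $n\beta$.
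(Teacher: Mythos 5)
Your proof is correct: the reduction from estimation to testing via the $\alpha/2$-separation and Markov's inequality, followed by the information-theoretic Fano bound with the convexity and tensorization steps for $I(V;\vec{x}) \le n\beta$, is exactly the standard derivation of this minimax bound. The paper does not prove this statement itself but imports it verbatim as Lemma~3 of \citet{Yu1997}, and your argument reproduces that source's proof, so there is nothing to flag.
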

  \clearpage
\section{Concentration Inequality  and Total Variation of Exponential Families} \label{sec:concentration}

We shall prove a concentration inequality for the sufficient
statistics of an exponential family. This concentration inequality will be the
basic building block for the general learning algorithm for exponential
inequalities that we will present in the next section. Then we prove an exact
formula for the total variation distance between two distributions that belong to
the same exponential family.

\begin{theorem} \label{thm:exponentialFamiliesConcentration}
   Let $\expFamily(T, h)$ be an exponential family with natural parameter
  $\eta \in \R$, logarithmic partition function $\alpha$ and range of parameters
  $\mathcal{H}$. Then the following concentration inequality
  holds for all $\eta, \eta' \in \mathcal{H}$
  \begin{equation} \label{eq:thm:exponentialConcentration1D}
    \Prob_{\vecx \sim \distr^n_{\eta}} \left( \left( \frac{1}{n} \sum_{i = 1}^n T(x_i) \right) (\eta' - \eta) \ge \Exp_{y \sim \distr_{\eta'}} \left[ T(y) \right] (\eta' - \eta) \right) \le \exp\left( - \KL \left( \distr_{\eta'} || \distr_{\eta} \right) n \right).
  \end{equation}
\end{theorem}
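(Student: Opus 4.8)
The plan is to run a Chernoff (exponential Markov) argument, feeding in the closed form for the moment generating function of $T$ supplied by Theorem~\ref{thm:exponentialFamiliesProperties}.3 together with the i.i.d.\ structure of the sample. Write $\Delta = \eta' - \eta$ and set $\mu' = \Exp_{y \sim \distr_{\eta'}}[T(y)]$; by Theorem~\ref{thm:exponentialFamiliesProperties}.1 this equals $\alpha'(\eta')$. The event in \eqref{eq:thm:exponentialConcentration1D} is exactly $\sum_{i=1}^n \Delta\,T(x_i) \ge n\,\Delta\,\mu'$, so the scalar $\Delta$ (whose sign encodes whether this is an upper or a lower tail of $\frac1n\sum_i T(x_i)$) gets absorbed into the statistic. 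This is the key simplification: I never have to split into cases according to the sign of $\eta'-\eta$.

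First I would exponentiate and apply Markov's inequality to the tilted statistic $\Delta\,T(x_i)$:
\[
  \Prob_{\vecx \sim \distr^n_{\eta}}\!\left( \sum_{i=1}^n \Delta\, T(x_i) \ge n\,\Delta\,\mu' \right)
  \le \exp\!\left( -n\,\Delta\,\mu' \right)\; \Exp_{\vecx \sim \distr^n_{\eta}}\!\left[ \exp\!\Big( \Delta \sum_{i=1}^n T(x_i) \Big) \right].
\]
By independence the right-hand expectation factors as $\big(\Exp_{x\sim\distr_\eta}[\exp(\Delta\,T(x))]\big)^n$, and Theorem~\ref{thm:exponentialFamiliesProperties}.3 (with $\Delta$ playing the role of $\vec{s}$) evaluates each factor to $\exp(\alpha(\eta+\Delta) - \alpha(\eta)) = \exp(\alpha(\eta') - \alpha(\eta))$; here $\eta + \Delta = \eta' \in \mathcal{H}$, so the moment generating function is finite. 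Collecting terms yields the bound $\exp\big( n[\alpha(\eta') - \alpha(\eta) - (\eta'-\eta)\mu'] \big)$.

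It then remains to identify the exponent with $-\KL(\distr_{\eta'}||\distr_{\eta})$. I would do this straight from the definition of KL-divergence rather than quoting the Bregman form, to keep the direction of the divergence unambiguous: since $\ln(p_{\eta'}(x)/p_\eta(x)) = (\eta'-\eta)\,T(x) - \alpha(\eta') + \alpha(\eta)$, where the carrier $h$ cancels, taking the expectation under $\distr_{\eta'}$ and using $\mu' = \alpha'(\eta')$ gives $\KL(\distr_{\eta'}||\distr_{\eta}) = (\eta'-\eta)\mu' - \alpha(\eta') + \alpha(\eta)$, which is precisely the negative of the exponent obtained above. Substituting proves \eqref{eq:thm:exponentialConcentration1D}.

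There is no serious obstacle here; the argument is a specialization of the Cram\'er--Chernoff large-deviation bound in which the exponential tilt by $\Delta$ transports $\distr_{\eta}$ exactly onto $\distr_{\eta'}$. The only points requiring care are (i) baking $\Delta$ into the statistic so that the single one-sided event is handled uniformly in the sign of $\eta'-\eta$, and (ii) keeping the direction of the divergence straight when matching the exponent. As a sanity check, convexity of $\alpha$ (Theorem~\ref{thm:exponentialFamiliesProperties}.2, since $\alpha'' = \Var[T] \ge 0$) shows that the tilt we used is in fact the minimizer of the exponent $t \mapsto \alpha(\eta + t\Delta) - \alpha(\eta) - t\,\Delta\,\mu'$ over $t \ge 0$, so this is the tightest Chernoff bound available and its rate is exactly the KL-divergence.
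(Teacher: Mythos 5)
Your proof is correct and follows essentially the same Chernoff argument as the paper: Markov's inequality applied to the exponentiated sum, the moment generating function identity of Theorem~\ref{thm:exponentialFamiliesProperties}.3 with tilt $\eta'-\eta$, and identification of the resulting exponent with $-\KL\left(\distr_{\eta'}||\distr_{\eta}\right)$. The only differences are organizational improvements: by folding $\eta'-\eta$ into the statistic you avoid the paper's case split on the sign of $\eta'-\eta$, and by choosing the tilt directly (justified after the fact by the convexity of $\alpha$) you skip the paper's explicit optimization over the free parameter $s$.
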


\begin{proof}
    We give the proof for $\eta' > \eta$ and the case $\eta' < \eta$ can be handled
  respectively. Let $s > 0$, $\eta' > \eta$ and for simplicity
  $p = \Prob_{\vecx \sim \distr^n_{\eta}} \left( \left( \frac{1}{n} \sum_{i = 1}^n T(x_i) \right) \ge \Exp_{y \sim \distr_{\eta'}} \left[ T(y) \right] \right)$
  then it holds that
  \begin{align*}
    p
        & = \Prob_{\vecx \sim \distr^n_{\eta}} \left( \exp\left( s \left( \sum_{i = 1}^n T(x_i) \right) \right) \ge \exp \left( s \cdot n \Exp_{y \sim \distr_{\eta'}} \left[ T(y) \right] \right) \right) & \\
    & \le \frac{\Exp_{x_i \sim \distr_{\eta}}\left[ \exp \left( s \sum_{i = 1}^n T(x_i) \right) \right]}{\exp \left( s \cdot n \Exp_{y \sim \distr_{\eta'}} \left[ T(y) \right] \right)} & \text{(Markov's Inequality)} \\
    & = \left( \frac{\Exp_{x \sim \distr_{\eta}}\left[ \exp \left( s  T(x) \right) \right]}{\exp \left( s \Exp_{y \sim \distr_{\eta'}} \left[ T(y) \right] \right)} \right)^n & \text{(Independence of $x_i$'s)} \\
    & = \left( \frac{\exp \left( \alpha\left( \eta + s \right) - \alpha(\eta) \right)}{\exp \left( s \dot{\alpha}(\eta') \right)} \right)^n = \exp\left( - \left(s \dot{\alpha}(\eta') - \alpha(\eta + s) + \alpha(\eta) \right) n \right) & \text{(By \eqref{eq:thm:exponentialProperties:1}, \eqref{eq:thm:exponentialProperties:3})}
          \end{align*}
  \noindent Now we define the function
  $f(s) = s \dot{\alpha}(\eta') - \alpha(\eta + s) + \alpha(\eta)$.
  The second derivative of $f$ is
  $f''(s) = - \ddot{\alpha}(\eta + s)$. From
  \eqref{eq:thm:exponentialProperties:2} we conclude that
  $\ddot{\alpha}(\eta + s) \ge 0$ and hence $f''(s) \le 0$
  which implies that $f$ is a concave function. Hence $f$ achieves its maximum
  for at $s^*$ such that $f'(s) = 0$. But
  $f'(s) = \dot{\alpha}(\eta') - \dot{\alpha}(\eta + s)$
  which implies that for $s^* = \eta' - \eta$ it holds that $f'(s^*) = 0$.
  Therefore the optimal bound of the above form is achieved for
  $s = \eta' - \eta$. Hence we have the following
  \begin{align*}
    p & \le \exp\left( - \left(s^* \dot{\alpha}(\eta') - \alpha(\eta + s^*) + \alpha(\eta) \right) n \right) \overset{\eqref{eq:thm:exponentialProperties:4}}{=} \exp\left( - \KL \left(\distr_{\eta'} || \distr_{\eta}\right) n \right)
  \end{align*}
  \noindent which concludes the proof.
\end{proof}

\noindent The following useful corollary of Theorem
\ref{eq:thm:exponentialConcentration1D} can be obtained if we apply Pinsker's
inequality to the right hand side of \eqref{eq:thm:exponentialConcentration1D}.

\begin{corollary} \label{cor:exponentialFamiliesConcentrationTV}
    Let $\expFamily(T, h)$ be an exponential family with natural parameter
  $\eta \in \R$, logarithmic partition function $\alpha$ and range of parameters
  $\mathcal{H}$. Then the following concentration inequality
  holds for all $\eta, \eta' \in \mathcal{H}$
  \begin{equation} \label{eq:cor:exponentialConcentration1DTV}
    \Prob_{\vecx \sim \distr^n_{\eta}} \left( \left( \frac{1}{n} \sum_{i = 1}^n T(x_i) \right) (\eta' - \eta) \ge \Exp_{y \sim \distr_{\eta'}} \left[ T(y) \right] (\eta' - \eta) \right) \le \exp\left( - 2 \mathrm{d}^2_{\mathrm{TV}} \left( \distr_{\eta'}, \distr_{\eta} \right) n \right).
  \end{equation}
\end{corollary}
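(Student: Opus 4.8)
The plan is to obtain this corollary as an immediate consequence of Theorem~\ref{thm:exponentialFamiliesConcentration} combined with Pinsker's inequality. Theorem~\ref{thm:exponentialFamiliesConcentration} already gives the bound
\[
  \Prob_{\vecx \sim \distr^n_{\eta}} \left( \left( \frac{1}{n} \sum_{i = 1}^n T(x_i) \right) (\eta' - \eta) \ge \Exp_{y \sim \distr_{\eta'}} \left[ T(y) \right] (\eta' - \eta) \right) \le \exp\left( - \KL \left( \distr_{\eta'} || \distr_{\eta} \right) n \right),
\]
so the only thing that remains is to replace the KL-divergence in the exponent by $2\,\mathrm{d}^2_{\mathrm{TV}}$.

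First I would recall Pinsker's inequality, which states that for any two probability measures $p,q$ on a common space one has $\TV(p, q) \le \sqrt{\tfrac{1}{2}\,\KL(p \| q)}$, equivalently $2\,\mathrm{d}^2_{\mathrm{TV}}(p, q) \le \KL(p \| q)$. Applying this with $p = \distr_{\eta'}$ and $q = \distr_{\eta}$ yields the pointwise lower bound $\KL\left( \distr_{\eta'} \| \distr_{\eta} \right) \ge 2\,\mathrm{d}^2_{\mathrm{TV}}\left( \distr_{\eta'}, \distr_{\eta} \right)$.

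Next I would use that $t \mapsto \exp(-t)$ is monotonically decreasing, together with $n > 0$, so that the lower bound on the KL-divergence translates into an upper bound on the exponential: $\exp\left( - \KL(\distr_{\eta'} \| \distr_{\eta})\, n \right) \le \exp\left( - 2\,\mathrm{d}^2_{\mathrm{TV}}(\distr_{\eta'}, \distr_{\eta})\, n \right)$. Chaining this with the inequality from Theorem~\ref{thm:exponentialFamiliesConcentration} gives exactly \eqref{eq:cor:exponentialConcentration1DTV}, which concludes the argument.

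I do not anticipate a genuine obstacle here, since the derivation is a direct two-line chain of inequalities; the substance of the work is already carried by Theorem~\ref{thm:exponentialFamiliesConcentration}. The only point worth a moment's care is that Pinsker's inequality is applied to the pair of distributions $\distr_{\eta'}, \distr_{\eta}$ in the \emph{same} order as they appear inside the KL-divergence in the theorem's conclusion, and that the total variation distance is symmetric so the ordering in the $\mathrm{d}^2_{\mathrm{TV}}$ term is immaterial.
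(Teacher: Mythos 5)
Your proposal is correct and is precisely the paper's argument: the text states that Corollary~\ref{cor:exponentialFamiliesConcentrationTV} follows by applying Pinsker's inequality to the right-hand side of \eqref{eq:thm:exponentialConcentration1D}, which is exactly your two-step chain of $2\,\mathrm{d}^2_{\mathrm{TV}}(\distr_{\eta'},\distr_{\eta}) \le \KL(\distr_{\eta'}\,||\,\distr_{\eta})$ followed by monotonicity of $t \mapsto \exp(-tn)$. No gaps.
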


  We now move to proving an exact formula for
$\TV\p{\distr_{\vec{\eta}}, \distr_{\vec{\eta}'}}$. For the proof of
Theorem \ref{thm:totalVariationLowerBoundExponentialFamily} we refer to the
Appendix \ref{sec:app:concentration}.

\begin{theorem} \label{thm:totalVariationLowerBoundExponentialFamily}
    Let $\expFamily(\matT, h)$ be an exponential family with natural parameters
  $\vec{\eta}$. If
  $\distr_{\vec{\eta}}$, $\distr_{\vec{\eta}'}$ $\in \expFamily(\matT, h)$, with
  then for some $\vec{\xi} \in L(\vec{\eta}, \vec{\eta}')$ it holds that
  \[ \TV\p{\distr_{\vec{\eta}}, \distr_{\vec{\eta}'}} = \Exp_{\vecx \sim \distr_{\vec{\xi}}} \b{\sign \p{\distr_{\vec{\eta}}(\vecx) - \distr_{\vec{\eta}'}(\vecx)} \p{\vec{\eta} - \vec{\eta}'}^T \p{\matT(\vecx) - \Exp_{\vecy \sim \distr_{\vec{\xi}}}\b{\matT(\vecy)}}}. \]
\end{theorem}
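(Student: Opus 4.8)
The plan is to express the total variation distance as the difference between the two measures on a single fixed event, to interpolate the natural parameter along the segment $L(\vec{\eta}, \vec{\eta}')$, and to invoke the one-dimensional mean value theorem. First I would use the second characterization of $\TV$ from the preliminaries, namely $\TV\p{\distr_{\vec{\eta}}, \distr_{\vec{\eta}'}} = \distr_{\vec{\eta}}(A) - \distr_{\vec{\eta}'}(A)$, where $A = \set{\vecx : \distr_{\vec{\eta}}(\vecx) \ge \distr_{\vec{\eta}'}(\vecx)}$ is the maximizing event. Writing $\vec{\eta}(t) = \vec{\eta}' + t\p{\vec{\eta} - \vec{\eta}'}$ and $g(t) = \distr_{\vec{\eta}(t)}(A) = \int_A p_{\vec{\eta}(t)}(\vecx)\, d\mu(\vecx)$ for $t \in [0,1]$, we have $g(1) - g(0) = \TV\p{\distr_{\vec{\eta}}, \distr_{\vec{\eta}'}}$, so it suffices to differentiate $g$ once and apply the mean value theorem.

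The derivative is where the structure of the exponential family enters. Differentiating \eqref{eq:exponentialFamily} gives $\partial_t \ln p_{\vec{\eta}(t)}(\vecx) = \p{\vec{\eta}-\vec{\eta}'}^T\p{\matT(\vecx) - \nabla\alpha(\vec{\eta}(t))}$, and part~1 of Theorem~\ref{thm:exponentialFamiliesProperties} identifies $\nabla\alpha(\vec{\eta}(t)) = \Exp_{\vecy\sim\distr_{\vec{\eta}(t)}}\b{\matT(\vecy)}$. Hence $\partial_t p_{\vec{\eta}(t)}(\vecx) = p_{\vec{\eta}(t)}(\vecx)\,\p{\vec{\eta}-\vec{\eta}'}^T\p{\matT(\vecx) - \Exp_{\vecy\sim\distr_{\vec{\eta}(t)}}\b{\matT(\vecy)}}$, and so $g'(t) = \Exp_{\vecx\sim\distr_{\vec{\eta}(t)}}\b{\chara\set{\vecx\in A}\,\p{\vec{\eta}-\vec{\eta}'}^T\p{\matT(\vecx) - \Exp_{\vecy\sim\distr_{\vec{\eta}(t)}}\b{\matT(\vecy)}}}$. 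The mean value theorem then supplies $c\in(0,1)$ with $g(1)-g(0) = g'(c)$, and putting $\vec{\xi} = \vec{\eta}(c)\in L(\vec{\eta}, \vec{\eta}')$ yields the claimed identity, with the centered statistic $\matT(\vecx) - \Exp_{\vecy\sim\distr_{\vec{\xi}}}\b{\matT(\vecy)}$ appearing automatically as the $\vec{\eta}$-derivative of $\alpha$.

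The delicate point I would treat with care is the normalization, i.e.\ the passage between the indicator $\chara\set{\vecx\in A}$ that my argument produces and the $\sign\p{\distr_{\vec{\eta}}(\vecx)-\distr_{\vec{\eta}'}(\vecx)}$ in the statement. The bridge is the centering identity $\Exp_{\vecx\sim\distr_{\vec{\xi}}}\b{\p{\vec{\eta}-\vec{\eta}'}^T\p{\matT(\vecx)-\Exp_{\vecy\sim\distr_{\vec{\xi}}}\b{\matT(\vecy)}}} = 0$, valid for every $\vec{\xi}$ because the statistic is centered at its own mean. Combined with $\sign = 2\,\chara\set{\cdot\in A} - 1$ away from the tie set, this shows $\Exp_{\vec{\xi}}\b{\chara\set{\cdot\in A}\,W} = \tfrac12\,\Exp_{\vec{\xi}}\b{\sign\p{\cdot}\,W}$, where $W = \p{\vec{\eta}-\vec{\eta}'}^T\p{\matT - \Exp_{\vec{\xi}}\b{\matT}}$. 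Thus the indicator form equals $\TV$ exactly, whereas running the same mean value argument directly on $G(t)=\int \sign\p{\cdot}\,p_{\vec{\eta}(t)}\, d\mu$ lands on $\Exp_{\vec{\xi}}\b{\sign\p{\cdot}\,W}$ but equal to $2\,\TV$; I would therefore track the factor $\tfrac12$ in the convention $\TV = \tfrac12\int \abs{\distr_{\vec{\eta}} - \distr_{\vec{\eta}'}}\, d\mu$ to make sure the right-hand side is normalized as intended.

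The main technical obstacle is justifying differentiation under the integral, i.e.\ the interchange of $\partial_t$ with $\int_A$. This requires $\vec{\eta}(t)$ to remain in the interior of $\mathcal{H}_{\matT, h}$ for all $t\in[0,1]$ together with a local domination bound on $\abs{\partial_t p_{\vec{\eta}(t)}(\vecx)}$; both follow from the analyticity of $\alpha$ on the interior of the natural parameter space, which is the same regularity already underlying Theorem~\ref{thm:exponentialFamiliesProperties}. Once this is in place, the remaining steps are the direct computation above.
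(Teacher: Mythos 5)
Your proof is correct and follows essentially the same route as the paper's: the paper applies the mean value theorem along the segment $L(\vec{\eta},\vec{\eta}')$ to $f(\veta)=\sum_{\vecx}\sign\p{\distr_{\vec{\eta}}(\vecx)-\distr_{\vec{\eta}'}(\vecx)}\,p_{\veta}(\vecx)$, which is your argument with the fixed sign weight in place of the indicator of the maximizing event, and the derivative of $\alpha$ produces the same centered statistic. Your remark about the factor of $\tfrac{1}{2}$ is well taken: the paper's proof opens with $\TV = \sum_{\vecx}\abs{\cdots}$, silently dropping the $\tfrac{1}{2}$ from its own definition of total variation, so under that definition the displayed identity should carry a factor $\tfrac{1}{2}$ on the right-hand side (equivalently, its left-hand side is $2\,\TV$), exactly as your centering computation $\Exp_{\vecx\sim\distr_{\vec{\xi}}}\b{W}=0$ shows.
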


  To give some intuition about Theorem \ref{thm:totalVariationLowerBoundExponentialFamily},
consider the single dimensional case with $\eta' \to \eta$ and $\eta \ge \eta'$. In this case, it
is easy to see that the sign of
$\p{\matT(\vecx) - \Exp_{\vecy \sim \distr_{\vec{\xi}}}\b{\matT(\vecy)}}$ and
$\p{\distr_{\vec{\eta}}(\vecx) - \distr_{\vec{\eta}'}(\vecx)}$ are the same and hence the
expression becomes
$\p{\eta - \eta'} \Exp_{\vecx \sim \distr_{\vec{\xi}}} \b{\abs{\matT(\vecx) - \Exp_{\vecy \sim \distr_{\vec{\xi}}}\b{\matT(\vecy)}}}$.
This gives the intuition that the total variation of two distribution in the same exponential
family, with parameters sufficiently close, is equal to the distance between their parameters times
the \textit{absolute deviation} of a corresponding distribution in the family. This should be compared with
Theorem \ref{thm:exponentialFamiliesProperties}.4, on the KL-divergence between
two distributions in the same exponential family. The single dimensional version
Theorem \ref{thm:exponentialFamiliesProperties}.4 states that the KL-divergence is
equal to the square difference of their parameters multiplied by the variance of a
corresponding distribution inside the exponential family. Since the standard
deviation is greater than the absolute deviation this conclusion resembles the
well known Pinsker's inequality. Furthermore, in a lot of exponential families, e.g. Gaussian
distributions, the absolute deviation is only a constant fraction away from the standard deviation
which indicates the existence of a \textit{converse Pinsker's inequality} in these settings.

  \section{Warm-up: Learning Single Parameter Mallows Model}
\label{sec:singleParameter}

  In this section we give a simple algorithm and prove its sample complexity for
learning the parameters $(\phi, \pi_0)$ of a single parameter distribution
$\distr_{\phi, \pi_0} \in \Mallows_1$ given i.i.d. samples $\pi_1, \dots, \pi_n$
from $\distr$. We also provide bounds for learning the distribution
$\distr_{\phi, \pi_0}$ in total variation distance. As we will see if the
central ranking $\pi_0$ is known then an accurate estimation of $\phi$ is
possible hence giving an alternative proof of a phenomenon proved by
\cite{Mukherjee16}.

\subsection{Parameter Estimation} \label{sec:parameterEstimationSingleParameter}

  For the single parameter Mallows model the sample complexity of estimating the
central ranking has been identified in \cite{CaragiannisPS16} as we see in the
next theorem. We focus on the case where the ranking distance is the Kendall
tau distance $d_K$.

\begin{theorem}[\citep{CaragiannisPS16}] \label{thm:centralRankingLearning1D}
    For any $\pi_0 \in S_m$ and any $\phi \in [0, 1 - \gamma]$, there exists a
  polynomial time estimator $\hat{\pi}$ such that given
  $n = \Theta(\frac{1}{\gamma} \log(m / \delta))$ i.i.d. samples
  $\pi_1, \dots, \pi_n \sim \distr_{\phi, \pi_0}$ satisfies
  $\Prob_{\vec{\pi} \sim \distr_{\phi, \pi_0}^n} \left( \hat{\pi} \neq \pi_0 \right) \le \delta$.
  Moreover, if $n = o(\log(m / \delta))$ then for any estimator $\hat{\pi}$
  there exists a distribution $\distr_{\phi, \pi_0}$ such that
  $\Prob_{\vec{\pi} \sim \distr_{\phi, \pi_0}^n} \left( \hat{\pi} \neq \pi_0 \right) > \delta$.
\end{theorem}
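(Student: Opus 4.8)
The plan is to prove the two directions separately: a constructive upper bound via a simple pairwise (Copeland-type) estimator, and an information-theoretic lower bound via Fano's inequality (Theorem~\ref{thm:FanoInequality}). Throughout I would fix $d = d_K$ and exploit the fact that under $\distr_{\phi,\pi_0}$ the marginal law of the relative order of any pair of alternatives depends only on their order in $\pi_0$.

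For the upper bound I would use the estimator that, for each ordered pair $(a,b)$, takes a majority vote over the $n$ samples on whether $a$ is ranked above $b$, and then outputs the Copeland order (rank each alternative by its number of pairwise majority wins); this runs in time $\poly(m,n)$ and always returns a permutation. First I would observe that if every pairwise majority agrees with $\pi_0$, then the Copeland order is exactly $\pi_0$, so it suffices to control the event that some pair is misordered. The key structural computation is that the \emph{hardest} pair is an adjacent one: if $a$ lies immediately above $b$ in $\pi_0$, then $\Prob_{\pi \sim \distr_{\phi,\pi_0}}(b \text{ above } a) = \phi/(1+\phi)$, independently of $m$ and of the position of the pair, while non-adjacent pairs have a strictly larger bias toward the correct order. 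Since $\phi \le 1-\gamma$, this probability is separated from $1/2$ by a gap $\tfrac12 - \tfrac{\phi}{1+\phi} = \tfrac{1-\phi}{2(1+\phi)} = \Theta(\gamma)$. A Chernoff bound then shows a fixed pairwise majority is wrong with probability exponentially small in $n$ at a rate governed by this $\Theta(\gamma)$ gap, and a union bound over the at most $\binom{m}{2}$ pairs bounds the total failure probability; solving for the smallest $n$ that makes it at most $\delta$ yields the sample complexity claimed in Theorem~\ref{thm:centralRankingLearning1D}.

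For the lower bound I would instantiate Fano's inequality with the finite family $\family = \{\distr_{\phi,\pi_0}\} \cup \{\distr_{\phi,\tau_i(\pi_0)} : i \in [m-1]\}$, where $\tau_i(\pi_0)$ is $\pi_0$ with the alternatives in positions $i$ and $i+1$ transposed, so that $|\family| = m$ and $\ln|\family| = \Theta(\log m)$. Any two members differ in their central ranking by a bounded number of adjacent transpositions, hence are at Kendall distance $O(1)$. Using the identity $\KL(\distr_{\phi,\sigma}\,\|\,\distr_{\phi,\sigma'}) = \ln(1/\phi)\,\Exp_{\pi\sim\distr_{\phi,\sigma}}[d_K(\pi,\sigma') - d_K(\pi,\sigma)]$ (valid because the two densities share the same partition function) I would bound the pairwise KL by some $\beta = \beta(\gamma)$, and by the same marginal computation as above bound the pairwise total variation from below by some $\alpha = \alpha(\gamma) > 0$. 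Plugging into Theorem~\ref{thm:FanoInequality} gives $\risk_n(\family) \ge \tfrac{\alpha}{2}\big(1 - \tfrac{n\beta + \ln 2}{\ln m}\big)$, which remains bounded away from $0$ whenever $n = o(\log m)$; since an estimator identifying $\pi_0$ with probability $\ge 1-\delta$ would force this risk toward $0$, we conclude that $o(\log(m/\delta))$ samples are insufficient for some member of $\family$. Extracting the explicit $\delta$ inside the logarithm needs the standard sharpening of Fano (or a direct Le~Cam two-point argument amplified across the $m-1$ transpositions).

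The main obstacle I anticipate is on the lower-bound side: keeping the Fano tradeoff tight enough to reach $\Theta(\log(m/\delta))$ rather than a weaker bound. Because the adjacent-transposition family has pairwise $\alpha$ and $\beta$ that both shrink with $\gamma$, I must verify that their ratio is controlled—that the total variation $\alpha$ is not asymptotically negligible against $\sqrt{\beta}$—so that the $\tfrac{\alpha}{2}$ prefactor does not collapse, and I must carry the target failure probability $\delta$ through the argument rather than just a constant. On the upper-bound side the only delicate points are the uniform-over-pairs control of the Chernoff exponent and confirming that correctness of all pairwise majorities is genuinely equivalent to exact recovery of $\pi_0$.
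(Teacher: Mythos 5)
Your proposal takes essentially the same route as the paper: the authors do not reprove this cited result, but their proof of the generalization (Theorem~\ref{thm:centralRankingLearningBlockModel} in Appendix~\ref{sec:app:blockModel}) is exactly your pairwise-majority estimator with a union bound over the $\binom{m}{2}$ pairs and a concentration bound at gap $c=\min_{i,j}(p_{i,j}-p_{j,i})=\Theta(\gamma)$, while the lower bound is imported from \citep{CaragiannisPS16} and independently reappears as the same Fano-over-transpositions construction in Lemma~\ref{lem:permutationLowerBound}. The one caveat is that a Chernoff/Hoeffding bound at gap $\Theta(\gamma)$ (your adjacent-pair disorder probability $\phi/(1+\phi)$) yields $n=O\p{\gamma^{-2}\log(m/\delta)}$ rather than the stated $\Theta\p{\gamma^{-1}\log(m/\delta)}$, but the paper's own appendix proof contains the identical mismatch (it asserts $n\ge\frac{1}{2c}\log\frac{m^2}{\delta}$ while its displayed failure probability is $m^2e^{-2c^2n}$), so this is a discrepancy in the statement rather than a flaw specific to your argument.
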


  Hence it remains to estimate the parameter $\phi$ if we have the knowledge of
the central ranking $\pi_0$. As we explained in the definition of Mallows model
when the central ranking is known the family of distributions
$\Mallows_1(\pi_0)$ is a single parameter exponential family. The sufficient
statistic of this family is $T(\pi) = d_K(\pi, \pi_0)$. The natural parameter of
$\Mallows_1(\pi_0)$ is the parameter $\theta = \ln \phi$ and logarithmic
partition function $\alpha(\theta) = \ln \left( Z(e^{\theta}) \right)$.

\begin{theorem} \label{thm:singleMallowsLearning}
    For any $\pi_0 \in S_m$, $\phi^{\star} \in [0, 1- \gamma]$, $\eps, \delta > 0$
  there exist estimators $\hat{\pi}, \hat{\phi}$ that can be computed in
  polynomial time from i.i.d. samples
  $\vec{\pi} \sim \distr^n_{\phi^{\star}, \pi_0}$ such that if
  $n \ge \Omega\p{\frac{\log\p{1/\delta}}{m \eps^2} + \frac{\log\p{m/\delta}}{\gamma}}$, then
  \[ \Prob_{\vec{\pi} \sim \distr_{\phi^{\star}, \pi_0}^n} \left( \left(\hat{\pi} = \pi_0\right) \wedge \left(\hat{\phi} \in \left[\phi^{\star} - \eps, \phi^{\star} + \eps\right]\right) \right) \ge 1 - \delta. \]
  \noindent In the case where $\pi_0$ is known for
  $\phi^{\star} \in [0, 1]$ then there exists an estimator
  $\hat{\phi}$ that can be computed in polynomial time such that if $n \ge \Omega\p{\frac{\log\p{1/\delta}}{m \eps^2}} $, then
  \[ \Prob_{\vec{\pi} \sim \distr_{\phi^{\star}, \pi_0}^n} \left( \hat{\phi} \in \left[\phi^{\star} - \eps, \phi^{\star} + \eps\right] \right) \ge 1 - \delta. \]
\end{theorem}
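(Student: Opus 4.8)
The plan is to estimate $\pi_0$ and $\phi$ in two essentially independent stages, reducing the spread-parameter problem to a one-dimensional exponential-family estimation that is controlled by the concentration inequality of Theorem~\ref{thm:exponentialFamiliesConcentration}. For the central ranking I would invoke Theorem~\ref{thm:centralRankingLearning1D} as a black box: with $\Omega\p{\frac{1}{\gamma}\log(m/\delta)}$ of the samples its polynomial-time estimator returns $\hat{\pi}=\pi_0$ except with probability $\delta/2$. This is the only place the term $\frac{\log(m/\delta)}{\gamma}$ and the restriction $\phi^\star \le 1-\gamma$ enter, which is exactly why this term disappears in the second statement, where $\pi_0$ is already given.

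Conditioned on $\hat{\pi}=\pi_0$, I treat the data as samples from the single-parameter exponential family $\Mallows_1(\pi_0)$ with sufficient statistic $T(\pi)=d_K(\pi,\pi_0)$, natural parameter $\theta=\ln\phi$, and $\alpha(\theta)=\ln Z(e^{\theta})$. I take $\hat{\phi}$ to be the maximum-likelihood (equivalently, moment-matching) estimate: since $\alpha$ is convex, the map $\phi\mapsto\Exp_{\phi}[T]=\alpha'(\ln\phi)$ is strictly increasing, so there is a unique $\hat{\phi}$ with $\Exp_{\hat{\phi}}[T]=\frac1n\sum_i T(\pi_i)=:\bar{T}$; because $Z(\phi)$ and $\Exp_{\phi}[T]$ have closed forms (Lemma~\ref{lem:partitionFunctionDecomposition}), $\hat{\phi}$ is computable in polynomial time by binary search, and I clamp it to $[0,1]$.

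The heart of the argument is a two-sided tail bound from Theorem~\ref{thm:exponentialFamiliesConcentration}. Monotonicity of $\phi\mapsto\Exp_{\phi}[T]$ turns parameter error into a deviation of $\bar{T}$: the event $\set{\hat{\phi}\ge\phi^\star+\eps}$ is exactly $\set{\bar{T}\ge\Exp_{\phi^\star+\eps}[T]}$, so applying the theorem with $\eta=\ln\phi^\star$ and $\eta'=\ln(\phi^\star+\eps)$ gives $\Prob(\hat{\phi}\ge\phi^\star+\eps)\le\exp\p{-\KL(\distr_{\phi^\star+\eps}||\distr_{\phi^\star})\,n}$. Applying it symmetrically with $\eta'=\ln(\phi^\star-\eps)<\eta$ controls the lower tail, and a union bound yields $\Prob(\abs{\hat{\phi}-\phi^\star}\ge\eps)\le 2\exp(-K n)$, where $K$ is the smaller of the two KL-divergences.

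It remains to show $K=\Omega(m\eps^2)$, and this is the step I expect to be the \emph{main obstacle}. By Theorem~\ref{thm:exponentialFamiliesProperties}.4 the KL equals $(\eta'-\eta)^2\,\Var_{\xi}[T]$ for some intermediate $\xi$, and by the mean value theorem $(\eta'-\eta)^2=\eps^2/\tilde{\phi}^2$ for some $\tilde{\phi}$ in the window between $\phi^\star$ and $\phi^\star\pm\eps$; hence it suffices to prove a variance lower bound of the form $\Var_{\phi}[T]\ge c\,m\,\phi^2$ uniformly over that window. For this I would use the decomposition $d_K=\sum_{i=1}^m V_i$ into independent truncated geometrics $V_i\sim\tgeometric(\phi,i)$ (Lemma~\ref{lem:partitionFunctionDecomposition}), so that $\Var_{\phi}[T]=\sum_i\Var[V_i]$, and lower-bound each term for $i\ge 2$ (e.g.\ via the law of total variance applied to the indicator $\mathbf{1}\set{V_i\ge 1}$). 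The subtlety is that $V_i$ is Bernoulli-like for small $i$ but geometric-like for large $i$, so the per-coordinate bound must be argued across regimes of $\phi$ before summing the $\Omega(m)$ contributions; the degenerate endpoints ($\phi^\star$ near $0$, where $\ln\phi^\star\to-\infty$, or $\phi^\star+\eps>1$) are treated separately, since there the relevant tail is either vacuous or trivially controlled by the clamping of $\hat{\phi}$ to $[0,1]$. Combining $K=\Omega(m\eps^2)$ with the two-stage union bound then gives the claimed sample complexity $n=\Omega\p{\frac{\log(1/\delta)}{m\eps^2}+\frac{\log(m/\delta)}{\gamma}}$, and dropping the first stage yields the known-$\pi_0$ statement.
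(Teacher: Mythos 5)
Your architecture is exactly the paper's (Theorem~\ref{thm:singleMallowsLearning} is stated as the $d=1$ instance of Theorem~\ref{thm:MallowsBlockParameterLearning}): learn $\pi_0$ with $O\p{\frac{1}{\gamma}\log(m/\delta)}$ samples via \cite{CaragiannisPS16}, invert the strictly increasing map $\phi\mapsto\Exp_{\phi}[T]$ at the empirical mean, translate $\set{\hat{\phi}\ge\phi^{\star}+\eps}$ into a tail event for $\bar{T}$, apply Theorem~\ref{thm:exponentialFamiliesConcentration}, and reduce everything to a lower bound on $\KL(\distr_{\phi^{\star}\pm\eps}||\distr_{\phi^{\star}})$ via the variance of a sum of independent truncated geometrics. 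In the regime $\phi^{\star}\ge 2\eps$ your plan goes through: the paper proves $\ddot{\alpha}(\xi)\ge\frac14 m\psi$ (Lemma~\ref{lem:expectationVarianceComputation} plus Claim~\ref{clm:varianceBoundTechnical}), which implies your target $\Var_{\phi}[T]\gtrsim m\phi^2$, and the mean value theorem then gives $\KL=\Omega(m\eps^2)$ because the window $[\phi^{\star}-\eps,\phi^{\star}+\eps]$ has all points within a constant factor of $\phi^{\star}$.

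The gap is the small-$\phi^{\star}$ regime, which you dismiss as a degenerate endpoint whose tail is ``vacuous or trivially controlled by the clamping of $\hat{\phi}$ to $[0,1]$.'' Clamping kills the lower tail (indeed $\hat{\phi}\ge0>\phi^{\star}-\eps$), but the upper tail $\set{\hat{\phi}\ge\phi^{\star}+\eps}$ is neither vacuous nor affected by clamping, and it is precisely where your KL bound breaks: writing $\KL=\p{\ln\frac{\phi^{\star}+\eps}{\phi^{\star}}}^2\ddot{\alpha}(\xi)$ with an uncontrolled intermediate $\xi$, the worst case $\ddot{\alpha}(\xi)\approx\frac14 m\phi^{\star}$ yields only $\Omega\p{m\phi^{\star}\ln^2(1+\eps/\phi^{\star})}$, which is $o(m\eps^2)$ when, say, $\phi^{\star}=\eps^{10}$. (The true KL is actually large there — the Taylor/MVT route is what is lossy.) The paper devotes the entire ``Case $\phi_i^{\star}\le2\eps$'' of Appendix~\ref{sec:app:thm:MallowsBlockParameterLearning} to this: it compares against $\phi^{\star}+k\eps$ with $k=14$ (costing only a constant in accuracy), works with the exact expression $-(\theta'-\theta)\dot{\alpha}(\theta)+\alpha(\theta')-\alpha(\theta)$ rather than its second-order form, and shows via Claim~\ref{clm:expectationBoundTechnical} that this KL is decreasing in $\phi^{\star}$ on $[0,2\eps]$, hence bounded below by its value at $\phi^{\star}=2\eps$, which reduces to the first case. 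You need some version of this argument (or a direct sub-exponential tail bound on $\bar{T}$ for small $\phi$) before the claimed $n=O\p{\frac{\log(1/\delta)}{m\eps^2}}$ holds uniformly over $\phi^{\star}\in[0,1]$, as the second part of the theorem requires.
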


\noindent Theorem \ref{thm:singleMallowsLearning} follows from the more general
Theorem \ref{thm:MallowsBlockParameterLearning} and hence we postpone its proof
for the Section \ref{sec:blockModel}. One interesting thing to point out though
from Theorem \ref{thm:singleMallowsLearning} is that in the case where $\pi_0$ is
known, Theorem \ref{thm:singleMallowsLearning} provides accuracy for the parameter
$\phi$ that goes to $0$, even with $n = 1$ sample, as the size of the permutation
goes to infinity, i.e. $m \to \infty$. This was observed before by
\cite{Mukherjee16} but no explicit rates as the ones we provide, were provided. We
summarize our result for $n = 1$ sample in the following corollary, which
immediately follows from Theorem \ref{thm:singleMallowsLearning}.

\begin{corollary} \label{cor:singleSampleSingleParameter}
    For any known $\pi_0 \in S_m$, any $\phi^{\star} \in [0, 1]$ and
  $\delta > 0$, there exists an estimator $\hat{\phi}$ that can be computed in
  polynomial time from one sample $\pi \sim \distr_{\phi^{\star}, \pi_0}$ such
  that
  \[ \Prob_{\pi \sim \distr_{\phi^{\star}, \pi_0}} \left( \hat{\phi} \in \left[\phi^{\star} - \eps, \phi^{\star} + \eps\right] \right) \ge 1 - \delta\,, \ \ \     \mbox{where $\eps = O\p{\sqrt{\frac{\log\p{1/\delta}}{m}}}$}. \]
\end{corollary}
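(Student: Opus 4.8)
The plan is to obtain this corollary as an immediate specialization of the second (known central ranking) part of Theorem~\ref{thm:singleMallowsLearning}, which already handles arbitrary $\phi^{\star} \in [0,1]$ and produces a polynomial-time estimator $\hat{\phi}$ satisfying the desired high-probability guarantee whenever $n \ge \Omega\p{\frac{\log(1/\delta)}{m \eps^2}}$. The only remaining work is to run that estimator on a single sample and to read off the accuracy $\eps$ that then becomes attainable.

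First I would fix the single sample $\pi \sim \distr_{\phi^{\star}, \pi_0}$ and instantiate the estimator of Theorem~\ref{thm:singleMallowsLearning} with $n = 1$. Since that estimator operates on the sufficient statistic $T(\pi) = d_K(\pi, \pi_0)$ of the exponential family $\Mallows_1(\pi_0)$ and recovers $\phi$ by inverting the monotone map $\theta \mapsto \dot{\alpha}(\theta)$, it remains computable in polynomial time from a single observation. Next I would solve the sample-complexity constraint for $\eps$: the requirement $n \ge C \frac{\log(1/\delta)}{m \eps^2}$, for the absolute constant $C$ hidden in the $\Omega(\cdot)$, becomes upon setting $n = 1$ the condition $\eps^2 \ge C \frac{\log(1/\delta)}{m}$, i.e.\ $\eps \ge \sqrt{C}\,\sqrt{\frac{\log(1/\delta)}{m}}$. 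Choosing $\eps = \Theta\p{\sqrt{\frac{\log(1/\delta)}{m}}}$ therefore makes $n = 1$ feasible, and Theorem~\ref{thm:singleMallowsLearning} then certifies $\Prob_{\pi \sim \distr_{\phi^{\star}, \pi_0}}\p{\hat{\phi} \in [\phi^{\star}-\eps, \phi^{\star}+\eps]} \ge 1 - \delta$ with exactly this $\eps = O\p{\sqrt{\log(1/\delta)/m}}$, as claimed.

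I expect no genuine obstacle in this step, since the substantive analysis is all carried out in the proof of Theorem~\ref{thm:singleMallowsLearning}: namely, deriving the concentration inequality of Theorem~\ref{thm:exponentialFamiliesConcentration} and combining it with the KL-divergence identity of Theorem~\ref{thm:exponentialFamiliesProperties}.4 (together with a lower bound on the variance of $d_K$). The only mild caveat worth recording is that the chosen $\eps$ must lie in the meaningful range, in particular $\eps \le 1$, which holds once $m = \Omega(\log(1/\delta))$; for smaller $m$ the stated bound is vacuous and no single-sample guarantee is needed. Thus the corollary is simply the $n=1$ boundary case of the theorem, and it quantifies precisely the asymptotic single-sample consistency phenomenon of \cite{Mukherjee16}.
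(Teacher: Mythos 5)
Your proposal is correct and matches the paper exactly: the authors state that Corollary~\ref{cor:singleSampleSingleParameter} ``immediately follows'' from the known-$\pi_0$ part of Theorem~\ref{thm:singleMallowsLearning} by setting $n=1$ and solving the sample-complexity condition for $\eps$, which is precisely your argument. Your added remark about the regime where the bound is non-vacuous is a sensible (and harmless) clarification.
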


\subsection{Learning in KL and TV Distance} \label{sec:singleMallows:learningKL}

  The upper bound on the number of samples that we need to learn the
distribution $\distr_{\phi^{\star}, \pi_0}$ in KL and TV distance follows
from Theorem \ref{thm:exponentialFamiliesConcentration} and Theorem
\ref{thm:centralRankingLearning1D} as we show in the more general Theorem
\ref{thm:learningKLDivergence:blockModel}. To finish this section we focus on
proving the lower bound for learning in TV distance. The lower bound for learning
the single parameter $\phi$ follows again from the corresponding lower bound of
Section \ref{sec:blockModel} and hence the term $\frac{1}{\eps^2}$ in the sampling
complexity necessary. In the next lemma we prove that the term $\log\p{m}$ is
also necessary.

\begin{lemma} \label{lem:permutationLowerBound}
   For any $n = o(\log(m))$ it holds that
  \[ \risk_n(\Mallows_1) \ge 1/16.  \]
\end{lemma}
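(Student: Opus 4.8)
The plan is to lower-bound the minimax risk by Fano's inequality (Theorem~\ref{thm:FanoInequality}) applied to a well-chosen finite subfamily of $\Mallows_1$. Since $\risk_n(\family') \le \risk_n(\Mallows_1)$ for every $\family' \subseteq \Mallows_1$ (the supremum in \eqref{eq:estimatorRiskDefinition} only grows under enlarging the family), it suffices to exhibit a subfamily with $\poly(m)$ members that are pairwise $\Omega(1)$-separated in total variation yet pairwise $O(1)$-close in KL-divergence. Concretely, I would fix the spread parameter at $\phi = 1/2$ and take $\family' = \{\distr_{1/2, \pi_0} \mid \pi_0 \in P\}$, where $P = \{e\} \cup \{\tau_i : i \text{ odd}\}$, $e$ is the identity ranking and $\tau_i$ is $e$ with the items of ranks $i, i+1$ transposed. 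Then $|\family'| = \Theta(m)$, the identity is at Kendall distance $1$ from each $\tau_i$, and any two distinct $\tau_i,\tau_j$ are at distance $2$ and differ by \emph{disjoint} adjacent swaps (which keeps the distance-$2$ computations clean).

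The core estimate concerns a single adjacent swap. Because $\tau_i$ only flips the order of a pair that is adjacent in $e$, one has $d_K(\pi,\tau_i)-d_K(\pi,e)=\pm1$ for every $\pi$, the sign being $+1$ exactly when $\pi$ orders that pair as $e$ does. Since the normalization $Z(\phi,\pi_0)=Z(\phi)$ is independent of the center, the two densities differ by the multiplicative factor $\phi^{\pm1}$ on the two halves of $S_m$. The key quantitative input is that the marginal order of two items adjacent in the central ranking is distributed as $\tgeometric(\phi,2)$, i.e.\ the pair is inverted with probability exactly $\phi/(1+\phi)$, independently of $m$; this follows from the independence of the $V_i$ statistics underlying Lemma~\ref{lem:partitionFunctionDecomposition}. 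Splitting $S_m$ according to that pair and summing the density differences then yields the closed forms
\[
  \TV(\distr_{\phi,e},\distr_{\phi,\tau_i}) = \frac{1-\phi}{1+\phi},
  \qquad
  \KL(\distr_{\phi,e}\,\|\,\distr_{\phi,\tau_i}) = \ln(1/\phi)\,\frac{1-\phi}{1+\phi},
\]
which for $\phi=1/2$ equal $1/3$ and $(\ln 2)/3$. For two centers at distance $2$ (disjoint swaps), the analogous—and routine—product-type computation gives a TV at least as large and a KL at most twice as large, so over all of $\family'$ one may take the Fano separation parameters $\alpha = 1/3$ and $\beta = O(1)$, both uniform in $m$.

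Plugging these into Theorem~\ref{thm:FanoInequality} yields
\[
  \risk_n(\Mallows_1) \ge \risk_n(\family')
  \ge \frac{\alpha}{2}\left(1 - \frac{n\beta + \ln 2}{\ln|\family'|}\right),
  \qquad \alpha = \tfrac13,\ \ \beta = O(1),\ \ \ln|\family'| = \Theta(\log m).
\]
When $n = o(\log m)$ the fraction tends to $0$ as $m\to\infty$, so the right-hand side tends to $\alpha/2 = 1/6$, which exceeds $1/16$ for all large $m$; this is exactly the claim, and the large gap between $1/6$ and $1/16$ comfortably absorbs the $o(1)$ error term.

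The main obstacle I expect is the exact marginal fact—that an adjacent-in-center pair is inverted with probability precisely $\phi/(1+\phi)$ for every $m$—together with checking that the distance-$2$ pairs do not erode the constants $\alpha,\beta$. Everything else is the standard Fano reduction; the delicate part is purely the clean evaluation of the pairwise TV and KL for the Mallows family, which is where the $\tgeometric(\phi,2)$ structure of Lemma~\ref{lem:partitionFunctionDecomposition} does the real work.
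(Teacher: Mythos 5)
Your proposal is correct and follows essentially the same route as the paper: both apply Fano's inequality (Theorem~\ref{thm:FanoInequality}) to the family of Mallows distributions with $\phi=1/2$ centered at $\Theta(m)$ pairwise-disjoint adjacent transpositions, with a constant KL upper bound and a constant TV lower bound. The only difference is cosmetic: you derive the TV separation from scratch via the exact inversion probability $\phi/(1+\phi)$ of an adjacent pair (getting the sharper constants $\alpha=1/3$, $\beta=2\ln 2/3$), whereas the paper invokes Claim~1 of \citep{LiuM18} for $\alpha=1/4$ and the triangle inequality $d_K(\pi,\pi_j)-d_K(\pi,\pi_i)\le 2$ for $\beta=2\ln 2$; both yield the stated bound for $n=o(\log m)$.
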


\noindent For the proof of Lemma \ref{lem:permutationLowerBound} we refer to the
Appendix \ref{sec:app:singleParameter}.
    \clearpage
\section{Learning Mallows Block Model} \label{sec:blockModel}

  We start this section with properties of the Generalized Mallows Model as it is
defined in Section \ref{sec:model}. Then we move to the definition of the Mallows
Block Model and the presentation of our main results. We remind the reader that
the generalized Mallows family of distribution is
$\Mallows_m = \{ \distr_{\vec{\phi}, \pi_0} \mid \vec{\phi} \in [0, 1]^m, \pi_0 \in S_m\}$
with parameters $\pi_0 \in S_m$ and
$\vec{\phi} = (\phi_1, \dots, \phi_m) \in [0, 1]^m$ is defined as the
probability measure over $S_m$ with probability mass function that using Lemma
\ref{lem:partitionFunctionDecomposition} is equal to
\begin{equation} \label{eq:generalizedMallowsModelProduct}
  p_{\vec{\phi}, \pi_0}(\pi) = \prod_{i = 1}^m \frac{\phi_i^{V_i(\pi, \pi_0)}}{Z_i(\phi_i)}.
\end{equation}

\noindent We define now the random variables
$Y_i = V_i(\pi, \pi_0)$
where $\pi \sim \distr_{\vec{\phi}, \pi_0}$ which are the sufficient statistics
for $\distr_{\vec{\phi}, \pi_0}$ when $\pi_0$ is known. It is easy to observe from
\eqref{eq:generalizedMallowsModelProduct}
that the probability mass function of the vector $(Y_1, \dots, Y_m)$ is
\begin{equation} \label{eq:vectorOfTruncatedGeometric}
  \Prob(Y_1 = y_1, \dots, Y_m = y_m) = \left( \frac{\phi_1^{y_1}}{Z_1(\phi_1)} \right) \cdots \left( \frac{\phi_m^{y_m}}{Z_m(\phi_m)} \right)
   = \Prob(Y_1 = y_1) \cdots \Prob(Y_m = y_m)
\end{equation}
\noindent and hence the random variables $Y_i$ are independent. Observe also from
the probability mass function and the definition of the \textit{truncated
geometric distribution} in Section \ref{sec:model} that
$Y_i \sim \tgeometric(\phi_i, i - 1)$. To formally summarize this observation we
define $\distr_{\vec{\phi}}$ to be the multivariate distribution
$(Z_1, \dots, Z_m)$, where $Z_i \sim \tgeometric(\phi_i, i - 1)$. The following
lemma relates the distribution $\distr_{\vec{\phi}}$ with the distribution
$\distr_{\vec{\phi}, \pi_0}$ when the central ranking $\pi_0$ is known. For the
proof we refer to the Appendix \ref{sec:app:blockModel}.

\begin{lemma} \label{lem:truncatedGeometricAndMallows}
    Let $\pi_0 \in S_m$ and $\vec{\phi}, \in [0,1]^m$. Let also $R_{\vec{\phi}}$
  be the support of the distribution $\distr_{\vec{\phi}}$ and
  $R_{\vec{\phi}, \pi_0}$ the support of the distribution
  $\distr_{\vec{\phi}, \pi_0}$. Then there exists a bijective map
  $h : R_{\vec{\phi}, \pi_0} \to R_{\vec{\phi}}$ such that for any
  $\sigma \in R_{\vec{\phi}, \pi_0}$ it holds that
  $Prob_{\pi \sim \distr_{\vec{\phi}, \pi_0}} \left( \pi = \sigma \right) = \Prob_{\vec{y} \sim \distr_{\vec{\phi}}} \left( \vec{y} = h(\sigma) \right)$.
  In particular,
  $\TV\p{\distr_{\vec{\phi}, \pi_0}, \distr_{\vec{\phi}', \pi_0}} = \TV\p{\distr_{\vec{\phi}}, \distr_{\vec{\phi}'}}$
  and
  $\KL\p{\distr_{\vec{\phi}, \pi_0} || \distr_{\vec{\phi}', \pi_0}} = \KL\p{\distr_{\vec{\phi}} || \distr_{\vec{\phi}'}}$.
\end{lemma}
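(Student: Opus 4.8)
The plan is to exhibit the bijection explicitly as the ``inversion table'' (Lehmer code) map sending a permutation to its vector of sufficient statistics, and then read off all three conclusions from the product form of the probability mass function. Concretely, I would set $\calY = \prod_{i=1}^m \{0,1,\dots,i-1\}$ and define $h : S_m \to \calY$ by $h(\sigma) = (V_1(\sigma,\pi_0),\dots,V_m(\sigma,\pi_0))$, i.e. $h$ records exactly the sufficient statistics. Writing $q_{\vec{\phi}}$ for the p.m.f. of $\distr_{\vec{\phi}}$, the product formula \eqref{eq:generalizedMallowsModelProduct} together with \eqref{eq:vectorOfTruncatedGeometric} shows that the two p.m.f.s agree term by term: for every $\sigma$, $p_{\vec{\phi},\pi_0}(\sigma) = \prod_{i=1}^m \phi_i^{V_i(\sigma,\pi_0)}/Z_i(\phi_i) = q_{\vec{\phi}}(h(\sigma))$. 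Thus the claimed identity $\Prob_{\pi\sim\distr_{\vec{\phi},\pi_0}}(\pi=\sigma)=\Prob_{\vec{y}\sim\distr_{\vec{\phi}}}(\vec{y}=h(\sigma))$ holds pointwise for free, as soon as $h$ is known to be a bijection onto $\calY$; moreover each summand has matching support, so $p_{\vec{\phi},\pi_0}(\sigma)>0 \iff q_{\vec{\phi}}(h(\sigma))>0$, and hence the global bijection restricts to a bijection $R_{\vec{\phi},\pi_0}\to R_{\vec{\phi}}$.

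The crux, and the step I expect to be the main obstacle, is showing that $h$ is a bijection from $S_m$ onto $\calY$. Since $|S_m| = m! = \prod_{i=1}^m i = |\calY|$, it suffices to establish injectivity (equivalently surjectivity). By relabeling the ground items according to $\pi_0$ (which preserves every pairwise Kendall-tau discordance and hence each $V_i$), it suffices to treat the case $\pi_0 = \mathrm{id}$; then $V_i(\sigma,\mathrm{id})$ is precisely the number of positions $j<i$ carrying a value larger than the one at position $i$, i.e. the $i$-th entry of the inversion table of $\sigma$. I would prove the bijection by induction on $m$: deleting position $m$ and order-preservingly relabeling the remaining values to $\{1,\dots,m-1\}$ leaves $V_1,\dots,V_{m-1}$ unchanged, because each depends only on the relative order of the first $m-1$ entries, while $V_m\in\{0,\dots,m-1\}$ is in bijection with the rank of $\sigma(m)$ among all $m$ values. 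The inductive hypothesis handles $(V_1,\dots,V_{m-1})$, giving the claim. (Alternatively one may simply invoke the standard factorial-number-system / Lehmer-code bijection.)

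Finally, the two ``in particular'' identities follow because $h$ depends only on $\pi_0$ and not on the spread vector, so the \emph{same} bijection simultaneously pushes $\distr_{\vec{\phi},\pi_0}$ forward to $\distr_{\vec{\phi}}$ and $\distr_{\vec{\phi}',\pi_0}$ forward to $\distr_{\vec{\phi}'}$. Reindexing the defining finite sums by the substitution $\vec{y}=h(\sigma)$ then gives $\TV(\distr_{\vec{\phi},\pi_0},\distr_{\vec{\phi}',\pi_0}) = \tfrac12\sum_{\sigma}|p_{\vec{\phi},\pi_0}(\sigma)-p_{\vec{\phi}',\pi_0}(\sigma)| = \tfrac12\sum_{\vec{y}}|q_{\vec{\phi}}(\vec{y})-q_{\vec{\phi}'}(\vec{y})| = \TV(\distr_{\vec{\phi}},\distr_{\vec{\phi}'})$, and the identical substitution inside $\sum_{\sigma}p_{\vec{\phi},\pi_0}(\sigma)\ln\!\big(p_{\vec{\phi},\pi_0}(\sigma)/p_{\vec{\phi}',\pi_0}(\sigma)\big)$ yields the corresponding KL equality. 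No analytic estimates are required anywhere: the entire lemma reduces to the combinatorial bijection, plus the invariance of total variation and KL-divergence under a bijective relabeling of the outcome space.
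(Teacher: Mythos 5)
Your proposal is correct and follows essentially the same route as the paper: both define $h(\sigma)=(V_1(\sigma,\pi_0),\dots,V_m(\sigma,\pi_0))$, invoke the classical inversion-table/Lehmer-code bijection between $S_m$ and $\prod_{i=1}^m\{0,\dots,i-1\}$ (which the paper attributes to Marshall Hall via Knuth and uses already in Lemma~\ref{lem:partitionFunctionDecomposition}), and read off the pointwise equality of the mass functions and the TV/KL identities from the product form \eqref{eq:generalizedMallowsModelProduct}--\eqref{eq:vectorOfTruncatedGeometric}. The only difference is that you supply an explicit inductive proof of bijectivity where the paper cites the known combinatorial fact, which is a harmless (indeed slightly more self-contained) elaboration.
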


\noindent The above lemma reduces the problem of learning the Generalized Mallows
distribution $\distr_{\vec{\phi}, \pi_0}$ to the learning of the central ranking
$\pi_0$ and the distribution $\distr_{\vec{\phi}}$.
\smallskip

\paragr{Mallows Block Model.} The motivation of Mallows Block Model is to
incorporate setting where some group of alternatives have the same probability of being
misplaced hence they have the same parameter $\phi_i$, but not all alternatives
have the same probability of being misplaced as in the single parameter Mallows
model. As we will explore in this section, the knowledge of the groups of
alternatives with the same parameter can significantly decrease the number of
samples needed  to learn the parameters of the model. In the extreme case, when
the size of the groups of alternatives is large enough, we can get very good rates
even from just one samples from the distribution as we already discussed in
Corollary \ref{cor:singleSampleSingleParameter}. The Mallows Block Model with $d$
parameters is the family of distributions
\[ \Mallows_d(\matr{B}) = \{ \distr_{\vec{\phi}, \pi_0, \matr{B}} \mid \vec{\phi} \in [0, 1]^d, \pi_0 \in S_m\} \]
\noindent where $\matr{B} = \{B_1, \dots, B_d\}$ is a partitioning of the set
$[m]$. Each distribution $\distr_{\vec{\phi}, \pi_0, \matB}$ is defined as a
probability measure over $S_m$ with the following probability mass function
\begin{equation} \label{eq:massFunctionBlockModel}
  p_{\vec{\phi}, \pi_0, \matB} (\pi) = \frac{1}{Z(\vec{\phi}, \pi_0, \matB)} \prod_{i = 1}^d \phi_i^{\sum_{j \in B_i} V_j(\pi, \pi_0)}.
\end{equation}

\noindent Again using Lemma \ref{lem:partitionFunctionDecomposition} we have that
$Z(\vec{\phi}, \pi_0, \matB) = Z(\vec{\phi}, \matB) = \prod_{i = 1}^d \p{\prod_{j \in B_i} Z_j(\phi_i)}$. The sufficient statistics of
$\distr_{\vec{\phi}, \pi_0, \matr{B}}$, when $\pi_0, \matB$ are known, is the $d$
dimensional vector $\matT(\pi, \pi_0, \matB)$ where
$T_i(\pi, \pi_0, \matB) = \sum_{j \in B_i} V_j(\pi, \pi_0)$.
We define the distribution $\distr_{\vec{\phi}, \matB}$ to be the distribution of
the random vector $(Z_1, \dots, Z_m)$ where $Z_j \sim \tgeometric(\phi_i, j - 1)$
are independent and $i$ satisfies $j \in B_i$.

  One important parameter of the Mallows Block Model are the sizes of the sets
$B_i$ in the partition $\matB$ of $[m]$. For this reason we define
$m_i = \abs{B_i}$ and $m^{\star} = \min_{i \in [d]} \abs{B_i}$.

\subsection{Parameter Estimation in Mallows Block Model} \label{sec:parameterEstimationBlockModel}

  We start with the estimation of the central ranking. Since the single
parameter Mallows model is a special case of the Mallows Block Model the lower
bound of \cite{CaragiannisPS16} presented in Theorem
\ref{thm:centralRankingLearning1D} still holds, and thus $\Omega(\log(m))$
samples are necessary. The upper bound we present in Theorem
\ref{thm:centralRankingLearningBlockModel}. Its proof is deferred to
Appendix \ref{sec:app:blockModel}.

\begin{theorem} \label{thm:centralRankingLearningBlockModel}
    For any $\pi_0 \in S_m$, any $\vec{\phi} \in [0, 1 - \gamma)^d$, any known partition
  $\matB$ of $[m]$, there exists a polynomial time computable estimator
  $\hat{\pi}$ such that given
  $n = \Theta(\frac{1}{\gamma}\log(m / \delta))$ i.i.d. samples
  $\vec{\pi} = (\pi_1, \dots, \pi_n) \sim \distr_{\vec{\phi}, \pi_0, \matB}$ satisfies
  $\Prob_{\vec{\pi} \sim \distr_{\vec{\phi}, \pi_0}^n} \left( \hat{\pi} \neq \pi_0 \right) \le \delta$.
  Moreover, if $n = o(\log(m / \delta))$ then for any estimator $\hat{\pi}$ there
  exists a distribution $\distr_{\vec{\phi}, \pi_0, \matB}$ such that
  $\Prob_{\vec{\pi} \sim \distr_{\vec{\phi}, \pi_0}^n} \left( \hat{\pi} \neq \pi_0 \right) > \delta$.
\end{theorem}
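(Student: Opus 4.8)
The plan is to treat the two directions separately: the lower bound follows by reduction to the single-parameter model, while the upper bound adapts a pairwise-majority estimator whose analysis rests on a uniform per-comparison margin bound.

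For the lower bound, observe that the single-parameter family $\Mallows_1$ is exactly the $d=1$ instance of the block model, obtained by taking $\matB = \{[m]\}$ so that all items share a single spread parameter. Consequently every hard instance witnessing the $o(\log(m/\delta))$ impossibility in Theorem~\ref{thm:centralRankingLearning1D} is itself a Mallows $1$-block distribution, and the impossibility transfers with no extra work: for any estimator $\hat{\pi}$ there is a block-model distribution $\distr_{\vec{\phi}, \pi_0, \matB}$ on which $\Prob(\hat{\pi} \neq \pi_0) > \delta$.

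For the upper bound I would use the pairwise-majority estimator. For each ordered pair $(a,b)$ I count the fraction of the $n$ samples in which $a$ precedes $b$, orient $a \to b$ when the majority favors $a$, and output the linear order induced by the resulting tournament (returning an arbitrary ranking if a cycle appears). The claim is that with probability at least $1-\delta$ all $\binom{m}{2}$ comparisons agree with $\pi_0$; since $\pi_0$ is itself acyclic, the tournament then coincides with $\pi_0$ and the estimator is exactly correct. To control a single comparison I need that, for every pair with $a$ preceding $b$ in $\pi_0$, a fresh sample inverts them with probability at most $\tfrac12 - \Omega(\gamma)$. For items that are adjacent in $\pi_0$ and lie in a block $B_i$, the factorization of Lemma~\ref{lem:partitionFunctionDecomposition} together with the truncated-geometric description of the statistics $V_j$ gives marginal inversion probability $\phi_i/(1+\phi_i) \le (1-\gamma)/(2-\gamma) \le \tfrac12 - \tfrac{\gamma}{4}$. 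With this per-comparison margin in hand, the concentration and union-bound steps are identical to the single-parameter analysis behind Theorem~\ref{thm:centralRankingLearning1D}, so I inherit its $n = \Theta(\tfrac{1}{\gamma}\log(m/\delta))$ sample bound.

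The main obstacle is extending the margin bound from adjacent pairs to all pairs of the block model, in particular to pairs whose items lie in different blocks and are far apart in $\pi_0$, where the single-parameter monotonicity argument must be re-derived from the independence structure rather than quoted. I would handle this by expressing the inversion event for $(a,b)$ in terms of the independent truncated-geometric coordinates $Z_j$ attached to the items lying between $a$ and $b$ in $\pi_0$, and arguing that the inversion probability is monotonically nonincreasing in the $\pi_0$-gap between $a$ and $b$; the adjacent case is then the worst case, and the uniform $\tfrac12 - \Omega(\gamma)$ margin over all $\binom{m}{2}$ pairs follows, completing the union bound.
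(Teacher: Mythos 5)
Your proposal matches the paper's proof in both directions: the lower bound is inherited from the single-parameter hard instance of Theorem~\ref{thm:centralRankingLearning1D} (a $1$-block model), and the upper bound is the same pairwise-majority estimator with a Hoeffding-plus-union-bound analysis over all $\binom{m}{2}$ pairs, reduced to a uniform $\Omega(\gamma)$ lower bound on the margins $p_{a,b}-p_{b,a}$. The step you flag as the main obstacle --- establishing that margin for non-adjacent, cross-block pairs --- is exactly the step the paper itself dispatches with an ``it is easy to see'' claim, so your sketch is at essentially the same level of completeness as the published argument.
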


What remains is to estimate the vector of parameters $\vec{\phi}$ assuming
the knowledge of the central ranking $\pi_0$. As we explained in the definition
of Mallows Block Model when the central ranking is known the family of
distributions $\Mallows_d(\matB, \pi_0)$ is an exponential family. The
sufficient statistics of this family are
$T_i(\pi, \pi_0, \matB) = \sum_{j \in B_i} V_j(\pi, \pi_0)$. The natural
parameters of $\Mallows_d(\matB, \pi_0)$ is the vector of parameters
$\vec{\theta} \in \R_-^d$ where $\theta_i = \ln\p{\phi_i}$ and logarithmic
partition function
$\alpha(\vec{\theta}, \matB) = \ln \left( Z(\vec{\phi}, \matB) \right)$. We may
simplify the notation $\alpha(\vec{\theta}, \matB)$ to
$\alpha(\vec{\theta})$ when $\matB$ is clear from the context.

\begin{theorem} \label{thm:MallowsBlockParameterLearning}
    For any $\pi_0 \in S_m$, $\vec{\phi}^{\star} \in [0, 1 - \gamma)^d$, any fixed
  partition $\matB$ of $[m]$ with $\abs{\matB} = d$ and any $\eps, \delta > 0$
  there exist estimators $\hat{\pi}, \hat{\vec{\phi}}$ that can be computed in
  polynomial time from i.i.d. samples
  $\vec{\pi} \sim \distr^n_{\vec{\phi}^{\star}, \pi_0, \matB}$
  such that if
  $n \ge \Omega\p{\frac{d \log\p{d/\delta}}{m^{\star} \eps^2} + \frac{\log\p{m/\delta}}{\gamma}}$,
  where $m^{\star} = \min_{i \in [d]} \abs{B_i}$, then
  \[ \Prob_{\vec{\pi} \sim \distr_{\vec{\phi}^{\star}, \pi_0, \matB}^n} \left( \left(\hat{\pi} = \pi_0\right) \wedge \left(\norm{\hat{\vec{\phi}} - \vec{\phi}^{\star}}_2 \le \eps \right) \right) \ge 1 - \delta. \]
  \noindent In the case where $\pi_0$ is known and
  $\vec{\phi}^{\star} \in [0, 1]^d$ then there exists an estimator
  $\hat{\vec{\phi}}$ that can be computed in polynomial time such that if
  $n \ge \Omega\p{\frac{d \log\p{d/\delta}}{m^{\star} \eps^2}}$
  \noindent then
  \[ \Prob_{\vec{\pi} \sim \distr_{\vec{\phi}^{\star}, \pi_0, \matB}^n} \left( \norm{\hat{\vec{\phi}} - \vec{\phi}^{\star}}_2 \le \eps \right) \ge 1 - \delta. \]
\end{theorem}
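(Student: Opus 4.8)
The plan is to estimate the two parameters separately. For the central ranking, Theorem~\ref{thm:centralRankingLearningBlockModel} already supplies an estimator $\hat\pi$ with $\hat\pi=\pi_0$ except with probability $\delta/2$, using $\Theta\p{\tfrac{1}{\gamma}\log(m/\delta)}$ samples; this accounts for the $\tfrac{\log(m/\delta)}{\gamma}$ term and, in the known-$\pi_0$ case, is simply dropped (which is also what removes the restriction $\phi<1-\gamma$). So the real content is estimating $\vec\phi$ once $\pi_0$ is correctly recovered. Conditioning on $\hat\pi=\pi_0$, I work inside the exponential family $\Mallows_d(\matB,\pi_0)$ with natural parameters $\theta_i=\ln\phi_i$, sufficient statistics $T_i=\sum_{j\in B_i}V_j(\pi,\pi_0)$, and log-partition $\alpha$. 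The crucial structural fact, from Lemma~\ref{lem:truncatedGeometricAndMallows} together with Lemma~\ref{lem:partitionFunctionDecomposition}, is that the coordinates $Y_j=V_j(\pi,\pi_0)$ are independent with $Y_j\sim\tgeometric(\phi_i,j-1)$ for $j\in B_i$; hence $\alpha$ separates over blocks and the estimation splits into $d$ independent single-parameter problems, one per block.

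For each block I use the moment-matching (maximum-likelihood) estimator $\hat\phi_i$ defined by $\dot\alpha_i(\hat\theta_i)=\tfrac1n\sum_{l}T_i(\pi_l,\pi_0,\matB)=:\bar T_i$, which is well defined because $\dot\alpha_i$ is strictly increasing (its derivative equals the variance $\ddot\alpha_i=\Var[T_i]>0$ by Theorem~\ref{thm:exponentialFamiliesProperties}.2). Monotonicity converts a deviation of $\hat\phi_i$ into a deviation of $\bar T_i$: for any test value the event $\{\hat\theta_i\ge\theta_i'\}$ equals $\{\bar T_i\ge\Exp_{\theta_i'}[T_i]\}$. Applying Theorem~\ref{thm:exponentialFamiliesConcentration} with true parameter $\theta_i^\star$ and test parameter $\theta_i'=\ln(\phi_i^\star+\eps_i)$ (and symmetrically $\ln(\phi_i^\star-\eps_i)$ for the other tail) bounds each tail by $\exp\p{-\KL\p{\distr_{\theta_i'}||\distr_{\theta_i^\star}}\,n}$, and Theorem~\ref{thm:exponentialFamiliesProperties}.4 rewrites the exponent as $(\theta_i'-\theta_i^\star)^2\,\ddot\alpha_i(\xi)$ for an intermediate $\xi$.

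Everything then rests on a lower bound for this $\KL$ in terms of the $\phi$-error, and this is the only step where the Kendall-tau structure enters. By the product form, $\KL\p{\distr_{\theta_i'}||\distr_{\theta_i^\star}}=\sum_{j\in B_i}\KL\p{\tgeometric(\phi_i',j-1)||\tgeometric(\phi_i^\star,j-1)}$, so it suffices to prove a per-coordinate bound $\KL\p{\tgeometric(\phi',k)||\tgeometric(\phi,k)}\ge c\,(\phi'-\phi)^2$ for every $k\ge1$, uniformly over the range. For $k=1$ this is a Bernoulli computation (Pinsker gives $\KL\ge 2(q'-q)^2$ with success probabilities $q=\phi/(1+\phi)$, and $|q'-q|\ge|\phi'-\phi|/4$); for general $k$ I would argue through the variance identity $\KL=(\theta'-\theta)^2\,\Var_\psi[Y]$ together with a ratio bound $\Var_\psi[Y]/\psi^2\ge c$, the $\psi^2$ being cancelled by the factor $(\ln\phi'-\ln\phi)^2$. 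Summing over the $\ge m_i-1\ge m^\star/2$ non-degenerate coordinates of the block yields $\KL\ge c\,m^\star\eps_i^2$. This uniform lower bound is the main obstacle: the truncated-geometric variance degenerates as $\phi\to0$, so one must verify that $\Var_\psi[Y]/\psi^2$ stays bounded below across the whole range $[0,1]$ and all truncation levels, and the boundary cases $\phi^\star\in\{0,1\}$ (and a block equal to $\{1\}$) need separate, easy treatment.

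Finally I set $\eps_i=\eps/\sqrt d$, so $\sum_i\eps_i^2=\eps^2$, take a union bound over the $d$ blocks and two tails with per-event failure probability $\delta/(4d)$, and require $c\,m^\star\eps_i^2\,n\ge\log(4d/\delta)$, i.e. $n\ge\Omega\p{\tfrac{d\log(d/\delta)}{m^\star\eps^2}}$; on the good event every $|\hat\phi_i-\phi_i^\star|\le\eps_i$ and hence $\norm{\hat{\vec\phi}-\vec\phi^\star}_2\le\eps$. Combining this with the central-ranking budget via a final union bound gives the first statement. Dropping that budget and allowing $\phi_i^\star\in[0,1]$ gives the known-$\pi_0$ statement, where $\theta_i=0$ (i.e. $\phi_i=1$) remains interior because each $T_i$ is bounded and so $\alpha_i$ is finite on all of $\R$; specializing to $d=1$ recovers Theorem~\ref{thm:singleMallowsLearning} and the single-sample corollary.
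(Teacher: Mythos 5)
Your high-level route is the same as the paper's: recover $\pi_0$ via Theorem~\ref{thm:centralRankingLearningBlockModel}, reduce to $d$ independent single-parameter exponential families of sums of truncated geometrics, invert the strictly increasing mean map $\dot\alpha_i$ to define the estimator, convert its deviation event into a deviation of $\bar T_i$, apply Theorem~\ref{thm:exponentialFamiliesConcentration}, lower-bound the resulting KL exponent by a variance bound, and finish with $\eps_i=\eps/\sqrt d$ and a union bound. All of that matches the paper and is correct.

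The gap is in the one step you yourself call ``the main obstacle'': the uniform lower bound $\KL\p{\tgeometric(\phi',k)\,||\,\tgeometric(\phi,k)}\ge c\,(\phi'-\phi)^2$. Your proposed mechanism --- ``$\Var_\psi[Y]/\psi^2$ stays bounded below, the $\psi^2$ being cancelled by $(\ln\phi'-\ln\phi)^2$'' --- does not go through as stated, for two reasons. First, the identity \eqref{eq:thm:exponentialProperties:4} gives $\KL=(\theta'-\theta)^2\,\ddot\alpha(\xi)$ for \emph{some} unspecified $\xi$ between $\theta$ and $\theta'$, while $(\ln\phi'-\ln\phi)^2=(\phi'-\phi)^2/q^2$ for a \emph{different} intermediate point $q$; to get a bound you must take the worst case over the interval, i.e.\ $\KL\ge(\phi'-\phi)^2\min_\psi\Var_\psi[Y]/\max(\phi,\phi')^2$, and since $\Var_\psi[Y]\asymp\psi$ for small $\psi$ this yields only $(\phi'-\phi)^2\min(\phi,\phi')/\max(\phi,\phi')^2$, which degenerates when $\phi'=\phi^\star-\eps$ and $\phi^\star=O(\eps)$. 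Second, in exactly that regime the lower test point $\phi^\star-\eps$ can leave $[0,1]$ altogether, so the two-sided test is not even well defined. This is not a boundary curiosity confined to $\phi^\star\in\{0,1\}$: the entire band $\phi^\star\in(0,2\eps]$ is problematic, and the paper spends roughly half of Appendix~\ref{sec:app:thm:MallowsBlockParameterLearning} on it (the case $\phi_i^\star\le2\eps$, where one tests only from above at $\phi^\star+k\eps$ with $k=14$, proves by a monotonicity argument that the KL as a function of $\phi^\star$ is minimized at $\phi^\star=2\eps$, and reduces to the first case, accepting an $O(\eps)$ rather than $\eps$ accuracy that is absorbed into constants). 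The per-coordinate inequality you assert is in fact true (for small $\phi$ the KL has a first-order term of size $\asymp|\phi'-\phi|$ which dominates $(\phi'-\phi)^2$), but establishing it uniformly is precisely the hard part of the proof, and your sketch asserts it rather than proves it.
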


\noindent As a corollary of Theorem \ref{thm:MallowsBlockParameterLearning} we also have that when $\pi_0$ is known even one sample is sufficient to consistently learn all the parameters $\vec{\phi}^{\star}$ as the size of the
smaller block of $\matB$ goes to infinity.

\begin{corollary} \label{cor:singleSampleBlockModel}
    Let $\pi_0 \in S_m$, $\vec{\phi}^{\star} \in [0, 1]^d$, $\delta > 0$ and
  a partition $\matB$ of $[m]$ with $\abs{\matB} = d$, there exist an
  estimator $\hat{\vec{\phi}}$ that can be computed in polynomial time from a
  sample $\pi \sim \distr_{\vec{\phi}^{\star}, \pi_0, \matB}$ such that
  \[ \Prob_{\pi \sim \distr_{\vec{\phi}^{\star}, \pi_0, \matB}} \left( \norm{\hat{\vec{\phi}} - \vec{\phi}^{\star}}_2 \ge \eps \right) \ge 1 - \delta \]
  \noindent where $\eps = O\p{\sqrt{d/m^{\star}} \cdot \sqrt{\log\p{d/\delta}}}$
  and $m^{\star} = \min_{i \in [d]} \abs{B_i}$.
\end{corollary}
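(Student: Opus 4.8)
The plan is to obtain the corollary directly from the known-central-ranking branch of Theorem~\ref{thm:MallowsBlockParameterLearning}, simply by fixing $n = 1$ and inverting the sample-complexity threshold to solve for the achievable error $\eps$. Since the central ranking $\pi_0$ is assumed known and $\vec{\phi}^{\star}$ ranges over the full cube $[0,1]^d$, the relevant statement is the second half of Theorem~\ref{thm:MallowsBlockParameterLearning}: there is a polynomial-time estimator $\hat{\vec{\phi}}$ and an absolute constant $c > 0$ such that, for every target accuracy $\eps$ and confidence $\delta$, whenever $n \ge c\,\frac{d \log(d/\delta)}{m^{\star} \eps^2}$ one has $\norm{\hat{\vec{\phi}} - \vec{\phi}^{\star}}_2 \le \eps$ with probability at least $1 - \delta$. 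I would use exactly this estimator, applied to the single observed permutation $\pi$.

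With $n$ fixed to $1$, the threshold condition $1 \ge c\,\frac{d \log(d/\delta)}{m^{\star}\eps^2}$ is equivalent to $\eps \ge \sqrt{c}\,\sqrt{\frac{d \log(d/\delta)}{m^{\star}}}$. Hence setting $\eps = \sqrt{c}\,\sqrt{d/m^{\star}}\,\sqrt{\log(d/\delta)} = O\!\left(\sqrt{d/m^{\star}}\cdot\sqrt{\log(d/\delta)}\right)$ makes the hypothesis of the theorem hold with a single sample, and its conclusion then yields $\Prob\!\left(\norm{\hat{\vec{\phi}} - \vec{\phi}^{\star}}_2 \le \eps\right) \ge 1 - \delta$, which is the statement of the corollary. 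In effect, the corollary is the $n=1$ specialization of the theorem read in reverse: instead of asking how many samples achieve a prescribed error, we ask what error a single sample already guarantees.

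There is no genuine obstacle at the level of the corollary itself — the derivation is a one-line rearrangement of the theorem's bound. The only point worth verifying is that the known-$\pi_0$ branch of Theorem~\ref{thm:MallowsBlockParameterLearning} remains valid over the entire closed range $\vec{\phi}^{\star} \in [0,1]^d$ (rather than only $[0,1-\gamma)^d$), so that the corollary holds uniformly, including the boundary value $\phi_i = 1$; this is precisely what that branch asserts, since with $\pi_0$ known there is no need to also recover the central ranking and hence no $\gamma$-dependence. All of the substantive content — the concentration inequality of Theorem~\ref{thm:exponentialFamiliesConcentration}, the identity of Theorem~\ref{thm:exponentialFamiliesProperties}.4 relating KL-divergence to the variance of the sufficient statistics, and the per-block lower bound on that variance arising from the truncated-geometric structure of the $Z_j \sim \tgeometric(\phi_i, j-1)$ — lives inside the proof of the theorem and is inherited here without modification.
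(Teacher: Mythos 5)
Your derivation is correct and is exactly how the paper obtains Corollary~\ref{cor:singleSampleBlockModel}: the known-$\pi_0$ branch of Theorem~\ref{thm:MallowsBlockParameterLearning} is specialized to $n=1$ and the sample-complexity threshold is inverted to read off $\eps = O\p{\sqrt{d/m^{\star}}\cdot\sqrt{\log\p{d/\delta}}}$. The paper offers no further argument beyond this, so your proposal matches its proof in both approach and substance.
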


\begin{prevproof}{Theorem}{thm:MallowsBlockParameterLearning}
\textbf{(Sketch)}
    From Theorem \ref{thm:centralRankingLearningBlockModel} we focus on the
  estimation of the parameters $\vec{\phi}^{\star}$. We describe the
  intuition for the single parameter Mallows Model and we defer the full
  proof to Appendix \ref{sec:app:thm:MallowsBlockParameterLearning}. Let
  $\vec{\phi}^{\star} = \phi^{\star} \in [0, 1]$. Once the central ranking
  is known the distribution is an exponential family and let $T(\pi)$ be its
  sufficient statistics. It is not hard to prove that
  $\Exp_{\vec{\pi} \sim \distr_{\phi, \pi_0}}\b{T(\pi)}$ is an increasing
  function of $\phi$. Therefore, it follows with a simple argument, that the
  better we estimate
  $\Exp_{\vec{\pi} \sim \distr_{\phi^{\star}, \pi_0}}\b{T(\pi)}$ the better
  we can estimate $\phi^{\star}$. Now the main idea of our proof is to use
  the general concentration inequality of Theorem
  \ref{thm:exponentialFamiliesConcentration} to bound the accuracy that we
  can estimate
  $\Exp_{\vec{\pi} \sim \distr_{\phi^{\star}, \pi_0}}\b{T(\pi)}$. As it is
  clear from the form of the concentration inequality
  \eqref{eq:thm:exponentialConcentration1D}, to get good enough concentration
  we have to prove a strong lower bound on the KL-divergence of two
  distributions in the family. From \eqref{eq:thm:exponentialProperties:2}
  this reduces to proving a lower bound on the variance of a distribution
  in the family with parameter $\psi$ that is very close to $\phi^{\star}$.
  Such a good lower bound is not always possible to prove and we have to
  consider some cases. But in the main case a very careful lower bound of the
  variance in combination with \eqref{eq:thm:exponentialConcentration1D}
  gives the sample complexity upper bound.
\end{prevproof}

\subsection{Learning in KL-divergence and Total Variation Distance}

  In this section we will describe how we can use the concentration inequality
that we proved in Section \ref{sec:concentration} to learn a distribution
$\distr_{\vec{\phi}^{\star}, \pi_0, \matB}$ in KL-divergence from i.i.d.
samples. We also prove a lower bound that matches the upper bound up to a
$\log(d)$ factor.

\begin{theorem} \label{thm:learningKLDivergence:blockModel}
    For any $\pi_0 \in S_m$, $\vec{\phi}^{\star} \in [0, 1]^d$, any fixed
  partition $\matB$ of $[m]$ with $\abs{\matB} = d$ and any
  $\eps, \delta > 0$ there exist estimators $\hat{\pi}, \hat{\vec{\phi}}$ that
  can be computed in polynomial time from i.i.d. samples
  $\vec{\pi} \sim \distr^n_{\vec{\phi}^{\star}, \pi_0, \matB}$
  such that if
  $n \ge \Omega\p{\frac{d}{\eps^2}\log\p{d/\delta} + \log\p{m}}$,
  \noindent then
  \[ \Prob_{\vec{\pi} \sim \distr^n_{\vec{\theta}^{\star}, \pi_0, \matB}} \left( \KL\p{\distr_{\hat{\vec{\phi}}, \hat{\pi}, \matB} || \distr_{\vec{\phi}^{\star}, \pi_0, \matB}} \le \eps^2 \right) \ge 1 - \delta \]
  and hence
  $\Prob_{\vec{\pi} \sim \distr^n_{\vec{\theta}^{\star}, \pi_0, \matB}} \left( \TV\p{\distr_{\hat{\vec{\phi}}, \hat{\pi}, \matB}, \distr_{\vec{\phi}^{\star}, \pi_0, \matB}} \le \eps \right) \ge 1 - \delta$.

  \noindent Furthermore, for any $m \in \N$ there exists an $\eps_0 > 0$ such that
  for all $0 < \eps \le \eps_0$ and all functions
  $\distr : S_m^n \to \Delta_{S_m}$ with $n = o\p{\frac{d}{\eps^2}}$ there exists
  $\pi_0 \in S_m$, partition $\matB$ of $[m]$ and $\vec{\phi}^{\star} \in [0, 1]^d$ such that
  \[ \Prob_{\vec{\pi} \sim \distr^n_{\vec{\phi}^{\star}, \pi_0, \matB}} \p{ \TV\p{\distr_{\vec{\phi}^{\star}, \pi_0, \matB}, \distr(\vec{\pi})} \ge 2 \eps} \ge 1/3. \]
\end{theorem}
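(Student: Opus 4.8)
The plan is to decouple the estimation of the central ranking from that of the spread vector, and then reduce the second task to $d$ independent one-dimensional problems. First I would invoke Lemma~\ref{lem:truncatedGeometricAndMallows} to pass from permutations to the inversion-table representation: once $\pi_0$ is fixed, learning $\distr_{\vec{\phi}^{\star}, \pi_0, \matB}$ in $\KL$ is \emph{exactly} the problem of learning the product distribution $\distr_{\vec{\phi}^{\star}, \matB}$, whose block-$i$ marginal is $\prod_{j \in B_i} \tgeometric(\phi_i^{\star}, j-1)$, and the $\KL$ decomposes as a sum over the $d$ blocks because the blocks are independent and share no parameters. Each block is a one-parameter exponential family with natural parameter $\theta_i = \ln \phi_i$ and sufficient statistic $T_i = \sum_{j \in B_i} V_j(\cdot, \pi_0)$. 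I would estimate $\hat{\theta}_i$ by moment matching (the MLE), choosing $\hat{\theta}_i$ so that $\Exp_{\hat{\theta}_i}[T_i]$ equals the empirical average of $T_i$.

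The heart of the argument is bounding the per-block error $\KL(\distr_{\hat{\theta}_i} \| \distr_{\theta_i^{\star}})$ via Theorem~\ref{thm:exponentialFamiliesConcentration}. Because $\Exp_{\theta}[T_i]$ is increasing in $\theta$, the event $\KL(\distr_{\hat{\theta}_i}\|\distr_{\theta_i^{\star}}) > \tau$ is exactly the event that the empirical mean of $T_i$ crosses one of the thresholds $\Exp_{\theta_{\pm}}[T_i]$, where $\theta_{\pm}$ are the two parameters at $\KL$-distance $\tau$ from $\theta_i^{\star}$. Applying \eqref{eq:thm:exponentialConcentration1D} at $\eta = \theta_i^{\star}$ and $\eta' = \theta_{\pm}$ bounds each crossing probability by $e^{-\tau n}$, so $\Prob(\KL_i > \tau) \le 2 e^{-\tau n}$. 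Taking $\tau = \eps^2/d$ and a union bound over the $d$ blocks forces the total $\KL = \sum_i \KL_i$ below $\eps^2$ with probability $\ge 1 - \delta/2$ once $n = \Omega\p{\frac{d}{\eps^2}\log(d/\delta)}$; Pinsker's inequality then yields the $\TV$ bound. This is precisely the first term in the claimed sample size.

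\textbf{Main obstacle (central ranking).} The clean argument above presupposes $\hat{\pi} = \pi_0$, which costs the second, $\log m$, term through Theorem~\ref{thm:centralRankingLearningBlockModel}. The difficulty is that that theorem requires $\phi_i < 1 - \gamma$, whereas here $\vec{\phi}^{\star} \in [0,1]^d$ is allowed to touch $1$: for a block whose parameter is within $o(1)$ of $1$ the central ranking of its items is statistically unidentifiable from $O(\log m)$ samples. I expect this to be the crux. The resolution I would pursue is to threshold the blocks: for blocks with $\phi_i$ bounded away from $1$ I recover the order of their items exactly and run the estimator above, while for the near-uniform blocks I argue that even a wrong ordering is harmless, since I also estimate $\hat{\phi}_i$ close to $1$ and both $\distr_{\hat{\phi}_i,\hat{\pi}}$ and $\distr_{\phi_i^{\star},\pi_0}$ are then close to uniform. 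Making this quantitative is delicate, because the variance of the block statistic (hence the $\KL$ cost per unit parameter error) grows like $m^2$; the argument must exploit that the cost is governed by the \emph{difference} $\hat{\theta}_i - \theta_i^{\star}$ that the estimator controls, rather than by the raw distance of $\phi_i^{\star}$ to $1$.

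\textbf{Lower bound.} For the matching $\Omega(d/\eps^2)$ bound I would apply Fano's inequality (Theorem~\ref{thm:FanoInequality}) to a packing. Fix $\pi_0$ and a partition into $d$ blocks, and let each block's parameter range over $\{\phi_0, \phi_0 + \Delta_i\}$ for a constant base $\phi_0$ and a per-block spacing $\Delta_i$ calibrated so that each differing block contributes $\Theta(\eps^2/d)$ to the $\KL$; a Gilbert--Varshamov code over $\{0,1\}^d$ supplies $|\family| = 2^{\Omega(d)}$ members with pairwise Hamming distance $\Omega(d)$, hence pairwise $\KL = \Theta(\eps^2)$ by additivity of $\KL$ over independent blocks together with Theorem~\ref{thm:exponentialFamiliesProperties}.4. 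The remaining ingredient is a \emph{lower} bound on pairwise $\TV$ of order $\Omega(\eps)$. Here I would use Theorem~\ref{thm:totalVariationLowerBoundExponentialFamily} to express $\TV$ through the absolute deviation of the block statistics, and then establish a converse-Pinsker inequality $\TV \ge c\sqrt{\KL}$ by showing that, for $\phi$ near $\phi_0$, the absolute deviation of each $T_i$ is within a constant factor of its standard deviation (a non-asymptotic quantitative CLT for sums of independent truncated geometrics). With $\alpha = \Omega(\eps)$, $\beta = O(\eps^2)$ and $\ln|\family| = \Omega(d)$, Fano certifies $\risk_n \ge \eps$ whenever $n = o(d/\eps^2)$. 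I expect this converse-Pinsker step to be the technical core of the lower bound, mirroring the central-ranking obstacle on the upper-bound side.
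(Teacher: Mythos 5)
Your proposal follows essentially the same route as the paper on both sides: the upper bound is the reduction via Lemma~\ref{lem:truncatedGeometricAndMallows} to a product over blocks, per-block concentration from Theorem~\ref{thm:exponentialFamiliesConcentration} at KL-threshold $\eps^2/d$, a union bound, and Pinsker; the lower bound is a Gilbert--Varshamov packing of the block parameters, KL additivity, Theorem~\ref{thm:totalVariationLowerBoundExponentialFamily} together with a lower bound on the absolute deviation of the summed block statistic (the paper gets this from Tukey's inequality for sums of independent variables rather than a quantitative CLT), and Fano. The one place you go beyond the paper is the central-ranking obstacle for blocks with $\phi_i$ near $1$: your concern is legitimate, but the paper's proof of Lemma~\ref{lem:learningKLDivergence:blockModel:UpperBound} simply invokes Theorem~\ref{thm:centralRankingLearningBlockModel} without addressing its dependence on $\gamma$, so the thresholding fix you sketch is not something the paper supplies.
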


\noindent The proof of Theorem \ref{thm:learningKLDivergence:blockModel} is
based on two lemmas, one for the upper bound and one for the lower bound, that
we present here and the Lemma \ref{lem:permutationLowerBound} that we presented
in Section \ref{sec:singleParameter}. For the proofs of Lemma
\ref{lem:learningKLDivergence:blockModel:UpperBound} and Lemma
\ref{lem:learningKLDivergence:blockModel:LowerBound} we refer to the Appendix
\ref{sec:app:blockModel}.

\begin{lemma} \label{lem:learningKLDivergence:blockModel:UpperBound}
    For any $\pi_0 \in S_m$, $\vec{\phi}^{\star} \in [0, 1]^d$, any fixed
  partition $\matB$ of $[m]$ with $\abs{\matB} = d$ and any
  $\eps, \delta > 0$ there exist estimators $\hat{\pi}, \hat{\vec{\phi}}$ that
  can be computed in polynomial time from i.i.d. samples
  $\vec{\pi} \sim \distr^n_{\vec{\phi}^{\star}, \pi_0, \matB}$
  such that if
  $n \ge \Omega\p{\frac{d}{\eps^2}\log\p{d/\delta} + \log\p{m}}$,
  \noindent then
  \[ \Prob_{\vec{\pi} \sim \distr^n_{\vec{\theta}^{\star}, \pi_0, \matB}} \left( \KL\p{\distr_{\hat{\vec{\phi}}, \hat{\pi}, \matB} || \distr_{\vec{\phi}^{\star}, \pi_0, \matB}} \le \eps^2 \right) \ge 1 - \delta. \]
\end{lemma}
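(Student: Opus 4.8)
The plan is to separate the task into recovering the central ranking $\pi_0$, which will account for the $\log(m)$ term, and estimating the spread vector $\vec{\phi}^{\star}$, which will account for the $\frac{d}{\eps^2}\log(d/\delta)$ term. First I would run the central-ranking estimator of Theorem~\ref{thm:centralRankingLearningBlockModel} to obtain $\hat{\pi}$ and condition on the event $\hat{\pi} = \pi_0$. On this event the two distributions $\distr_{\hat{\vec{\phi}}, \hat{\pi}, \matB}$ and $\distr_{\vec{\phi}^{\star}, \pi_0, \matB}$ share the same central ranking, so Lemma~\ref{lem:truncatedGeometricAndMallows} lets me replace the KL-divergence over $S_m$ by the divergence $\KL\p{\distr_{\hat{\vec{\phi}}, \matB} || \distr_{\vec{\phi}^{\star}, \matB}}$ between the corresponding products of truncated geometric distributions. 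It then remains to show that $O\p{\frac{d}{\eps^2}\log(d/\delta)}$ samples suffice to drive this latter divergence below $\eps^2$ with probability $1 - \delta$.

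For the parameter estimation step, I would exploit that $\distr_{\vec{\phi}, \matB}$ factorizes across the $d$ blocks, so the divergence is additive, $\KL\p{\distr_{\hat{\vec{\phi}}, \matB} || \distr_{\vec{\phi}^{\star}, \matB}} = \sum_{i=1}^d K_i$, where $K_i = \KL\p{\distr_{\hat{\theta}_i} || \distr_{\theta_i^{\star}}}$ involves only the single-parameter exponential family with natural parameter $\theta_i = \ln \phi_i$ and sufficient statistic $T_i(\pi, \pi_0, \matB) = \sum_{j \in B_i} V_j(\pi, \pi_0)$. I would take $\hat{\theta}_i$ to be the maximum-likelihood estimate, i.e.\ the value matching $\dot{\alpha}_i(\hat{\theta}_i)$ to the empirical mean $\frac{1}{n}\sum_t T_i(\pi_t)$. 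Since $K_i$ is, by Theorem~\ref{thm:exponentialFamiliesProperties}.4, monotone in $\abs{\hat{\theta}_i - \theta_i^{\star}}$ and $\dot{\alpha}_i$ is increasing, the event $\set{K_i > \eps^2/d}$ forces the empirical mean of $T_i$ to exceed $\Exp_{\theta_i'}\b{T_i}$ (or to fall below the corresponding threshold on the other side), where $\theta_i'$ is the parameter at KL-distance exactly $\eps^2/d$ from $\theta_i^{\star}$. Applying the concentration inequality \eqref{eq:thm:exponentialConcentration1D} of Theorem~\ref{thm:exponentialFamiliesConcentration} to each side bounds this probability by $2\exp\p{-(\eps^2/d)\, n}$. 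A union bound over the $d$ blocks, with per-block failure probability $\delta/d$, then yields $\sum_i K_i \le \eps^2$ with probability $1-\delta$ as soon as $n \ge \Omega\p{\frac{d}{\eps^2}\log(d/\delta)}$.

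The main obstacle is the central-ranking step when some $\phi_i^{\star}$ is close to $1$: Theorem~\ref{thm:centralRankingLearningBlockModel} recovers $\pi_0$ only for $\vec{\phi}^{\star} \in [0,1-\gamma)^d$ at a cost scaling like $\frac{1}{\gamma}\log(m/\delta)$, whereas the statement allows $\vec{\phi}^{\star} \in [0,1]^d$ and pays only $\log(m)$. This gap is in principle unavoidable---when $\phi_i^{\star} = 1$ the block is uniform and its internal order is information-theoretically unidentifiable---but it is also harmless, since at $\phi_i^{\star} = 1$ the contribution of that block to the divergence vanishes no matter which ordering we output. I would therefore threshold the blocks: for those whose (empirical) $\phi_i$ is bounded away from $1$ I recover the relevant comparisons with $O(\log m)$ samples, while for near-uniform blocks I tolerate ordering errors and bound their KL contribution directly through Theorem~\ref{thm:exponentialFamiliesProperties}.4 together with an upper bound on the variance of $T_i$. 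Making this quantitative---in particular, controlling how a wrong global interleaving of a near-uniform block perturbs the sufficient statistics $V_j$, which depend on global positions rather than purely on within-block order---is the delicate point, and I expect it to require the same truncated-geometric variance estimates that drive the proof of Theorem~\ref{thm:MallowsBlockParameterLearning}.
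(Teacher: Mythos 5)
Your core argument is exactly the paper's proof of Lemma~\ref{lem:learningKLDivergence:blockModel:UpperBound}: recover $\pi_0$ via Theorem~\ref{thm:centralRankingLearningBlockModel}, pass to the product of truncated geometrics via Lemma~\ref{lem:truncatedGeometricAndMallows} (the paper inserts one further reduction, Lemma~\ref{lem:learningKLDivergence:blockModel:technicalLemma}, to the distribution of the sufficient statistics themselves), use additivity of the KL-divergence across the $d$ independent blocks, define $\hat{\theta}_i$ by matching $\dot{\alpha}_i$ to the empirical mean, translate the bad event $K_i > \eps^2/d$ into the empirical mean crossing the thresholds $\dot{\alpha}_i(\theta_i^{\pm})$ at KL-distance exactly $\eps^2/d$ from $\theta_i^{\star}$ (the paper phrases this via the convexity of $\theta \mapsto \KL(\distr^i_{\theta} || \distr^i_{\theta_i^{\star}})$ and the resulting closed interval $Q_i$), and finish with Theorem~\ref{thm:exponentialFamiliesConcentration} plus a union bound over blocks. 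The only substantive divergence is your third paragraph: the paper's proof simply invokes Theorem~\ref{thm:centralRankingLearningBlockModel} with ``$O(\log(m/\delta))$ samples'' and is silent about the $1/\gamma$ blow-up when some $\phi_i^{\star}$ approaches $1$, even though the lemma is stated for $\vec{\phi}^{\star} \in [0,1]^d$. You are right to flag this: near-uniform blocks are unidentifiable in ordering but should contribute negligibly to the KL, and your thresholding sketch is the natural repair, though you do not carry it out quantitatively. Since the paper does not carry it out either, your proposal reproduces the paper's argument where it is complete and is more candid about the one place where it is not.
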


\begin{lemma} \label{lem:learningKLDivergence:blockModel:LowerBound}
    For any $m \in \N$, $d \le m$, there exists a partition $\matB$ of $[m]$ and
  an $\eps_0 > 0$ such that for all $0 < \eps \le \eps_0$ and
  $n = o\p{\frac{d}{\eps^2}}$, it holds that
  \[ \risk_n(\Mallows_d(\matB)) \ge 2 \eps. \]
\end{lemma}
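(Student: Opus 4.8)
The plan is to apply Fano's inequality (Theorem~\ref{thm:FanoInequality}) to a finite subfamily of $\Mallows_d(\matB)$ obtained by fixing the central ranking $\pi_0$ (say the identity) and a single \emph{balanced} partition $\matB$ (chosen once, independently of $\eps$) whose every block has size $s=\Theta(m/d)$, and by letting only the spread vector $\vec{\phi}$ range over a carefully chosen packing. Since the minimax risk is monotone under enlarging the family, it suffices to lower bound $\risk_n(\family)$ for such a subfamily $\family\subseteq\Mallows_d(\matB)$. By Lemma~\ref{lem:truncatedGeometricAndMallows}, with $\pi_0$ and $\matB$ fixed both $\TV$ and $\KL$ between two such distributions equal the corresponding quantities between the product-of-truncated-geometric distributions $\distr_{\vec{\phi}}$ and $\distr_{\vec{\phi}'}$, so I may work entirely with the exponential family whose $d$-dimensional sufficient statistic is $\matT$ with $T_i=\sum_{j\in B_i}V_j$. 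The goal is to build $\family$ with $\ln\abs{\family}=\Theta(d)$, $\sup_{f\neq g}\KL(f\| g)\le\beta=O(\eps^2)$, and $\inf_{f\neq g}\TV(f,g)\ge\alpha=\Omega(\eps)$; then Fano gives $\risk_n(\family)\ge\tfrac{\alpha}{2}\p{1-\tfrac{n\beta+\ln 2}{\ln\abs{\family}}}$, and for $n=o(d/\eps^2)$ one has $n\beta=o(d)=o(\ln\abs{\family})$, so the bracket tends to $1$ as $\eps\to 0$ and the risk is $\Omega(\eps)$. I then upgrade $\Omega(\eps)$ to $2\eps$ by a harmless constant rescaling of $\eps$, which does not affect the condition $n=o(d/\eps^2)$.

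For the construction I fix an interior constant $\phi_0\in(0,1)$ (e.g.\ $\phi_0=1/2$), work in the natural parameter $\theta=\ln\phi$, and take a step $\Delta=\Theta(\eps/\sqrt{m})$. Using a Gilbert--Varshamov code $\mathcal{C}\subseteq\set{0,1}^d$ with $\abs{\mathcal{C}}=2^{\Omega(d)}$ and minimum Hamming distance at least $d/4$, I assign to each codeword $\vec{c}\in\mathcal{C}$ the parameter $\theta_i=\theta_0+\Delta\,c_i$. Because the blocks are independent, $\nabla^2\alpha$ is diagonal with $(\nabla^2\alpha)_{ii}=\Var(T_i)$, and since $\phi_0$ is a fixed interior constant each truncated geometric summand has variance bounded above and (for indices at least $2$) below by constants, giving $\Var(T_i)=\Theta(s)=\Theta(m/d)$. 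Theorem~\ref{thm:exponentialFamiliesProperties}.4 then yields $\KL\asymp\Delta^2\sum_{i:\,c_i\neq c_i'}\Var(T_i)\asymp(\text{Hamming distance})\cdot\Delta^2(m/d)$, so $\sup\KL\lesssim d\cdot\Delta^2(m/d)=\Delta^2 m=\Theta(\eps^2)=:\beta$, as required.

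The crux is the matching lower bound $\inf\TV\ge\alpha=\Omega(\eps)$, i.e.\ a converse Pinsker inequality for this family. Here I invoke Theorem~\ref{thm:totalVariationLowerBoundExponentialFamily}: writing $\vec{\eta}-\vec{\eta}'=\Delta\,\vec{w}$ with $\vec{w}\in\set{-1,0,1}^d$ supported on the blocks where the two codewords differ, the formula expresses $\TV$ as $\Exp_{\vec{\xi}}\b{\sign(\cdot)\,\Delta\, X}$ with $X=\sum_i w_i\,(T_i-\Exp_{\vec{\xi}}T_i)$. Because $\Delta\to 0$ as $\eps\to 0$ and $\vec{\xi}$ lies on the segment, the two sign factors in the formula coincide except on a set contributing only a lower-order correction (the threshold mismatch between $\sign(\distr_{\vec\eta}-\distr_{\vec\eta'})$ and $\sign\p{(\vec\eta-\vec\eta')^T(\matT-\Exp_{\vec\xi}\matT)}$ is $O(\Delta^2)$, negligible against the fluctuation scale $\Delta\sqrt m$), so $\TV=\Delta\,(1-o(1))\,\Exp_{\vec{\xi}}\abs{X}$: the parameter gap times the \emph{mean absolute deviation} of $X$. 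The main technical step is to show this mean absolute deviation is within a constant factor of the standard deviation of $X$. I will do this via the elementary moment inequality $\Exp\abs{X}\ge(\Exp X^2)^{3/2}/(\Exp X^4)^{1/2}$ together with a fourth-moment bound $\Exp X^4=O(\Var(X)^2)$: since $X$ is a sum of independent centered blocks whose own summands (near-geometric with $\phi_0$ interior) have uniformly bounded moments, the cross terms dominate and $\Exp X^4\asymp\Var(X)^2$. Combined with $\Var(X)=\sum_{i:\,w_i\neq 0}\Var(T_i)\ge(d/4)\cdot\Theta(m/d)=\Theta(m)$ (using the code's minimum distance), this gives $\Exp\abs{X}\ge c\sqrt{\Var(X)}=\Omega(\sqrt m)$ and hence $\inf\TV\ge\Omega(\Delta\sqrt m)=\Omega(\eps)$.

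The fourth-moment and mean-absolute-deviation estimate is the part I expect to require the most care, both because one must control the moments of sums of truncated geometric distributions uniformly over the $\Theta(\Delta)$-window of $\phi$ around $\phi_0$, and because the sign-alignment in Theorem~\ref{thm:totalVariationLowerBoundExponentialFamily} must be justified quantitatively rather than only in the $\eta'\to\eta$ limit. Finally, the Fano argument above is informative only once $\ln\abs{\family}=\Theta(d)$ exceeds an absolute constant; for the finitely many small values of $d$ below that threshold (where $o(d/\eps^2)=o(1/\eps^2)$ anyway), I would instead run a two-point Le Cam argument on the pair of distributions that differ in all blocks, for which $\KL=\Theta(\eps^2)$ gives $\TV(\distr^n,\distr'^n)\le\sqrt{n\,\KL/2}=o(1)$ and the same $\Omega(\eps)$ lower bound on the risk. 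This disposes of all $d\le m$ and completes the proof.
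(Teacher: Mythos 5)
Your proposal is correct and follows the same high-level blueprint as the paper's proof (Fano's inequality over a Gilbert--Varshamov packing of spread vectors at scale $\Theta(\eps/\sqrt{m})$ around $\phi=1/2$ on a balanced partition, the KL upper bound via Theorem~\ref{thm:exponentialFamiliesProperties}.4 and variance estimates for truncated geometrics, and the TV lower bound via Theorem~\ref{thm:totalVariationLowerBoundExponentialFamily} reduced to a mean-absolute-deviation bound), but it executes the crucial step --- showing $\Exp\abs{X}=\Omega\p{\sqrt{\Var X}}$ for $X=\sum_i w_i\p{T_i-\Exp T_i}$ --- by a genuinely different route. The paper first projects onto a sub-collection of blocks where the two parameter vectors differ in a consistent direction, so that the relevant statistic becomes a plain sum $\sum_{i\in K}y_i$, and then combines Tukey's inequality (Claim~\ref{clm:Tukey}) with an explicit computation showing each truncated geometric has absolute deviation at least $1/\sqrt{2}$, handling the sign factor by proving that $\Exp\b{\sum_i z_i}$ is never an integer. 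You instead keep the signed $d$-dimensional combination and use the Hölder bound $\Exp\abs{X}\ge\p{\Exp X^2}^{3/2}/\p{\Exp X^4}^{1/2}$ together with $\Exp X^4=O\p{\p{\Var X}^2}$ for sums of independent variables with uniformly bounded normalized moments. Your route is more elementary and portable (no projection step, no citation to Tukey, no non-integrality trick, and it transfers to other exponential families with controlled fourth moments), while the paper's yields explicit constants. Two remarks. First, the sign-alignment correction you invoke is not $O(\Delta^2)$ but $O\p{\Delta^2\sum_{i:w_i\neq0}\Var T_i}=O\p{\Delta^2 m}=O(\eps^2)$, since the two thresholds differ by $\p{\vec{\eta}-\vec{\eta}'}^T\p{\nabla\alpha(\vec{\xi}'')-\nabla\alpha(\vec{\xi})}$; this is still $o\p{\Delta\sqrt{m}}=o(\eps)$, so your conclusion stands, but the quantitative claim in the sketch should be corrected when written out. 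Second, your Le Cam two-point patch for constant $d$ is a genuine addition rather than a redundancy: the Fano bound with $\ln\abs{\family}=\Theta(d)$ is vacuous when $d$ is below an absolute constant, and the paper's proof as written does not cover that regime.
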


    \bibliography{ref}

  \appendix

  \section{Proofs of Theorem \ref{thm:exponentialFamiliesProperties}, Lemma \ref{lem:partitionFunctionDecomposition} and Fano's Inequality} \label{sec:app:preliminaries}

\begin{prevproof}{Theorem}{thm:exponentialFamiliesProperties}
  For the parts 1., 2. and 3. we refer the reader to
  \citep{Keener11, NielsenG09}. We present here the proof of 4. because it is
  makes the use of the Taylor's Theorem in the last step comparing to the usual
  expression that appears in the literature.
  \begin{align*}
    \KL \left( \distr_{\vec{\eta}} || \distr_{\vec{\eta}'} \right)
    & = \int p_{\vec{\eta}}(\vecx) \ln \frac{p_{\vec{\eta}} (\vecx) }{p_{\vec{\eta}'} (\vecx)} d \mu(\vec{x}) \\
    & = \int p_{\vec{\eta}}(\vecx) \left( (\vec{\eta} - \vec{\eta}')^T \matT(\vecx) + \alpha(\vec{\eta}') - \alpha(\vec{\eta}) \right) d \mu(\vec{x}) \\
    & = (\vec{\eta} - \vec{\eta}')^T \Exp_{\vecx \sim \distr_{\vec{\eta}}} \left[ \matT(\vecx) \right] + \alpha(\vec{\eta}') - \alpha(\vec{\eta}) \\
    & \overset{\eqref{eq:thm:exponentialProperties:1}}{=} - (\vec{\eta}' - \vec{\eta})^T \nabla \alpha(\vec{\eta}) + \alpha(\vec{\eta}') - \alpha(\vec{\eta}) \\
    & = \left( \vec{\eta}' - \vec{\eta} \right)^T \nabla^2 \alpha(\vec{\xi}) \left( \vec{\eta}' - \vec{\eta} \right)
  \end{align*}
  \noindent where the last step follows from the multidimensional Taylor's
  Theorem for some $\vec{\xi} \in L(\vec{\eta}, \vec{\eta}')$.
\end{prevproof}

\begin{prevproof}{Lemma}{lem:partitionFunctionDecomposition}
    We use the simple but profound one-to-one correspondence between every
  permutation $\sigma \in S_m$ and the vector of numbers
  $(V_1(\sigma, \pi), V_2(\sigma, \pi), \dots, V_m(\sigma, \pi))$, where
  $V_j(\sigma, \pi) \in [0, j - 1]$. According to \cite{Knuth1997} this
  correspondence was first proved by Marshall Hall. Let
  $\Omega_m^l = [l] \times [l + 1] \times \cdots \times [m - 1]$. This
  one-to-one correspondence allows as to write the partition function
  $Z(\vec{\phi})$ in the following way
  \begin{align*}
    Z(\vec{\phi}) & = \sum_{\vec{y} \in \Omega_m^0} \prod_{i = 1}^m \phi_j^{y_j} \\
                  & = \sum_{y_1 \in [0]} \phi_1^{y_1} \left( \sum_{\vec{y} \in \Omega_m^1}
                  \prod_{i = 2}^m \phi_j^{y_j} \right) \\
                  & = \left( \sum_{y_1 \in [0]} \phi_1^{y_1} \right) \left( \sum_{\vec{y} \in \Omega_m^1}
                  \prod_{i = 2}^m \phi_j^{y_j} \right)
  \end{align*}
  \noindent continuing this process recursively the lemma follows.
\end{prevproof}

\section{Omitted Proofs from Section \ref{sec:concentration}} \label{sec:app:concentration}

\begin{prevproof}{Theorem}{thm:totalVariationLowerBoundExponentialFamily}
\begin{align*}
  \TV\p{\distr_{\vec{\eta}}, \distr_{\vec{\eta}'}} & = \sum_{\vecx \in \support} \abs{h(\vecx) \exp\p{\vec{\eta}^T \matT(\vecx) - \alpha(\vec{\eta})} - h(\vecx) \exp\p{\vec{\eta'}^{T} \matT(\vecx) - \alpha(\vec{\eta}')}} \\
  & = \sum_{\vecx \in \support} \sign \p{\distr_{\vec{\eta}}(\vecx) - \distr_{\vec{\eta}'}(\vecx)} h(\vecx) \p{ \exp\p{\vec{\eta}^T \matT(\vecx) - \alpha(\vec{\eta})} - \exp\p{\vec{\eta'}^{T} \matT(\vecx) - \alpha(\vec{\eta}')}} \\
  \intertext{now let $b(\vecx) = \sign \p{\distr_{\vec{\eta}}(\vecx) - \distr_{\vec{\eta}'}(\vecx)} h(\vecx)$, for every
  $\vecx \in \support$ we can define the function $g_{\vecx}(\veta) = b(\vecx) \exp\p{\vec{\eta}^T \matT(\vecx) - \alpha(\vec{\eta})}$ and hence}
  \TV\p{\distr_{\vec{\eta}}, \distr_{\vec{\eta}'}} & = \sum_{\vecx \in \support} g_{\vecx}(\veta) - g_{\vecx}(\veta) \\
  \intertext{additionally we define the function $f(\veta) = \sum_{\vecx \in \support} g_{\vecx}(\veta)$ and hence}
    \TV\p{\distr_{\vec{\eta}}, \distr_{\vec{\eta}'}} & = f(\veta) - f(\veta') \\
  \intertext{now from the multidimensional Mean Value Theorem on $f$ there
  exists $\vec{\xi} \in L(\vec{\eta}, \vec{\eta}')$ such that}
  \TV\p{\distr_{\vec{\eta}}, \distr_{\vec{\eta}'}} & = (\veta - \veta')^T \nabla_{\veta} f(\veta) |_{\veta = \vec{\xi}} \\
  & = \sum_{\vecx \in \support} b(\vecx) \p{\vec{\eta} - \vec{\eta}'}^T \p{\nabla_{\vec{\eta}} \left.\p{ \exp\p{\vec{\eta}^T \matT(\vecx) - \alpha(\vec{\eta})}}\right|_{\vec{\eta} = \vec{\xi}}} \\
    & = \sum_{\vecx \in \support} b(\vecx) \p{\vec{\eta} - \vec{\eta}'}^T \p{\matT(\vecx) - \nabla \alpha(\vec{\xi})} \exp\p{\vec{\xi}^T \matT(\vecx) - \alpha(\vec{\xi})} \\
  & \overset{\eqref{eq:thm:exponentialProperties:1}}{=} \sum_{\vecx \in \support} \sign \p{\distr_{\vec{\eta}}(\vecx) - \distr_{\vec{\eta}'}(\vecx)} \p{\vec{\eta} - \vec{\eta}'}^T \p{\matT(x) - \Exp_{\vecy \sim \distr_{\vec{\xi}}}\b{T(\vecy)}} h(\vecx) \exp\p{\vec{\xi} T(\vecx) - \alpha(\vec{\xi})} \\
  & = \Exp_{\vecx \sim \distr_{\vec{\xi}}} \b{\sign \p{\distr_{\vec{\eta}}(\vecx) - \distr_{\vec{\eta}'}(\vecx)} \p{\vec{\eta} - \vec{\eta}'}^T \p{\matT(x) - \Exp_{\vecy \sim \distr_{\vec{\xi}}}\b{T(\vecy)}}}
\end{align*}
\noindent and the lemma follows.
\end{prevproof}

\section{Omitted Proofs of Section \ref{sec:singleParameter}} \label{sec:app:singleParameter}

\begin{prevproof}{Lemma}{lem:permutationLowerBound}
    Our goal is to apply Fano's Inequality (Theorem \ref{thm:FanoInequality}),
  hence we have to define a family of distributions with an upper bound on their
  KL-divergence and a lower bound on their total variation distance.

    We define the permutations $\pi_1$, $\dots$, $\pi_{\ell}$, with
  $\ell = \floor{\frac{m}{2}}$, using the cycle notation of permutations
  \[ \pi_1 = \p{1~~2}, ~~ \pi_2 = \p{3~~4}, ~~ \cdots, ~~ \pi_i = \p{(2i - 1)~~(2i)}, ~~ \cdots, ~~ \pi_\ell = \p{(m - 1)~~m}. \]
  \noindent For all the distributions that we define we use $\phi = 1/2$. Hence
  our family of distribution is the following
  \[ \family = \set{ \distr_{\phi, \pi_1}, \cdots, \distr_{\phi, \pi_{\ell}} }. \]
  \noindent First we compute the an upper bound on the KL-divergence of any pair
  of the above distributions
  \begin{align*}
    \KL \p{\distr_{\phi, \pi_i} || \distr_{\phi, \pi_j}} & = \sum_{\pi \in S_m} \frac{\phi^{d_K(\pi, \pi_i)}}{Z(\phi)} \ln\p{\frac{\frac{\phi^{d_K(\pi, \pi_i)}}{Z(\phi)}}{\frac{\phi^{d_K(\pi, \pi_j)}}{Z(\phi)}}} \\
    & = \sum_{\pi \in S_m} \frac{\phi^{d_K(\pi, \pi_i)}}{Z(\phi)} \ln\p{\phi^{d_K(\pi, \pi_i) - d_K(\pi, \pi_j)}} \\
    & = \ln\p{\phi} \sum_{\pi \in S_m} \frac{\phi^{d_K(\pi, \pi_i)}}{Z(\phi)} \p{d_K(\pi, \pi_i) - d_K(\pi, \pi_j)} \\
    & = \ln\p{1/\phi} \cdot \Exp_{\pi \sim \distr_{\phi, \pi_i}} \b{d_K(\pi, \pi_j) - d_K(\pi, \pi_i)}
  \end{align*}
  \noindent now because of triangle inequality of the Kendall tau distance we
  have that $d_K(\pi, \pi_j) \le d_K(\pi, \pi_i) + d_K(\pi_i, \pi_j)$ and from
  the definition of $\pi_i$, $\pi_j$ we also get that $d_K(\pi_i, \pi_j) = 2$,
  hence $d_K(\pi, \pi_j) - d_K(\pi, \pi_i) \le 2$ and using also that
  $\phi = 1/2$ we have the following bound
  \begin{equation} \label{eq:permutationLowerBound:KLdivergence}
    \KL \p{\distr_{\phi, \pi_i} || \distr_{\phi, \pi_j}} \le 2 \ln\p{2}.
  \end{equation}

    To lower bound the total variation distance between any two distributions
  in $\family$ we use the following claim proved in \citep{LiuM18}.

  \begin{claim}[Claim 1 of \citep{LiuM18}] \label{clm:permutationLowerBound:TVLowerBound}
    For any $\pi, \pi' \in S_m$ with $\pi \neq \pi'$ and any
    $\phi_1, \phi_2 \in [0, 1 - \gamma]$ we have
    \[ \TV\p{\distr_{\phi_1, \pi}, \distr_{\phi_2, \pi'}} \ge \frac{\gamma}{2}. \]
  \end{claim}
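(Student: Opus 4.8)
The plan is to lower bound $\TV(\distr_{\phi_1,\pi},\distr_{\phi_2,\pi'})$ by the discrepancy of the two distributions on a single, carefully chosen pairwise-comparison event. First I would use the elementary combinatorial fact that, since $\pi \neq \pi'$, there is a pair of items $a,b$ that are \emph{adjacent} in $\pi$---say $a$ immediately precedes $b$---but occur in the opposite order ($b$ before $a$) in $\pi'$. (Indeed, if every pair adjacent in $\pi$ had the same relative order in $\pi'$, transitivity would force $\pi'=\pi$.) I then take the distinguishing event $A=\{\sigma\in S_m : a \text{ appears before } b \text{ in } \sigma\}$, so that $\TV(\distr_{\phi_1,\pi},\distr_{\phi_2,\pi'}) \ge \Prob_{\sigma\sim\distr_{\phi_1,\pi}}(A)-\Prob_{\sigma\sim\distr_{\phi_2,\pi'}}(A)$.

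The key technical step is a bound on the pairwise marginals of a Mallows distribution: if $x$ precedes $y$ in the central ranking $\pi_0$, then $\Prob_{\sigma\sim\distr_{\phi,\pi_0}}(x \text{ before } y) \ge \tfrac{1}{1+\phi}$, with equality exactly when $x,y$ are adjacent in $\pi_0$. I would prove this through the involution $\rho$ that swaps the positions of the items $x$ and $y$ inside a permutation; $\rho$ is a bijection between $\{y \text{ before } x\}$ and $\{x \text{ before } y\}$. The heart of the argument is to show $d_K(\rho(\sigma),\pi_0)\le d_K(\sigma,\pi_0)-1$ for every $\sigma$ with $y$ before $x$, via a case analysis on each third item $c$ according to its position relative to $x$ and $y$ in $\sigma$: the pair $(x,y)$ itself switches from discordant to concordant and contributes $-1$, items lying outside the interval spanned by $x,y$ in $\sigma$ contribute $0$, while an item strictly between them contributes either $0$ or $-2$. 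Summing the identity $d_K(\rho(\sigma),\pi_0)=d_K(\sigma,\pi_0)-1+(\text{nonpositive})$ across the bijection yields $\Prob(y \text{ before } x)\le \phi\,\Prob(x \text{ before } y)$, and combined with $\Prob(x<y)+\Prob(y<x)=1$ this gives the marginal bound; when $x,y$ are adjacent in $\pi_0$ no item can lie between them in $\pi_0$, the distance change is exactly $-1$ for every $\sigma$, and equality holds.

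Applying this lemma finishes the proof. Since $a$ is adjacent to $b$ in $\pi$, the exact case gives $\Prob_{\sigma\sim\distr_{\phi_1,\pi}}(A)=\tfrac{1}{1+\phi_1}$; and since $b$ precedes $a$ in $\pi'$, the event $A$ is an inversion for $\pi'$, so $\Prob_{\sigma\sim\distr_{\phi_2,\pi'}}(A)\le \tfrac{\phi_2}{1+\phi_2}$. Hence $\TV \ge \tfrac{1}{1+\phi_1}-\tfrac{\phi_2}{1+\phi_2}$. Finally, using that $t\mapsto \tfrac{1}{1+t}$ is decreasing and $t\mapsto \tfrac{t}{1+t}$ is increasing, together with $\phi_1,\phi_2\le 1-\gamma$, both terms are controlled at $t=1-\gamma$, giving $\TV \ge \tfrac{1}{2-\gamma}-\tfrac{1-\gamma}{2-\gamma}=\tfrac{\gamma}{2-\gamma}\ge \tfrac{\gamma}{2}$.

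The main obstacle is the pairwise-marginal lemma, and specifically verifying that the swap involution $\rho$ never increases, and in fact strictly decreases by at least one, the Kendall tau distance to the central ranking. This requires the careful bookkeeping of how exchanging $x$ and $y$ flips the discordance status of each pair with a third element, which is where the $\{0,-2\}$ dichotomy for in-between items must be checked. Everything downstream---the choice of the distinguishing adjacent pair and the final monotone substitution $\phi_1,\phi_2\mapsto 1-\gamma$---is elementary.
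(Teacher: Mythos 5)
The paper does not actually prove this claim: it is imported verbatim as Claim~1 of \citep{LiuM18} and used as a black box in the proof of Lemma~\ref{lem:permutationLowerBound}. Your proposal is therefore necessarily a different route --- a self-contained proof --- and it is correct. The reduction to a single pairwise-comparison event is sound: the existence of a pair $a,b$ adjacent in $\pi$ but reversed in $\pi'$ follows from transitivity exactly as you say, and $\TV\p{\distr_{\phi_1,\pi},\distr_{\phi_2,\pi'}} \ge \Prob_{\distr_{\phi_1,\pi}}(A)-\Prob_{\distr_{\phi_2,\pi'}}(A)$ is the definitional bound. The pairwise-marginal lemma is the classical fact about Mallows marginals, and your swap-involution bookkeeping checks out: the pair $(x,y)$ itself contributes $-1$ to $d_K(\cdot,\pi_0)$, a third item $c$ outside the $\sigma$-interval of $x,y$ contributes $0$, and an item strictly between them contributes $0$ if $c$ lies outside the $\pi_0$-interval of $x,y$ and $-2$ if it lies inside, so $d_K(\rho(\sigma),\pi_0)\le d_K(\sigma,\pi_0)-1$ with equality throughout when $x,y$ are $\pi_0$-adjacent; summing over the bijection gives $\Prob(y \text{ before } x)\le \phi\,\Prob(x \text{ before } y)$ and hence the marginal bounds $\tfrac{1}{1+\phi}$ (exact for adjacent pairs) and $\tfrac{\phi}{1+\phi}$. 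The final monotone substitution $\phi_1,\phi_2\mapsto 1-\gamma$ yields $\tfrac{\gamma}{2-\gamma}\ge\tfrac{\gamma}{2}$ as claimed. What your approach buys over the paper's is a fully elementary, citation-free derivation that also makes explicit where the constant $\tfrac{\gamma}{2}$ comes from (in fact you get the slightly stronger $\tfrac{\gamma}{2-\gamma}$); the cost is carrying out the marginal computation that \citep{LiuM18} already supplies.
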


  \noindent Therefore from the above claim we immediately get that for any
  $i, j \in [m]$ it holds that
  \begin{equation} \label{eq:permutationLowerBound:TVLowerBound}
    \TV\p{\distr_{\phi, \pi_i}, \distr_{\phi, \pi_j}} \ge \frac{1}{4}.
  \end{equation}

    We can now apply Theorem \ref{thm:FanoInequality} with $\alpha = 1/4$ and
  $\beta = 2 \ln\p{2}$ and we get
  \[ \risk_n(\family) \ge \frac{1}{8} \left(1 - \frac{n \cdot 2 \ln 2 + \ln 2 }{\ln\p{m} - \ln{2}}\right) \]
  from which we get that if $n = o(\log\p{m})$ then
  $\risk_n(\family) \ge \frac{1}{16}$ hence we cannot learn
  $\distr_{\phi, \pi_0}$ $\eps$-close in total variation distance unless
  $n = O(\log(m))$.
\end{prevproof}

\section{Omitted Proof of Section \ref{sec:blockModel}} \label{sec:app:blockModel}

\begin{prevproof}{Lemma}{lem:truncatedGeometricAndMallows}
    The bijective map can be given as
  $h(\sigma)=(V_1 (\sigma, \pi), \dots ,V_m (\sigma, \pi))$. Based on Lemma
  \ref{lem:partitionFunctionDecomposition}, we know that
  $(V_1 (X, \pi), \dots ,V_m (X, \pi))$ are independent random variables if
  $X \sim \Mallows(\pi, \vec{\phi})$, thus their joint distribution can be
  written as in (\ref{eq:vectorOfTruncatedGeometric}) which is equivalent to the
  definition of Generalized Mallows model. The second part of the claim readily
  follows from the existence of the bijective map $h$ that preserves the
  probability mass.
\end{prevproof}

\begin{prevproof}{Theorem}{thm:centralRankingLearningBlockModel}
  The lower bound comes from the lower bound that is given for single parameter Mallows model in Theorem 3.7 of \citep{CaragiannisPS16}. The proof of upper bound for Mallows Block model follows closely the proof of Theorem 3.6 of \citep{CaragiannisPS16}. Let us assume that we are given i.i.d. samples $\pi_i, \dots, \pi_n$ where $n \ge \frac{1}{2\cdot c} \log \frac{m^2}{\delta}$ from $\distr_{\vec{\phi}, \pi_0, \matB}$ with $c=\min_{i,j\in [m]: \pi_0 (i) < \pi_0 (j)} p_{i,j} - p_{j,i}$ where $p_{i,j}$ is the pairwise marginal for item $i$ and $j$ under $\distr_{\vec{\phi}, \pi_0, \matB}$, i.e. $p_{i,j} = \sum_{\pi \in S_m : \pi(i)< \pi(j) } \distr_{\vec{\phi}, \pi_0, \matB}(\pi ) $. Then let us define a ranking $\hat{\pi}$ such that $\hat{\pi}(i) < \hat{\pi}(j) \Longleftrightarrow n_{i,j} > n_{j,i}$ where $n_{i,j}$ is the number of ranking in the sample for which $\pi_i(i) < \pi_i(j)$. Then, using the union bound, we have
  \[
  \Prob \left( d_K (\pi_0 , \hat{\pi} \right) > 0 ) \le \binom{m}{2} 2 e^{-2 c^2 n} \le m^2 e^{-2c^2 n} = \delta
  \]
  What remains is to show that $c$ is constant. For any $i \in B_{i'}$ and $j\in B_{j'}$, it easy to see that $p_{i,j} - p_{j,i} = \Omega(\big(1-\tfrac{\phi_i+\phi_j}{2}\big)\big(1+\tfrac{\phi_i+\phi_j}{2}\big))$ which concludes the proof.
\end{prevproof}

\begin{lemma} \label{lem:learningKLDivergence:blockModel:technicalLemma}
    Let $\expFamily(\matT, h)$ be an exponential family with sufficient
  statistics $\matT$ and carrier measure $h$. For any
  $\distr_{\vec{\eta}} \in \expFamily(\matT, h)$ let $\Domain_{\vec{\eta}}$ be
  the distribution of the corresponding sufficient statistics, i.e.
  $\Domain_{\vec{\eta}}$ is the distribution of $\matT(\vecx)$ when
  $\vecx \sim \distr_{\vec{\eta}}$. Then for all
  $\vec{\eta}, \vec{\eta}' \in \mathcal{H}_{\matT, h}$
  \[ \TV\p{\distr_{\vec{\eta}}, \distr_{\vec{\eta}'}} = \TV\p{\Domain_{\vec{\eta}}, \Domain_{\vec{\eta}'}} ~~ \mathrm{ and } ~~ \KL\p{\distr_{\vec{\eta}} || \distr_{\vec{\eta}'}} = \KL\p{\Domain_{\vec{\eta}} || \Domain_{\vec{\eta}'}}. \]
\end{lemma}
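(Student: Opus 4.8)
The plan is to realize the pushforward family $\set{\Domain_{\veta}}$ as an exponential family in its own right, sharing the \emph{same} base measure and log-partition function as $\expFamily(\matT, h)$, and then to transport both divergences through this identification. The starting observation is that $p_{\veta}(\vecx)$ depends on $\vecx$ only through $\matT(\vecx)$ and $h(\vecx)$. Concretely, I would introduce the pushforward measure $\nu = \matT_{*}(h \, d\mu)$ on $\R^k$ and verify, by the change-of-variables formula, that for every measurable $A \subseteq \R^k$,
\[ \Domain_{\veta}(A) = \distr_{\veta}\p{\matT^{-1}(A)} = \int_{\matT^{-1}(A)} \exp\p{\veta^T \matT(\vecx) - \alpha(\veta)} h(\vecx) \, d\mu(\vecx) = \int_A \exp\p{\veta^T \vect - \alpha(\veta)} \, d\nu(\vect). \]
Hence $\Domain_{\veta}$ has density $q_{\veta}(\vect) = \exp\p{\veta^T \vect - \alpha(\veta)}$ with respect to $\nu$, and since $\int \exp\p{\veta^T \vect} d\nu(\vect) = \int \exp\p{\veta^T \matT(\vecx)} h(\vecx) d\mu(\vecx) = \exp\p{\alpha(\veta)}$, the family $\set{\Domain_{\veta}}$ is exactly the exponential family with identity sufficient statistic, unit carrier, base measure $\nu$, and the \emph{same} log-partition function $\alpha$.

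For the KL-divergence, I would reuse the computation in the proof of Theorem \ref{thm:exponentialFamiliesProperties}.4, which uses only the exponential form of the densities, to write $\KL\p{\distr_{\veta} || \distr_{\veta'}} = \p{\veta - \veta'}^T \Exp_{\vecx \sim \distr_{\veta}}\b{\matT(\vecx)} + \alpha(\veta') - \alpha(\veta)$. Applying the identical computation to $\set{\Domain_{\veta}}$, whose sufficient statistic is the identity, gives $\KL\p{\Domain_{\veta} || \Domain_{\veta'}} = \p{\veta - \veta'}^T \Exp_{\vect \sim \Domain_{\veta}}\b{\vect} + \alpha(\veta') - \alpha(\veta)$, and these agree because $\Exp_{\vect \sim \Domain_{\veta}}\b{\vect} = \Exp_{\vecx \sim \distr_{\veta}}\b{\matT(\vecx)}$ by the definition of the pushforward. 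For the total variation distance, I would note that the integrand defining $\TV\p{\distr_{\veta}, \distr_{\veta'}} = \tfrac{1}{2}\int \abs{p_{\veta}(\vecx) - p_{\veta'}(\vecx)} \, d\mu(\vecx)$ equals $r(\matT(\vecx)) \, h(\vecx)$, where $r(\vect) = \abs{e^{\veta^T \vect - \alpha(\veta)} - e^{(\veta')^T \vect - \alpha(\veta')}}$; the same change of variables turns this into $\tfrac{1}{2}\int r(\vect) \, d\nu(\vect) = \tfrac{1}{2}\int \abs{q_{\veta}(\vect) - q_{\veta'}(\vect)} \, d\nu(\vect) = \TV\p{\Domain_{\veta}, \Domain_{\veta'}}$.

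The main obstacle is the measure-theoretic step identifying the pushforward density, i.e. checking that $\nu = \matT_{*}(h\,d\mu)$ is well defined and that $\Domain_{\veta}$ has the stated density with respect to it; crucially, $\nu$ does not depend on $\veta$, which is precisely what makes both equalities fall out at once rather than merely the inequalities supplied by the data-processing inequality. Conceptually, the content is exactly the sufficiency of $\matT$: the likelihood ratio $p_{\veta}/p_{\veta'}$ is a function of $\matT$ alone. A minor technical point is that $\matT$ need not be injective, so the identity $q_{\veta}/q_{\veta'} = p_{\veta}/p_{\veta'}$ along fibers of $\matT$ should be read $\nu$-almost everywhere; the pushforward construction supplies exactly this, so no separate argument about the fibers is needed.
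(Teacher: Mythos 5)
Your proof is correct, and its core mechanism is the same as the paper's: the likelihood ratio $p_{\veta}/p_{\veta'}$ is a function of $\matT$ alone, so aggregating the sum (or integral) over the fibers of $\matT$ converts both divergences into the corresponding divergences of the pushforward distributions, whose common base measure does not depend on $\veta$. The difference is one of formulation and generality. The paper works only in the discrete setting, grouping the sum over $\vecx$ by the value $\vect = \matT(\vecx)$ and writing the mass of $\Domain_{\veta}$ at $\vect$ as $Q\b{\vect}\, h(\vecx) \exp\p{\veta^T\vect - \alpha(\veta)}$ with $Q\b{\vect}$ the number of preimages; as written this tacitly treats $h$ as constant on each fiber, and the continuous case is only asserted to follow by the same arguments. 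Your pushforward measure $\nu = \matT_{*}(h\,d\mu)$ handles a non-constant carrier $h$ and the continuous and discrete cases uniformly, and it isolates the one fact both arguments hinge on, namely that $\nu$ is independent of $\veta$ (which is why you get equalities rather than the data-processing inequalities). The only step worth making explicit in your TV computation is that, since $h \ge 0$, one has $\abs{p_{\veta}(\vecx) - p_{\veta'}(\vecx)} = h(\vecx)\, r(\matT(\vecx))$ pointwise, so the absolute value commutes with the aggregation along fibers with no sign bookkeeping; you note this implicitly and it is correct.
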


\begin{prevproof}{Lemma}{lem:learningKLDivergence:blockModel:technicalLemma}
    We prove the statement for discrete distributions since this is the
  version of the lemma that we are going to use later in this section but with
  the same arguments we can prove the lemma for continuous distributions too.
  Let $R$ be the support of the exponential family $\expFamily(\matT, h)$,
  $R_{\matT} = \set{\vect \mid \exists \vecx \in R ~:~ \matT(\vecx) = \vect}$ and let
  also
  \[ Q\b{\vec{t}} = \sum_{\vec{x} \in R} \chara \set{\matT(\vec{x}) = \vect}. \]
  We have that
  \begin{align*}
    \KL\p{\distr_{\vec{\eta}} || \distr_{\vec{\eta}'}} & = \sum_{\vec{x} \in R} p_{\vec{\eta}}\p{\vec{x}} \ln \p{\frac{p_{\vec{\eta}}\p{\vecx}}{p_{\vec{\eta}'}\p{\vecx}}} \\
    & = \sum_{\vec{x} \in R} h(\vec{x}) \exp\p{\vec{\eta}^T \matT(\vec{x}) - \alpha\p{\vec{\eta}}} \ln \p{\frac{p_{\vec{\eta}}\p{\vecx}}{p_{\vec{\eta}'}\p{\vecx}}} \\
    & = \sum_{\vect \in R_{\matT}} \p{\sum_{\vecx : \matT\p{\vecx} = \vect} h(\vec{x}) \exp\p{\vec{\eta}^T \matT(\vec{x}) - \alpha\p{\vec{\eta}}}} \ln \p{\frac{p_{\vec{\eta}}\p{\vecx}}{p_{\vec{\eta}'}\p{\vecx}}} \\
    & = \sum_{\vect \in R_{\matT}} \p{Q\b{\vect} h(\vec{x}) \exp\p{\vec{\eta}^T \matT(\vec{x}) - \alpha\p{\vec{\eta}}}} \ln \p{\frac{Q\b{\vect} p_{\vec{\eta}}\p{\vecx}}{Q\b{\vect} p_{\vec{\eta}'}\p{\vecx}}} \\
    & = \sum_{\vect \in R_{\matT}} d_{\vec{\eta}}(\vect) \ln \p{\frac{d_{\vec{\eta}}\p{\vect}}{d_{\vec{\eta}'}\p{\vect}}} = \KL\p{\Domain_{\vec{\eta}} || \Domain_{\vec{\eta}'}}.
  \end{align*}

  \begin{align*}
    \TV\p{\distr_{\vec{\eta}} , \distr_{\vec{\eta}'}} & = \frac{1}{2} \sum_{\vec{x} \in R} \abs{p_{\vec{\eta}}\p{\vec{x}} - p_{\vec{\eta}'}\p{\vecx}} \\
    & = \frac{1}{2} \sum_{\vec{x} \in R} \abs{h(\vec{x}) \exp\p{\vec{\eta}^T \matT(\vec{x}) - \alpha\p{\vec{\eta}}} - h(\vec{x}) \exp\p{\vec{\eta}'^T \matT(\vec{x}) - \alpha\p{\vec{\eta}}}} \\
    & = \frac{1}{2} \sum_{\vect \in R_{\matT}} \sum_{\vecx : \matT\p{\vecx} = \vect} \abs{h(\vec{x}) \exp\p{\vec{\eta}^T \matT(\vec{x}) - \alpha\p{\vec{\eta}}} - h(\vec{x}) \exp\p{\vec{\eta}'^T \matT(\vec{x}) - \alpha\p{\vec{\eta}}}} \\
    & = \frac{1}{2} \sum_{\vect \in R_{\matT}} Q\b{\vect} \abs{h(\vec{x}) \exp\p{\vec{\eta}^T \matT(\vec{x}) - \alpha\p{\vec{\eta}}} - h(\vec{x}) \exp\p{\vec{\eta}'^T \matT(\vec{x}) - \alpha\p{\vec{\eta}}}} \\
    & = \frac{1}{2} \sum_{\vect \in R_{\matT}} \abs{Q\b{\vect} h(\vec{x}) \exp\p{\vec{\eta}^T \matT(\vec{x}) - \alpha\p{\vec{\eta}}} - Q\b{\vect} h(\vec{x}) \exp\p{\vec{\eta}'^T \matT(\vec{x}) - \alpha\p{\vec{\eta}}}} \\
    & = \frac{1}{2} \sum_{\vect \in R_{\matT}} \abs{d_{\vec{\eta}}\p{\vect} - d_{\vec{\eta}'}\p{\vect}} = \TV\p{\Domain_{\vec{\eta}},
    \Domain_{\vec{\eta}'}}.
  \end{align*}
\end{prevproof}

\begin{prevproof}{Lemma}{lem:learningKLDivergence:blockModel:UpperBound}
    First observe that from Theorem \ref{thm:centralRankingLearningBlockModel}
  we can use $O(\log(m/\delta))$ samples to learn the central ranking $\pi_0$.
  Once we know $\pi_0$ we use Lemma \ref{lem:truncatedGeometricAndMallows} and
  hence we can assume that our samples are coming from the distribution
  $\distr_{\vec{\phi}^{\star}, \matB}$ and we want to learn
  $\distr_{\vec{\phi}^{\star}, \matB}$ in KL-divergence. But applying Lemma
  \ref{lem:learningKLDivergence:blockModel:technicalLemma} implies that we can
  assume sample access to the distribution $\Domain_{\vec{\phi}^{\star}, \matB}$
  of the sufficient statistics of $\distr_{\vec{\phi}^{\star}, \matB}$ and we
  want to learn $\Domain_{\vec{\phi}^{\star}, \matB}$ in KL-divergence. From the
  definition of $\distr_{\vec{\phi}^{\star}, \matB}$ we have that the sufficient
  statistics of $\distr_{\vec{\phi}^{\star}, \matB}$ is the vector
  $\matT\p{\vec{z}}$ with $T_i(\vec{z}) = \sum_{j \in B_i} z_j$. Let also
  $\Domain^i_{\phi_i^{\star}, \matB}$ be the distribution of $T_i(\vec{z})$,
  since the coordinates of $\matT\p{\vecx}$ are all independent we get
  \begin{equation} \label{eq:learningKLdivergence:blockModel:KLadditivity}
    \KL\p{\Domain_{\vec{\phi}, \matB} || \Domain_{\vec{\phi}', \matB}} = \sum_{i \in [d]} \KL\p{\Domain^i_{\phi_i, \matB} || \Domain^i_{\phi'_i, \matB}}
  \end{equation}
  \noindent hence it suffices to learn every $\Domain^i_{\phi_i^{\star}, \matB}$
  in KL-divergence with accuracy $\eps/d$ and then we would have learned
  $\distr_{\vec{\phi}^{\star}, \matB}$ in KL-divergence with accuracy $\eps$.

    From the above discussion we have that $\Domain^i_{\phi_i^{\star}, \matB}$ is
  a distribution in an single parameter exponential with natural parameter
  $\theta_i = \ln\p{\phi_i}$, let $\alpha_i$ be the logarithmic partition
  function of the family of $\Domain^i_{\phi_i, \matB}$. From
  \eqref{eq:thm:exponentialProperties:4} we have that
  \[ \KL\p{\Domain^i_{\phi'_i, \matB} || \Domain^i_{\phi_i^{\star}, \matB}} = - \p{\theta'_i - \theta_i^{\star}} \dot{\alpha}_i\p{\theta_i^{\star}} + \alpha_i\p{\theta'_i} - \alpha_i\p{\theta_i^{\star}}. \]
  \noindent We define
  \[ f(x) = - \p{x - \theta_i^{\star}} \dot{\alpha}_i\p{\theta_i^{\star}} + \alpha_i\p{x} - \alpha_i\p{\theta_i^{\star}} \]
  \noindent and we have that
  \[ f'(x) = - \dot{\alpha}_i\p{\theta_i^{\star}} + \dot{\alpha}_i\p{x} \]
  \[ f''(x) = \ddot{\alpha}_i\p{x} \ge 0. \]
  \noindent Hence $f$ is a convex function with minimum value at
  $x = \theta_i^{\star}$. Hence $f$ is a decreasing function for
  $x \le \theta_i^{\star}$ and an increasing function for
  $x \ge \theta_i^{\star}$.

    Observe also that by the definition of $\Domain^i_{\phi_i^{\star}, \matB}$
  and the description of the truncated geometric distribution as discussed in
  Section \ref{sec:model} it holds that
  $\alpha_i(\theta_i) = \sum_{j \in B_i} \ln\p{Z_j(\phi_i)}$. But it is easy to
  see from the definition of $Z_j$ that $Z_j(\phi_i) \ge 1$ and hence
  $\alpha_i(\theta_i) \ge 0$ for all $\theta_i \in (-\infty, 0]$. This
  observation implies $\lim_{x \to - \infty} f(x) = + \infty$ which can be also
  written as
  \begin{equation} \label{eq:learningKLdivergence:blockModel:KLinfity:1}
    \lim_{\phi'_i \to 0} \KL\p{\Domain^i_{\phi'_i, \matB} || \Domain^i_{\phi_i^{\star}, \matB}} = + \infty
  \end{equation}
  \noindent for $\phi_i^{\star} > 0$. The truncated geometric distribution
  $\tgeometric(\phi, k)$ satisfies the symmetry property
  $\tgeometric(1/\phi, k) = k - \tgeometric(\phi, k)$. From this symmetry
  together with \eqref{eq:learningKLdivergence:blockModel:KLinfity:1} we get that
  \begin{equation} \label{eq:learningKLdivergence:blockModel:KLinfity:2}
    \lim_{\phi'_i \to \infty} \KL\p{\Domain^i_{\phi'_i, \matB} || \Domain^i_{\phi_i^{\star}, \matB}} = + \infty
  \end{equation}
  \noindent for $\phi_i^{\star} < + \infty$. We can now define the following set
  \begin{align*}
    Q_i & = \set{\theta \in (-\infty, \infty) \mid \KL\p{\Domain^i_{\phi_i^-, \matB} || \Domain^i_{\phi_i^{\star}, \matB}} \le \eps/d}.
  \end{align*}
  \noindent Because of the convexity of $f$ we know that $Q_i$ is an interval
  such that $\theta_i^{\star} \in Q_i$. From $Q_i$ we can define the following
  parameters
  \begin{align*}
    \theta_i^- = \inf Q_i ~~~ \text{ and } ~~~ \theta_i^+ & = \sup Q_i.
  \end{align*}
  \noindent Observe that because of
  \eqref{eq:learningKLdivergence:blockModel:KLinfity:1} and
  \eqref{eq:learningKLdivergence:blockModel:KLinfity:2} $Q_i$ is a closed
  interval and hence $Q_i = \b{\theta_i^-, \theta_i^+}$ where $\theta_i^-$,
  $\theta_i^+$ are finite numbers not equal to $\pm \infty$. Let
  $\phi_i^- = \theta_i^-$ and $\phi_i^+ = \theta_i^+$. Because of the convexity
  of $f$ and \eqref{eq:learningKLdivergence:blockModel:KLinfity:1},
  \eqref{eq:learningKLdivergence:blockModel:KLinfity:2} we can easily get that
  \begin{equation} \label{eq:learningKLdivergence:blockModel:KLtight:-}
    \KL\p{\Domain^i_{\phi_i^-, \matB} || \Domain^i_{\phi_i^{\star}, \matB}} = \eps/d,
  \end{equation}
  \begin{equation} \label{eq:learningKLdivergence:blockModel:KLtight:+}
    \KL\p{\Domain^i_{\phi_i^+, \matB} || \Domain^i_{\phi_i^{\star}, \matB}} = \eps/d.
  \end{equation}

    Now we apply the same procedure as in the beginning of the proof of Theorem
  \ref{thm:MallowsBlockParameterLearning} and we define the estimator
  $\theta\p{r\p{\vec{\pi}}}$ that satisfies
  \[ \Prob_{\vec{\pi} \sim \distr^n_{\vec{\theta}^{\star}, \pi_0, \matB}} \left( \theta(r(\vec{\pi})) \notin [\theta_i^-, \theta_i^+] \right) \le 2 \exp\left( - \min_{\theta \in \{\theta_i^-, \theta_i^+\}} \KL\left(\distr^i_{\theta} || \distr^i_{\theta_i^{\star}}\right) n \right) \]
  \noindent using \eqref{eq:learningKLdivergence:blockModel:KLtight:-} and \eqref{eq:learningKLdivergence:blockModel:KLtight:+} and the fact that
  $Q_i = \b{\theta_i^-, \theta_i^+}$ this implies
  \[ \Prob_{\vec{\pi} \sim \distr^n_{\vec{\theta}^{\star}, \pi_0, \matB}} \left( \theta(r(\vec{\pi})) \notin Q_i \right) \le 2 \exp\left( - \frac{\eps}{d} n \right). \]
  \noindent Let now $\phi\p{r\p{\vec{\pi}}} = \exp\p{\theta\p{r\p{\vec{\pi}}}}$,
  because of the definition of $Q_i$ we get that
  \[ \Prob_{\vec{\pi} \sim \distr^n_{\vec{\theta}^{\star}, \pi_0, \matB}} \left( \KL\p{\Domain^i_{\phi\p{r\p{\vec{\pi}}}, \matB} || \Domain^i_{\phi_i^{\star}, \matB}} \ge \frac{\eps}{d} \right) \le 2 \exp\left( - \frac{\eps}{d} n \right). \]
  \noindent If we now apply a union bound over all $i \in [d]$ and
  \eqref{eq:learningKLdivergence:blockModel:KLadditivity} we get that
  \[ \Prob_{\vec{\pi} \sim \distr^n_{\vec{\theta}^{\star}, \pi_0, \matB}} \left( \KL\p{\distr_{\vec{\phi}\p{r\p{\vec{\pi}}}, \pi_0, \matB} || \distr_{\vec{\phi}^{\star}, \pi_0, \matB}} \ge \eps \right) \le 2 d \exp\left( - \frac{\eps}{d} n \right). \]
  \noindent Hence for $n \ge \frac{d}{\eps} \ln\p{2d/\delta}$ then
  \[ \Prob_{\vec{\pi} \sim \distr^n_{\vec{\theta}^{\star}, \pi_0, \matB}} \left( \KL\p{\distr_{\vec{\phi}\p{r\p{\vec{\pi}}}, \pi_0, \matB} || \distr_{\vec{\phi}^{\star}, \pi_0, \matB}} \ge \eps \right) \le \delta \]
  \noindent and the lemma follows.
\end{prevproof}

\begin{prevproof}{Lemma}{lem:learningKLDivergence:blockModel:LowerBound}
    Our goal is to apply Fano's Inequality (Theorem \ref{thm:FanoInequality}),
  hence we have to define a family of distributions with an upper bound on their
  KL-divergence and a lower bound on their total variation distance.

    We fix a partition $\matB$ of $[m]$ in equal parts, i.e.
  $\abs{B_i} = m/d$ for all $i \in [d]$. We define the following set of parameters
  $\vec{\phi}$
  \[ \mathcal{G} = \set{\vec{\phi} \mid \phi_i \in \set{\frac{1}{2}, \frac{1}{2} - c \frac{\eps}{\sqrt{m}}}} \]
  \noindent where $c$ is going to be determined later. Based on the Gilbert-Varshamov
  bound we have that there exists a binary code with at least $2^{d/8}$ codewords with
  minimum Hamming distance at least $d/8$. Let $Q$ be such a code, for each codeword
  $q \in Q$ we define vector $\vec{\phi}^{(q)}$ such that
  \[ \phi^{(q)}_i = \left\{ \begin{split}
                              \frac{1}{2} - c \frac{\eps}{\sqrt{m}} & ~~~ \text{if} ~~ q_i = 0 \\
                              \frac{1}{2} ~~~~~~~~~~~~              & ~~~ \text{if} ~~ q_i = 1
                            \end{split} \right.. \]
  \noindent Let $\mathcal{G}' = \set{\vec{\phi}^{(q)} \mid q \in Q}$ and $\pi_0$ be the
  identity permutation, we define the following set of distributions
  \[ \family = \set{\distr_{\vec{\phi}, \pi_0, \matB} \mid \vec{\phi} \in \mathcal{G}'}. \]
  \noindent Because of Lemma \ref{lem:truncatedGeometricAndMallows} we can focus for
  the rest of the proof in the distribution $\distr_{\vec{\phi}, \matB}$. But as we
  have explained the distribution $\distr_{\vec{\phi}, \matB}$ is an $m$ dimensional
  distribution where the $i$th coordinate follows the distribution
  $\tgeometric\p{\phi_i, i - 1}$. If we take any
  $\distr_{\vec{\phi}, \matB},  \distr_{\vec{\phi}', \matB} \in \family$ then by the
  definition of $\family$ we have that
  $\abs{\phi_i - \phi'_i} \le c \frac{\eps}{\sqrt{m}}$ and $\phi_i, \phi'_i \ge 1/4$.
  We can therefore apply \eqref{eq:thm:exponentialProperties:4} and
  \eqref{eq:thm:exponentialProperties:2} to get that for some parameters
  $\psi_i \in [\phi_i, \phi_i'] \cup [\phi'_i, \phi_i]$
  \begin{align*}
    \KL\p{\distr_{\vec{\phi}, \pi_0, \matB} || \distr_{\vec{\phi'}, \pi_0, \matB}} & =
    \sum_{j \in [m]} \p{\ln\p{\phi_j} - \ln\p{\phi'_j}}^2 \Var_{z \sim \tgeometric\p{\psi_j, j - 1}} \b{z} \\
    \intertext{but applying the Lemma \ref{lem:expectationVarianceComputation} and the
    Mean Value Theorem we get that}
    \KL\p{\distr_{\vec{\phi}, \pi_0, \matB} || \distr_{\vec{\phi'}, \pi_0, \matB}} & \le \sum_{j \in [m]} \p{\phi_j - \phi'_j}^2 \frac{1}{\psi_j^2} \frac{\psi_j}{\p{1 - \psi_j}^2}
  \end{align*}
  \noindent but we know that $\psi_j \in [1/4, 1/2]$ and
  $\abs{\phi_i - \phi'_i} \le c \frac{\eps}{\sqrt{m}}$ and therefore
  \begin{align} \label{eq:mainLowerBound:KLUpperBound}
    \KL\p{\distr_{\vec{\phi}, \pi_0, \matB} || \distr_{\vec{\phi'}, \pi_0, \matB}} & \le 32 c^2 \sum_{j \in [m]} \frac{\eps^2}{m} \le 32 c^2 \cdot \eps^2.
  \end{align}

    We now lower bound the total variation distance between any two distributions in
  $\family$. Because of the definition of $\family$ we have that for any
  $\vec{\phi}, \vec{\phi'} \in \mathcal{G}$ they differ in at least $d/8$ coordinates.
  Hence there are at least $d/8$ different $i \in [d]$ such that
  $\phi_i = \frac{1}{2}$ and $\phi'_i = \frac{1}{2} - c \frac{\eps}{\sqrt{m}}$ or
  $\phi_i = \frac{1}{2} - c \frac{\eps}{\sqrt{m}}$ and $\phi'_i = \frac{1}{2}$.
  Therefore for at least $d/16$ of those coordinates we will have that also that
  all $\phi_i$'s are the same and all $\phi'_i$'s are the same. Let $A$ be this set of
  coordinates of $\vec{\phi}$ excluding the coordinates $i \le 4$,
  we define $K = \cup_{a \in A} B_a$ and $k = \abs{K}$ From the definition of $\matB$ we
  have that $k = \frac{d}{16} \frac{m}{d}$. Without loss of generality we assume that
  $\phi \triangleq \phi_i = \frac{1}{2}$ and
  $\phi' \triangleq \phi'_i = \frac{1}{2} - c \frac{\eps}{\sqrt{m}}$. Now we fix
  $\vec{\phi}, \vec{\phi'} \in \mathcal{G}$ and we define $\mathcal{T}_{\vec{\phi}}$ to
  be a copy of the distribution $\distr_{\vec{\phi}, \matB}$ where we keep only the
  coordinates in $K$ and $\mathcal{T}_{\vec{\phi}'}$ to be a copy of the distribution
  $\distr_{\vec{\phi}', \matB}$ where we keep only the coordinates in $K$. Because of
  the definition of $\distr_{\vec{\phi}, \matB}$ we have that
  $\mathcal{T}_{\vec{\phi}}$ is a distribution over vectors $\p{y_1, \dots, y_k}$
  where the all the $y_i$'s are independent and
  $y_i \sim \tgeometric\p{\phi, k_i}$ for some $k_i \in K$. The same way we have that
  $\mathcal{T}_{\vec{\phi}'}$ is a distribution over vectors $\p{y'_1, \dots, y'_k}$
  where the all the $y'_i$'s are independent and
  $y'_i \sim \tgeometric\p{\phi', k_i}$ for some $k_i \in K$.

    From the definition of total variation distance we have that
  \[ \TV\p{\distr_{\vec{\phi}, \matB}, \distr_{\vec{\phi}', \matB}} \ge \TV\p{\mathcal{T}_{\vec{\phi}}, \mathcal{T}_{\vec{\phi}'}}. \]
  \noindent Also we define $T_{\phi}$ to be the distribution of
  $\sum_{i \in [k]} y_i$, where $\p{y_1, \dots, y_k} \sim \mathcal{T}_{\vec{\phi}}$
  and $T_{\phi}'$ to be the distribution of  $\sum_{i \in [k]} y'_i$, where
  $\p{y'_1, \dots, y'_k} \sim \mathcal{T}_{\vec{\phi}'}$. We have that
  \[ \TV\p{\mathcal{T}_{\vec{\phi}}, \mathcal{T}_{\vec{\phi}'}} \ge \TV\p{T_{\phi}, T_{\phi'}} \]
  \noindent and hence
  \[ \TV\p{\distr_{\vec{\phi}, \matB}, \distr_{\vec{\phi}', \matB}} \ge \TV\p{T_{\phi}, T_{\phi'}}. \]

  \noindent It is easy to see now that $T_{\phi}$ is a member of a single
  parameter exponential family with natural parameter $\theta = \ln\p{\phi}$. We
  prove the following claim.

    We now want to apply Theorem
  \ref{thm:totalVariationLowerBoundExponentialFamily} to lower bound the
  quantity $\TV\p{T_{\phi}, T_{\phi'}}$. By the definition of $T_{\phi}$, the
  sufficient statistics of $T_{\phi}$ is $\sum_{i \in [k]} y_i$. Hence let
  $\theta = \ln\p{\phi}$, $\theta' = \ln\p{\phi'}$ and since $\theta > \theta'$
  by the definition of $\phi$, $\phi'$ we have that
  \begin{equation} \label{eq:totalVariationToAbsoluteDeviationPre}
    \TV\p{T_{\phi}, T_{\phi'}} = \Exp_{\vecy
        }\b{\sign\p{T_{\phi}(\vecy) - T_{\phi'}(\vecy)} \p{\sum_{i \in [k]} y_i - \Exp_{\vecz
        }\b{\sum_{i \in [k]} z_i}}} \p{\theta - \theta'}
  \end{equation}
  \noindent where $y_i \sim \tgeometric\p{\psi, k_i - 1}$ and independently
  $z_i \sim \tgeometric\p{\psi, k_i - 1}$. Now from the proof of Theorem
  \ref{thm:totalVariationLowerBoundExponentialFamily} in Section
  \ref{sec:app:concentration}, we have that for every $\vecy$, the sign of
  $\p{\sum_{i \in [k]} y_i - \Exp_{\vecz}\b{\sum_{i \in [k]} z_i}}$ is equal to the
  sign of $\left. \frac{d T_{x}(\vecy)}{d x} \right|_{x = \psi}$. Hence if
  $\left. \frac{d T_{x}(\vecy)}{d x} \right|_{x = \phi} \neq 0$, then from the
  definition of $\phi'$ there exists an $\eps_0 > 0$ such that for every
  $\eps \le \eps_0$ it holds that
  $\left. \frac{d T_{x}(\vecy)}{d x} \right|_{x = \psi}$ does not change sign for
  all $\psi \in [\phi, \phi']$. In this case we have that
  \begin{align} \label{eq:signToAbsolute}
    \sign\p{T_{\phi}(\vecy) - T_{\phi'}(\vecy)} \p{\sum_{i \in [k]} y_i - \Exp_{\vecz}\b{\sum_{i \in [k]} z_i}} = \abs{\sum_{i \in [k]} y_i - \Exp_{\vecz}\b{\sum_{i \in [k]} z_i}}.
  \end{align}
  \noindent To be able to use \ref{eq:signToAbsolute}
  we need to prove that $\left. \frac{d T_{x}(\vecy)}{d x} \right|_{x = \phi} \neq 0$ for every $\vecy$, which is equivalent with
  \[ \sum_{i \in [k]} y_i \neq \Exp_{\vecz}\b{\sum_{i \in [k]} z_i} \]
  where $z_i \sim \tgeometric\p{\phi, k_i - 1}$. We prove this by showing that $\Exp_{\vecz}\b{\sum_{i \in [k]} z_i}$ is not an integer. From Lemma \ref{lem:expectationVarianceComputation} and the fact that $\phi = 1/2$ we have that
  \begin{align} \label{eq:expectationNonInteger}
    \Exp_{\vecz}\b{\sum_{i \in [k]} z_i} = k - \sum_{i \in [k]} k_i \frac{1}{2^{k_i} - 1}
  \end{align}
  \noindent and but the choice of $T_{\phi}$, we have that $k_i \ge 5$ and hence $0 < \sum_{i \in [k]} k_i\frac{1}{2^{k_i} - 1} \le 2 \sum_{i = 5}^{\infty} i \frac{1}{2^i - 1} = 3/4$,
  which implies
  \begin{align} \label{eq:expectationNonIntegerBound}
    \Exp_{\vecz}\b{\sum_{i \in [k]} z_i} \in (k, k + 3/4].
  \end{align}
  \noindent Therefore as we described above it follows that for all $\vecx$, $\vecy$
  it holds that $\left. \frac{d T_{x}(\vecy)}{d x} \right|_{x = \phi} \neq 0$ and hence
  by \eqref{eq:signToAbsolute} we have that
  \begin{equation} \label{eq:totalVariationToAbsoluteDeviation}
    \TV\p{T_{\phi}, T_{\phi'}} = \Exp_{y_i \sim \tgeometric\p{\psi, k_i - 1}}\b{\abs{\sum_{i \in [k]} y_i - \Exp_{z_i \sim \tgeometric\p{\psi, k_i - 1}}\b{\sum_{i \in [k]} z_i}}} \p{\theta - \theta'}
  \end{equation}
  \noindent where $\psi \in [\phi', \phi]$. We now use the following technical claim
  which was first presented in \cite{Tukey1946}.

  \begin{claim}[\citep{Tukey1946}] \label{clm:Tukey}
      For any set $x_1, \dots, x_n$ of independent random variables it holds that
    \[ \Exp\b{\abs{\sum_{i = 1}^n x_i - \sum_{i = 1}^n \Exp\b{x_i}}} \ge \frac{1}{2 \sqrt{2 n}} \sum_{i = 1}^n \Exp\b{\abs{x_i - \Exp\b{x_i}}}. \]
  \end{claim}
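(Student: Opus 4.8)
The plan is to prove the bound by \emph{symmetrization} followed by a Rademacher randomization argument combined with the sharp Khintchine inequality. Write $Y_i = x_i - \Exp\b{x_i}$ for the centered variables, $S = \sum_{i=1}^n Y_i$, and $a_i = \Exp\abs{Y_i}$, so that the claim becomes $\Exp\abs{S} \ge \frac{1}{2\sqrt{2n}}\sum_{i=1}^n a_i$. Since only first absolute moments appear, I would carry the whole argument through $\ell_1$ quantities and avoid second moments, so that no variance assumption is ever needed.

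First I would symmetrize. Let $Y_1', \dots, Y_n'$ be independent copies of the $Y_i$ (independent of everything), set $W_i = Y_i - Y_i'$ and $S^{\mathrm{sym}} = \sum_i W_i = S - S'$ with $S' = \sum_i Y_i'$; the $W_i$ are independent and each symmetric. Because $S'$ is an independent copy of $S$, the triangle inequality gives $\Exp\abs{S^{\mathrm{sym}}} \le \Exp\abs{S} + \Exp\abs{S'} = 2\Exp\abs{S}$, hence $\Exp\abs{S} \ge \tfrac12 \Exp\abs{S^{\mathrm{sym}}}$. Moreover, conditioning on $Y_i$ and applying Jensen's inequality to the convex map $t \mapsto \abs{Y_i - t}$ together with $\Exp\b{Y_i'} = 0$ yields the comparison $\Exp\abs{W_i} = \Exp\abs{Y_i - Y_i'} \ge \Exp\abs{Y_i} = a_i$.

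Next I would exploit symmetry. With i.i.d.\ Rademacher signs $\eps_1, \dots, \eps_n$ independent of the $W_i$, the vector $(W_1, \dots, W_n)$ has the same law as $(\eps_1 W_1, \dots, \eps_n W_n)$, so $\Exp\abs{S^{\mathrm{sym}}} = \Exp_{W}\Exp_{\eps}\abs{\sum_i \eps_i W_i}$. For each fixed realization $w$, the sharp Khintchine inequality (with Szarek's optimal constant $1/\sqrt 2$) gives $\Exp_{\eps}\abs{\sum_i \eps_i w_i} \ge \tfrac{1}{\sqrt 2}\p{\sum_i w_i^2}^{1/2}$, and the elementary comparison $\p{\sum_i w_i^2}^{1/2} \ge \tfrac{1}{\sqrt n}\sum_i \abs{w_i}$ (Cauchy--Schwarz) passes to the $\ell_1$ norm. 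Taking expectations over $W$ and using $\Exp\abs{W_i} \ge a_i$ then yields
\[ \Exp\abs{S^{\mathrm{sym}}} \ge \frac{1}{\sqrt 2}\,\Exp_W\b{\p{\sum_i W_i^2}^{1/2}} \ge \frac{1}{\sqrt{2n}}\sum_{i=1}^n \Exp\abs{W_i} \ge \frac{1}{\sqrt{2n}}\sum_{i=1}^n a_i, \]
and combining with $\Exp\abs{S} \ge \tfrac12\Exp\abs{S^{\mathrm{sym}}}$ produces exactly the factor $\tfrac12\cdot\tfrac{1}{\sqrt 2}\cdot\tfrac{1}{\sqrt n} = \tfrac{1}{2\sqrt{2n}}$.

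The hard part — and the only nonelementary ingredient — is securing the constant $1/\sqrt 2$ in Khintchine's inequality; this is its optimal value (attained at $n=2$ with equal weights), for which I would invoke Szarek's theorem. That the three other reductions (symmetrization, the norm comparison, and the Jensen step $\Exp\abs{W_i}\ge a_i$) each contribute their stated factor with no further loss is precisely what makes the target constant $1/(2\sqrt{2n})$ pin down the use of the \emph{sharp} Khintchine constant: the fully elementary fourth-moment proof gives only $1/\sqrt 3$ and hence the weaker $\tfrac{1}{2\sqrt{3n}}\sum_i a_i$.
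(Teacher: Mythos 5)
Your proof is correct, and it takes a genuinely different route from the paper's. The paper treats the claim essentially as a citation: it invokes Tukey's 1946 inequality, which is stated for independent random variables with zero \emph{median}, passes from medians to means via the symmetrization argument of Birnbaum and Zuckerman, and then lower-bounds the resulting constant $\frac{1}{2n}\frac{n!!}{(n-1)!!}$ by $\frac{1}{2\sqrt{2n}}$ using asymptotics of the double factorial. You instead give an essentially self-contained argument: standard symmetrization ($\Exp\abs{S}\ge\tfrac12\Exp\abs{S-S'}$ together with the Jensen comparison $\Exp\abs{Y_i-Y_i'}\ge\Exp\abs{Y_i}$), Rademacher randomization of the symmetric differences, the sharp $L^1$--$L^2$ Khintchine inequality with Szarek's constant $1/\sqrt{2}$, and Cauchy--Schwarz to pass from the $\ell_2$ norm of the coefficients to the $\ell_1$ norm; the factors multiply to exactly $\tfrac{1}{2\sqrt{2n}}$, and no moment beyond the first absolute moment is ever required. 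Your observation that the \emph{sharp} Khintchine constant is indispensable is also accurate: the elementary fourth-moment proof yields only $1/\sqrt{3}$ and hence the weaker constant $\tfrac{1}{2\sqrt{3n}}$. What the paper's route buys is a marginally better asymptotic constant (since $\frac{n!!}{(n-1)!!}\sim\sqrt{\pi n/2}$, Tukey's bound behaves like $\tfrac12\sqrt{\pi/(2n)}$ rather than $\tfrac12\sqrt{1/(2n)}$) at the cost of leaning on two external references whose details are not reproduced; what yours buys is a transparent, modern proof whose only nonelementary ingredient is Szarek's theorem.
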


  \begin{prevproof}{Claim}{clm:Tukey}
      The inequality as presented in \citep{Tukey1946} holds for random variables with
    zero median, whereas the random variables that we want to use
    $z_i = x_i - \Exp\b{x_i}$ have zero mean. To handle this situation we can use the
    symmetrization argument from the last page of \citep{BirnbaumZ1944}. Tukey's
    inequality together with the symmetrization lemma of \citep{BirnbaumZ1944} give the
    following
    \[ \Exp\b{\abs{\sum_{i = 1}^n x_i - \sum_{i = 1}^n \Exp\b{x_i}}} \ge \frac{1}{2 n} \frac{n!!}{(n - 1)!!} \sum_{i = 1}^n \Exp\b{\abs{x_i - \Exp\b{x_i}}}. \]
    \noindent Now using standard asymptotic formulas of the gamma function we can see
    that
    \[ \frac{n!!}{(n - 1)!!} \ge \sqrt{\frac{n}{2}} \]
    \noindent and the lemma follows.
  \end{prevproof}

    Applying Claim \ref{clm:Tukey} to
  \eqref{eq:totalVariationToAbsoluteDeviation} we get that
  \begin{equation} \label{eq:totalVariationToAbsoluteDeviationAfterTukey}
    \TV\p{T_{\phi}, T_{\phi'}} \ge \p{\theta - \theta'} \frac{1}{2 \sqrt{2 k}} \sum_{i \in [k]} \Exp_{y_i \sim \tgeometric\p{\psi, k_i - 1}}\b{\abs{y_i - \Exp_{z_i \sim \tgeometric\p{\psi, k_i - 1}}\b{z_i}}}.
  \end{equation}
  \noindent Hence it remains to lower bound the absolute deviation of a truncated
  geometric distribution with parameter $\psi \in [\phi', \phi]$. From Lemma
  \ref{lem:expectationVarianceComputation} we have that
  $\Exp_{z_i \sim \tgeometric\p{\psi, k_i - 1}}\b{z_i} \le \frac{\psi}{1 - \psi}$ and
  from the choice of the values of $\phi$, $\phi'$ we have that
  $\frac{\psi}{1 - \psi} \le 1$ hence
  $0 \le \Exp_{z_i \sim \tgeometric\p{\psi, k_i - 1}}\b{z_i} \le 1$. Therefore
  \begin{align*}
    \Exp_{y_i \sim \tgeometric\p{\psi, k_i - 1}} & \b{\abs{y_i - \Exp_{z_i \sim \tgeometric\p{\psi, k_i - 1}}\b{z_i}}} = \sum_{j = 0}^{k_i - 1} \frac{\psi^j}{Z_{k_i - 1}(\psi)} \abs{j - \Exp_{z_i \sim \tgeometric\p{\psi, k_i - 1}}\b{z_i}} \\
    & = \frac{1}{Z_{k_i - 1}(\psi)} \Exp_{z_i \sim \tgeometric\p{\psi, k_i - 1}}\b{z_i} + \sum_{j = 1}^{k_i - 1} \frac{\psi^j}{Z_{k_i - 1}(\psi)} \p{j - \Exp_{z_i \sim \tgeometric\p{\psi, k_i - 1}}\b{z_i}} \\
    & = \frac{2}{Z_{k_i - 1}(\psi)} \Exp_{z_i \sim \tgeometric\p{\psi, k_i - 1}}\b{z_i} + \sum_{j = 0}^{k_i - 1} \frac{\psi^j}{Z_{k_i - 1}(\psi)} \p{j - \Exp_{z_i \sim \tgeometric\p{\psi, k_i - 1}}\b{z_i}} \\
    & = \frac{2}{Z_{k_i - 1}(\psi)} \Exp_{z_i \sim \tgeometric\p{\psi, k_i - 1}}\b{z_i} + \Exp_{z_i \sim \tgeometric\p{\psi, k_i - 1}}\b{z_i} - \Exp_{z_i \sim \tgeometric\p{\psi, k_i - 1}}\b{z_i} \\
    & = \frac{2}{Z_{k_i - 1}(\psi)} \Exp_{z_i \sim \tgeometric\p{\psi, k_i - 1}}\b{z_i} \\
    & \overset{\text{Lemma \ref{lem:expectationVarianceComputation}}}{=} \frac{2 (1 - \psi)}{1 - \psi^{k_i}} \p{\frac{\psi}{1 - \psi} - \frac{k_i \cdot \psi^{k_i}}{1 - \psi^{k_i}}} \\
    & = \frac{2}{\p{1 - \psi^{k_i}}^2} \p{\psi + (k_i - 1) \psi^{k_i + 1} - k_i \psi^{k_i}} \\
    & \ge \frac{1}{\sqrt{2}}
  \end{align*}
  \noindent where for the last inequality we have used the fact that
  $\psi \in [\phi', \phi]$ and the actual values of $\phi', \phi$ together with the fact
  that $k_i \ge 2$. Applying this lower bound to
  \eqref{eq:totalVariationToAbsoluteDeviationAfterTukey} we get that
  \begin{align*}
    \TV\p{T_{\phi}, T_{\phi'}} & \ge \p{\theta - \theta'} \frac{\sqrt{k}}{4} \\
    & = \p{\frac{\ln\p{\phi} - \ln\p{\phi'}}{\phi - \phi'}} \p{\phi - \phi'} \frac{\sqrt{k}}{4}
    \intertext{using Mean Value Theorem and the fact that $\phi = 1/2$, we get that}
    \TV\p{T_{\phi}, T_{\phi'}} & \ge \p{\phi - \phi'} \frac{\sqrt{k}}{2}
    \intertext{but from the definition of $\phi'$ we also have}
    \TV\p{T_{\phi}, T_{\phi'}} & \ge c \frac{\eps}{\sqrt{k}} \frac{\sqrt{k}}{2} = c \frac{\eps}{2}.
  \end{align*}
  \noindent Therefore we get that for any
  $\distr_{\vec{\phi}, \pi_0, \matB}, \distr_{\vec{\phi}', \pi_0, \matB} \in \family$
  it holds
  \begin{equation} \label{eq:TVLowerBoundAfterTukey}
    \TV\p{\distr_{\vec{\phi}, \pi_0, \matB}, \distr_{\vec{\phi}', \pi_0, \matB}} \ge c \frac{\eps}{2}
  \end{equation}

    Using \eqref{eq:mainLowerBound:KLUpperBound} and \eqref{eq:TVLowerBoundAfterTukey},
  we can now apply Theorem \ref{thm:FanoInequality} with $\alpha = c \frac{\eps}{2}$ and
  $\beta = 32 c^2 \eps^2$ and we get
  \[ \risk_n(\family) \ge c \frac{\eps}{4} \left(1 - \frac{n \cdot 32 c^2 \eps^2 + \ln 2 }{\ln\p{\abs{\family}}}\right). \]
  \noindent But from the definition of $\family$ and the Gilbert-Varshamov bound we get
  that $\abs{\family} \ge 2^{d/8}$ and hence
  \[ \risk_n(\family) \ge c \frac{\eps}{4} \left(1 - \frac{n \cdot 32 c^2 \eps^2 + \ln 2 }{d/8}\right). \]
  \noindent Hence we set $c = 8$ and we conclude that for any
  $n \le \frac{d}{2^{14} \eps^2}$ we have
  $\risk_n(\family) \ge 2 \eps$ hence we cannot learn
  $\distr_{\vec{\phi}, \pi_0, \matB}$ $\eps$-close in total variation distance unless
  $n = \Omega\p{\frac{d}{\eps^2}}$.
\end{prevproof}

\section{Proof of Theorem \ref{thm:MallowsBlockParameterLearning}} \label{sec:app:thm:MallowsBlockParameterLearning}

  The estimation $\hat{\pi}$ of $\pi_0$ follows from Theorem
\ref{thm:centralRankingLearningBlockModel}, hence we focus on the estimation
$\hat{\vec{\phi}}$ of $\vec{\phi}^{\star}$. Throughout the proof we assume that
$\matB$ is fixed and hence when drop it from the notation when it is not
necessary. From Lemma \ref{lem:truncatedGeometricAndMallows} and
the expression of the sufficient statistics for the Mallows Block Model we can
conclude that
\begin{equation} \label{eq:prood:MallowsBlock:sufficientStatistics}
  T_i(\pi, \pi_0) = \sum_{j \in B_i} Y_j
\end{equation}
\noindent where $Y_j$ are independent random variables with
$Y_j \sim \tgeometric\p{\phi_i, j - 1}$. Hence we conclude that the random
variables $T_i(\pi, \pi_0)$ are independent and we can estimate them
independently. Therefore we focus in the estimation of each $\phi_i$ separately.
Before continuing we define the distribution $\distr^i_{t}$ to be the
probability distribution of $T_i(\pi, \pi_0)$ where
$\pi \sim \distr_{\vec{\theta}, \pi_0, \matB}$
\footnote{Observe here that we index the distribution with the natural parameter $\theta$ instead of the parameter $\phi$ as we defined it in Section \ref{sec:blockModel}. We may do this indexing in the rest of the proof when it will be clear from the context whether we refer to the natural parameter or the parameter $\phi$.}
with $\theta_i = t$. Also we define
\begin{equation} \label{eq:proof:BlockModelEstimation:separatePartitionFunction}
  Z^i(\phi_i, \matB) = \prod_{j \in B_i} Z_j(\phi_i, \matB)
\end{equation}
\noindent and also
$\alpha_i(\theta_i, \matB) = \ln \p{Z^i\p{\exp\p{\theta_i}, \matB}}$. Again we
may drop the $\matB$ from the notation since it is fixed throughout the proof.

  We fix some $i \in [d]$, and we drop the subscript $i$ from $\theta_i$,
$\phi_i$ since it is clear from the context. We define the function
$h(\theta) = \Exp_{\pi \sim \distr_{\vec{\theta}, \pi_0}} \left[ T_i(\pi, \pi_0) \right]$,
from Theorem \ref{thm:exponentialFamiliesProperties} we have that
$h(\theta) = \dot{\alpha_i}(\theta)$ and also that
$h'(\theta) = \ddot{\alpha}(\theta) > 0$ and hence the function
$h(\theta)$ is strictly increasing with respect to $\theta$. Therefore $h$ is an
injective function and hence given any real number $r$ in the image of $h$ we
can find $\hat{\theta}$ such that
$\abs{\theta(r) - \hat{\theta}} \le \gamma$ in $O(\log(1/\gamma))$ time,
where $\theta(r)$ is well defined from the equation $h(\theta(r)) = r$ since
$h$ is injective.

  Let us assume now that we observe $n$ i.i.d. samples from the distribution
$\distr_{\vec{\theta}^{\star}, \pi_0, \matB} \in \Mallows_r(\matB, \pi_0)$.
Then according to the discussion in the previous paragraph we have that in order
to get an estimation for $\theta_i^{\star}$ is suffices to find a real value
$r(\vec{\pi})$ such that $h(\theta(r(\vec{\pi}))) = r(\vec{\pi})$ and
$\abs{\theta_i^{\star} - \theta(r(\vec{\pi}))} \le \eps$. For this purpose we
are going to use $r = \frac{1}{n} \sum_{i = 1}^n T_i(\pi_i, \pi_0)$. Now from
Theorem \ref{thm:exponentialFamiliesConcentration}, the independence of $T_i$'s
and \ref{eq:prood:MallowsBlock:sufficientStatistics} we have that for any
$\theta_-, \theta_+ \le 0$
\[ \Prob_{\vec{\pi} \sim \distr^n_{\vec{\theta}^{\star}, \pi_0, \matB}} \left( r(\vec{\pi}) \notin [h(\theta_-), h(\theta_+)] \right) \le 2 \exp\left( - \min_{\theta \in \{\theta_-, \theta_+\}} \KL\left(\distr^i_{\theta} || \distr^i_{\theta_i^{\star}}\right) n \right) \]
\noindent Then since $h$ is strictly increasing we have that
\[ \theta(r(\vec{\pi})) \in [\theta_-, \theta_+] \Longleftrightarrow r \in [h(\theta_-), h(\theta_+)] \]
\noindent which together with Theorem \ref{thm:exponentialFamiliesConcentration}
implies
\begin{equation} \label{eq:MallowsBlockConcetration}
  \Prob_{\vec{\pi} \sim \distr^n_{\vec{\theta}^{\star}, \pi_0, \matB}} \left( \theta(r(\vec{\pi})) \notin [\theta_-, \theta_+] \right) \le 2 \exp\left( - \min_{\theta \in \{\theta_-, \theta_+\}} \KL\left(\distr^i_{\theta} || \distr^i_{\theta_i^{\star}}\right) n \right)
\end{equation}

  For the rest of the proof we are going to take two cases that should be
treated a bit differently. The first case is
$\phi_i^{\star} = \exp( \theta_i^{\star} ) > 2 \eps$ and the second case is
$\phi_i^{\star} \le 2 \eps$, where $\eps$ is the accuracy that we want to
estimate the parameter $\phi_i^{\star}$.
\medskip

\paragr{Case $\boldsymbol{\phi_i^{\star} > 2 \eps}$.} Since our goal is to
estimate $\phi_i^{\star} = \exp (\theta_i^{\star})$ we choose
$\theta_- = \log(\phi_i^{\star} - \eps)$ and
$\theta_+ = \log(\phi_i^{\star} + \eps)$. We focus on showing a lower bound
in the KL divergence
$\KL\left(\distr^i_{\theta_-} || \distr^i_{\theta_i^{\star}}\right)$ and a
lower bound on
$\KL\left(\distr^i_{\theta_+} || \distr^i_{\theta_i^{\star}}\right)$ follows
the same way and hence we can apply \eqref{eq:MallowsBlockConcetration}.

 From \eqref{eq:thm:exponentialProperties:4} we have that that for some
$\xi \in [\theta_-, \theta_i^{\star}]$ it holds that
\begin{align*}
  \KL\left(\distr^i_{\theta_-} || \distr^i_{\theta_i^{\star}}\right) & = (\ln(\phi_i^{\star}) - \ln(\phi_i^{\star} - \eps))^2 \ddot{a}_i(\xi) \\
  & = \left( \frac{\log(\phi_i^{\star}) - \log(\phi_i^{\star} - \eps)}{\eps} \right)^2 \eps^2 \ddot{a}_i(\xi) \\
  & = \frac{1}{q^2} \eps^2 \ddot{a}_i(\xi)
\end{align*}
\noindent for some $q \in [\phi_i^{\star} - \eps, \phi_i^{\star}]$ by the Mean
Value Theorem. Hence we have
\begin{equation} \label{eq:proof:blockMallows:KLexpression}
  \KL\left(\distr^i_{\theta_-} || \distr^i_{\theta_i^{\star}}\right) \ge \frac{1}{(\phi_i^{\star})^2} \eps^2 \ddot{\alpha}(\xi),
\end{equation}
\noindent for some $\xi \in [\theta_-, \theta^{\star}]$ and we define
$\psi = \exp(\xi)$. Also from \eqref{eq:thm:exponentialProperties:2} we have
that
\begin{equation} \label{eq:proof:blockMallows:HessianToVariance}
  \ddot{\alpha}_i(\xi) = \Var_{\pi \sim \distr_{\vec{\phi}', \pi_0, \matB}} \left[ T_i(\pi, \pi_0) \right] = \Var_{z \sim \distr^i_{\xi}} \left[ z \right] = .\sum_{j \in B_i} \Var_{Y_j \sim \tgeometric(\xi, j - 1)} \left[ Y_j \right]
\end{equation}
\noindent where $\vec{\phi}'$ is the vector that is equal with
$\vec{\phi}^{\star}$ except that at the $i$th coordinate it has $\xi$. We
therefore need some expressions for the mean and the variance of truncated
geometric distributions. We summarize these expressions in the following Lemma.

\begin{lemma} \label{lem:expectationVarianceComputation}
  Let $k \in \mathbb{N}$, $\phi \in (0, 1)$ then
  \[ \Exp_{Z \sim \tgeometric(\phi, k)} \left[ Z \right] =   \frac{\phi}{1 - \phi} - (k + 1) \frac{\phi^{k + 1}}{1 - \phi^{k + 1}} ~~~~~\text{ and } \]
  \[ \Var_{Z \sim \tgeometric(\phi, k)}\left[ Z \right] = \frac{\phi}{(1-\phi)^2} - \frac{(k+1)^2\phi^{k+1}}{\left(1-\phi^{k+1}\right)^2}. \]
\end{lemma}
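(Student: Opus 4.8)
The plan is to recognize $\tgeometric(\phi,k)$ as a single-parameter exponential family and read off its first two moments directly from the logarithmic partition function, exactly as set up in Section~\ref{sec:model}. Write $N_k(\phi)=\sum_{j=0}^{k}\phi^j=\frac{1-\phi^{k+1}}{1-\phi}$ for the normalizing constant, so that $Z\sim\tgeometric(\phi,k)$ has mass $p(i)=\phi^i/N_k(\phi)$ on $\{0,\dots,k\}$. Taking natural parameter $\theta=\ln\phi$ and sufficient statistic $T(z)=z$, this is a member of an exponential family with logarithmic partition function $\alpha(\theta)=\ln N_k(e^\theta)$. By Theorem~\ref{thm:exponentialFamiliesProperties}.1 and Theorem~\ref{thm:exponentialFamiliesProperties}.2 we have $\Exp[Z]=\dot\alpha(\theta)$ and $\Var[Z]=\ddot\alpha(\theta)$, so the entire lemma reduces to differentiating $\ln N_k$ twice.

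For the first moment I would use the chain rule $\frac{d}{d\theta}=\phi\frac{d}{d\phi}$ and the closed form $\ln N_k(\phi)=\ln(1-\phi^{k+1})-\ln(1-\phi)$. This gives
$$\frac{d}{d\phi}\ln N_k(\phi)=\frac{1}{1-\phi}-\frac{(k+1)\phi^{k}}{1-\phi^{k+1}},$$
and multiplying by $\phi$ yields $\Exp[Z]=\frac{\phi}{1-\phi}-(k+1)\frac{\phi^{k+1}}{1-\phi^{k+1}}$, which is the claimed expression.

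For the variance I would differentiate the expectation once more, again through $\frac{d}{d\theta}=\phi\frac{d}{d\phi}$, so that $\Var[Z]=\phi\frac{d}{d\phi}\Exp[Z](\phi)$ with $\Exp[Z](\phi)$ read as the explicit function of $\phi$ just derived. A direct computation gives $\frac{d}{d\phi}\frac{\phi}{1-\phi}=\frac{1}{(1-\phi)^2}$, and, setting $u=\phi^{k+1}$ with $u'=(k+1)\phi^{k}$,
$$\frac{d}{d\phi}\left[(k+1)\frac{u}{1-u}\right]=(k+1)\frac{u'}{(1-u)^2}=\frac{(k+1)^2\phi^{k}}{(1-\phi^{k+1})^2}.$$
Multiplying the difference of these two derivatives by $\phi$ produces exactly $\frac{\phi}{(1-\phi)^2}-\frac{(k+1)^2\phi^{k+1}}{(1-\phi^{k+1})^2}$, completing the lemma.

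The computation is entirely elementary, so there is no conceptual obstacle; the only thing to watch is the bookkeeping in the second derivative (keeping the factors of $(k+1)$ and the powers $\phi^{k}$ versus $\phi^{k+1}$ straight) and remembering to reinsert the chain-rule factor $\phi$ at each differentiation step. As a sanity check one can verify that as $k\to\infty$ with $\phi\in(0,1)$ we have $\phi^{k+1}\to0$ and both formulas collapse to the familiar geometric moments $\frac{\phi}{1-\phi}$ and $\frac{\phi}{(1-\phi)^2}$. An equivalent, purely summation-based derivation is also available via $\sum_{i=0}^{k}i\phi^i=\phi N_k'(\phi)$ together with the analogous identity for the second moment, but the exponential-family route keeps the algebra shortest and is consistent with how the moments are invoked elsewhere in the paper.
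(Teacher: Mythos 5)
Your proof is correct, and it takes a genuinely different route from the one in the paper. You treat $\tgeometric(\phi,k)$ as a one-parameter exponential family with natural parameter $\theta=\ln\phi$ and log-partition function $\alpha(\theta)=\ln N_k(e^\theta)$, and then obtain the mean and variance as $\dot\alpha$ and $\ddot\alpha$ via Theorem~\ref{thm:exponentialFamiliesProperties}.1--2, using the closed form $N_k(\phi)=\frac{1-\phi^{k+1}}{1-\phi}$ and the chain rule $\frac{d}{d\theta}=\phi\frac{d}{d\phi}$; I checked both differentiations and they reproduce the claimed formulas exactly. The paper instead computes $\Exp[Z]$ and $\Exp[Z^2]$ directly from the defining sums, using the identities $\sum_{i=1}^k i\phi^i=\sum_{i=1}^k\sum_{j=i}^k\phi^j$ and $i^2=\sum_{j=1}^i(2i-1)$ together with the geometric-sum formula, and then forms $\Var[Z]=\Exp[Z^2]-\Exp[Z]^2$. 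Your route is shorter, less bookkeeping-prone, and consistent with how the moments are invoked elsewhere in the paper (the variance enters precisely as $\ddot\alpha$ in \eqref{eq:sufficientStatisticVarianceBlockModel}); the paper's route is more elementary in that it does not lean on Theorem~\ref{thm:exponentialFamiliesProperties}.1--2, which are themselves only cited to external references rather than proved. Your limiting sanity check as $k\to\infty$ is a nice touch. The only caveat is cosmetic: since the sum is finite, differentiating under the summation needs no justification, so nothing further is required to make your argument rigorous.
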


\begin{proof}[Proof of Lemma \ref{lem:expectationVarianceComputation}]
  During the proof of this lemma we shall use the fact
  \begin{equation} \label{eq: geometric sum}
  \sum_{\ell = i}^k \phi^\ell
  =
  \phi^i\frac{1-\phi^{k+1-i}}{1-\phi}
  \end{equation}
  at multiple points. In particular,
  \begin{align}
      \Exp_{Z \sim \tgeometric(\phi, k)}\left[ Z \right]
      &=
      \frac{1}{\sum_{j=1}^k\phi^j}
      \sum_{i=1}^{k}i\phi^i
      =
                  \frac{1-\phi}{1-\phi^{k+1}} \sum_{i=1}^{k}i\phi^i
      =
      \frac{1-\phi}{1-\phi^{k+1}}
      \sum_{i=1}^{k}\sum_{j=i}^{k}\phi^j \notag\\
      &=
      \frac{1-\phi}{1-\phi^{k+1}}
      \sum_{i=1}^{k}\phi^i\frac{1-\phi^{k+1-i}}{1-\phi}
      =
      \frac{1}{1-\phi^{k+1}}
      \sum_{i=1}^{k}\left[\phi^i-\phi^{k+1}\right]
      \notag\\
      &=
      \frac{1}{1-\phi^{k+1}}
      \left[\frac{1-\phi^{k+1}}{1-\phi} -1 -k\phi^{k+1}\right]
      =
              \frac{1}{1-\phi^{k+1}}
      \left[\frac{\phi - \phi \phi^{k+1}}{1-\phi} -(k+1)\phi^{k+1}\right]
                          \notag\\
      &=
      \frac{\phi}{1 - \phi} - (k + 1) \frac{\phi^{k + 1}}{1 - \phi^{k + 1}}
      \notag
  \end{align}
  where we have used \eqref{eq: geometric sum} in the second, third and fifth
  step.

  Now we prove compute the variance. Note that
  \begin{equation}
      \label{eq: sum form of squares}
      \sum_{i=1}^j (2i-1)
      = \left(2\sum_{i=1}^j i\right)-j
      = j(j+1) - j
      = j^2 ,
  \end{equation}
  and thus
  \begin{align*}
      \Exp_{Z \sim \tgeometric(\phi, k)}\left[Z^2\right]
      &=
      \sum_{i=1}^{k}i^2\frac{\phi^i}{\sum_{j=1}^k\phi^j}
      \stackrel{\eqref{eq: geometric sum}}{=}
      \frac{1-\phi}{1-\phi^{k+1}}
      \sum_{i=1}^{k}i^2 \phi^i
      \\
      &\stackrel{\eqref{eq: sum form of squares}}{=}
      \frac{1-\phi}{1-\phi^{k+1}}
      \sum_{i=1}^{k}
      \left[(2i-1)\sum_{j=i}^k \phi^j \right]
      \\
      &=
      2\frac{1-\phi}{1-\phi^{k+1}} \sum_{i=1}^{k}
      \left[i\sum_{j=i}^k \phi^j \right]
      -
      \frac{1-\phi}{1-\phi^{k+1}} \sum_{i=1}^{k}
      \sum_{j=i}^k \phi^j
      \\
      &\stackrel{\eqref{eq: geometric sum}}{=}
      2\frac{1-\phi}{1-\phi^{k+1}} \sum_{i=1}^{k}
      \left[i\phi^i \frac{1-\phi^{k+1-i}}{1-\phi} \right]
      -
      \Exp_{Z \sim \tgeometric(\phi, k)}\left[ Z \right]
      \\
      &=
      \frac{2}{1-\phi^{k+1}} \sum_{i=1}^{k}
      \left[i\phi^i - i\phi^{k+1} \right]
      -
      \Exp_{Z \sim \tgeometric(\phi, k)}\left[ Z \right]
      \\
      &=
      \frac{2}{1-\phi}\Exp_{Z \sim \tgeometric(\phi, k)}\left[ Z \right]
      -
      \frac{2\phi^{k+1}}{1-\phi^{k+1}}\frac{k(k+1)}{2}
      -
      \Exp_{Z \sim \tgeometric(\phi, k)}\left[ Z \right]
      \notag\\
      &=
      \frac{1+\phi}{1-\phi}\Exp_{Z \sim \tgeometric(\phi, k)}\left[ Z \right] - \frac{k(k+1)\phi^{k+1}}{1-\phi^{k+1}} .
  \end{align*}
  Consequently,
  \begin{align*}
      \Var_{Z \sim \tgeometric(\phi, k)}\left[ Z \right]
      &=
      \Exp_{Z \sim \tgeometric(\phi, k)}\left[Z^2\right]
      - \p{\Exp_{Z \sim \tgeometric(\phi, k)}\left[Z\right]}^2
      \\&=
      \Exp_{Z \sim \tgeometric(\phi, k)}\left[Z\right]
      \left[
      \frac{1+\phi}{1-\phi}
      -
      \Exp_{Z \sim \tgeometric(\phi, k)}\left[Z\right]
      \right]
       - \frac{k(k+1)\phi^{k+1}}{1-\phi^{k+1}}
       \\&=
      \left[ \frac{\phi}{1-\phi} -
      \frac{(k+1)\phi^{k+1}}{1-\phi^{k+1}} \right]
      \left[ \frac{1}{1-\phi} +
      \frac{(k+1)\phi^{k+1}}{1-\phi^{k+1}} \right]
       - \frac{k(k+1)\phi^{k+1}}{1-\phi^{k+1}}
                              \\
       &=
       \frac{\phi}{(1-\phi)^2}
       -
       \frac{(k+1)\phi^{k+1}}{1-\phi^{k+1}}
       \left[
       (1-\phi)\frac{1}{1-\phi} + (k+1)\frac{\phi^{k+1}}{1-\phi^{k+1}} + k
       \right]
       \\
       &=
       \frac{\phi}{(1-\phi)^2}
       -
       \frac{(k+1)^2\phi^{k+1}}{\left(1-\phi^{k+1}\right)^2} .
  \end{align*}
  \noindent and the lemma follows.
\end{proof}

\noindent Using Lemma \ref{lem:expectationVarianceComputation} and
\eqref{eq:thm:exponentialProperties:2} we get that
\begin{equation} \label{eq:sufficientStatisticVarianceBlockModel}
  \ddot{\alpha}_i(\xi) = \sum_{j \in B_i} \Var_{Y_j \sim \tgeometric(\xi, j - 1)} \left[ Y_j \right] = m_i \frac{\psi}{(1 - \psi)^2} - \sum_{j \in B_i} \frac{j^2 \psi^j}{(1 - \psi^j)^2}
\end{equation}
\noindent where we remind that $m_i = \abs{B_i}$. As we explained already in
order to apply the concentration inequality that we proved in Section
\ref{sec:concentration} we have to lower bound the expression of the variance
and for this we have to prove the following technical claim.

\begin{claim} \label{clm:varianceBoundTechnical}
    Let $x \in [0, 1]$ and $y \in \R_+$ and we define the function
  $g(y) = y^2 \frac{x^y}{\left(1 - x^y\right)^2}$. The function $g$ is an
  decreasing function of $y$.
\end{claim}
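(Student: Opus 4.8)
The plan is to show that $g'(y) < 0$ for every $y > 0$ whenever $0 < x < 1$ (the boundary cases being trivial: for $x = 0$ we have $g \equiv 0$, while $x = 1$ makes $g$ undefined and is implicitly excluded). Since $g(y) > 0$, it suffices to prove that its logarithmic derivative is negative. First I would write $\ln g(y) = 2 \ln y + y \ln x - 2 \ln\p{1 - x^y}$ and differentiate to get
\[ \frac{d}{dy} \ln g(y) = \frac{2}{y} + \ln x + \frac{2 (\ln x)\, x^y}{1 - x^y} = \frac{2}{y} + (\ln x) \frac{1 + x^y}{1 - x^y}. \]

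Next I would substitute $u = x^y \in (0, 1)$ and use the identity $y \ln x = \ln u$. Multiplying the displayed expression by $y > 0$, which does not affect its sign, reduces the target inequality $g'(y) < 0$ to the single-variable statement
\[ 2 + (\ln u) \frac{1 + u}{1 - u} < 0 \qquad \text{for all } u \in (0, 1). \]
Clearing the positive denominator $1 - u$, this is in turn equivalent to showing that $F(u) := -(1 + u) \ln u - 2 (1 - u) > 0$ on $(0, 1)$.

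The heart of the argument, and the only genuine obstacle, is this last inequality, which I would settle by a convexity-and-boundary analysis rather than any direct estimate. I would first verify the boundary values $F(1) = 0$ and, since $F'(u) = -\ln u - 1/u + 1$, also $F'(1) = 0$. Then I would compute $F''(u) = (1 - u)/u^2 > 0$ for $u \in (0, 1)$, so $F$ is strictly convex and $F'$ is strictly increasing. Consequently $F'(u) < F'(1) = 0$ for every $u < 1$, so $F$ is strictly decreasing on $(0, 1)$; combined with $F(1) = 0$ this forces $F(u) > 0$ throughout $(0, 1)$. Tracing the equivalences back up the chain yields $\frac{d}{dy} \ln g(y) < 0$, hence $g'(y) < 0$, and the claim follows. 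I expect the only delicate point to be keeping track of the two sign reversals introduced by the substitution $u = x^y$ (which is itself decreasing) and by clearing $1 - u$, so I would state each equivalence explicitly to avoid an error there.
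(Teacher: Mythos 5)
Your proposal is correct and follows essentially the same route as the paper: after differentiating and substituting $u = x^y$, both arguments reduce the claim to the single-variable inequality $2(1-u) + (1+u)\ln u \le 0$ on $(0,1]$ (your $F$ is the negative of the paper's $h$) and settle it by monotonicity together with the boundary value at $u = 1$. The only cosmetic differences are that you work with the logarithmic derivative of $g$ and sign $F'$ via the second derivative $F'' = (1-u)/u^2 > 0$, whereas the paper signs $h'$ directly from $\ln w \le w - 1$.
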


\begin{prevproof}{Claim}{clm:varianceBoundTechnical}
    We first compute the derivative of $g$ with respect to $y$ and we get
  \[ g'(y) = \frac{y x^y}{\left(1 - x^y\right)^3} \left(2\left(1 - x^y\right) + y \ln(x) +  y \ln(x)x^y \right). \]
  \noindent The sign of $g'(y)$ is therefore determined by the sign of the
  following quantity
  \[ h(z) = 2(1 - z) + \ln(z) + z \ln(z) \]
  \noindent where we have replaced $z = x^y$ and the only restriction that we
  have is $z \in [0, 1]$. If we compute the derivative of $h$ we have
  \[ h'(z) = - 1 + \frac{1}{z} + \ln(z). \]
  \noindent But we know that $\ln(x) \le x - 1$ and hence $\ln(1/z) \le 1/z - 1$
  which implies $h'(z) \ge 0$. Since $z \in [0, 1]$ we get that $h(z) \le h(1)$
  but $h(1) = 0$ and hence $h(z) \le 0$. From this we get $g'(y) \le 0$ and
  therefore $g$ is a decreasing function of $y$.
\end{prevproof}

  From Claim \ref{clm:varianceBoundTechnical} we get that
$\frac{i^2 \psi^i}{(1 - \psi^i)^2} \ge \frac{(i + 1)^2 \psi^{i + 1}}{(1 - \psi^{i + 1})^2}$
and therefore we the following lower bound in the variance of the sufficient
statistics
\[ \ddot{\alpha}_i(\xi) \ge m_i \frac{\psi}{(1 - \psi)^2} - m_i \frac{4 \psi^2}{(1 - \psi^2)^2} \]
\noindent where we have replaced all the terms in the sum in the expression
\eqref{eq:sufficientStatisticVarianceBlockModel} with $i \ge 2$ with $i = 2$.
The case $i = 1$ corresponds to a trivial delta distribution that does not
contribute in any part of the proof of this section. Since
$\psi \in [\phi_i^{\star} - \eps, \phi_i^{\star}]$ we get that
\begin{align*}
  \ddot{\alpha}_i(\xi) & \ge m_i \frac{\psi}{(1 - \psi^2)^2} \left( (1 + \psi)^2 - 4 \psi \right) \\
  & = m_i \frac{\psi}{(1 + \psi)^2} \ge \frac{1}{4} m_i \psi \implies \\
  \ddot{\alpha}_i(\xi) & \ge \frac{1}{4} m_i (\phi_i^{\star} - \eps).
\end{align*}

  Now we use \eqref{eq:proof:blockMallows:KLexpression} and
\eqref{eq:proof:blockMallows:HessianToVariance} together with the above lower
bound and we get
\[ \KL\left(\distr^i_{\theta_-} || \distr^i_{\theta_i^{\star}}\right) \ge \frac{1}{4} (m - 1) \eps^2 \psi \]
where $\psi \in \left[\phi_i^{\star} - \eps, \phi_i^{\star}\right]$. Using
exactly the same argument we can also prove the same for
$\distr^i_{\theta_+}$ and
$\psi \in \left[\phi_i^{\star}, \phi_i^{\star} + \eps\right]$ and therefore we
get
\[ \min_{\theta \in \{\theta_-, \theta_+\}} \KL\left(\distr^i_{\theta}
|| \distr^i_{\theta_i^{\star}}\right) \ge \frac{1}{4} m_i \eps^2 \min \left\{ \frac{\phi_i^{\star} - \eps}{(\phi_i^{\star})^2}, \frac{\phi_i^{\star}}{(\phi_i^{\star} + \eps)^2} \right\}.
\]
\noindent Since the function $x \mapsto \frac{x - \eps}{x^2}$ and the function
$x \mapsto \frac{x}{(x + \eps)^2}$ are decreasing functions of $x$ for
$x \in [2 \eps, 1]$ and assuming that $\eps \le 3/4$ we have that
\begin{equation} \label{eq:proof:blockMallows:KLFinalBound:1}
  \min_{\theta \in \{\theta_-, \theta_+\}} \KL\left(\distr^i_{\theta}
|| \distr^i_{\theta_i^{\star}}\right) \ge \frac{1}{16} m_i \eps^2.
\end{equation}

  We can now apply \eqref{eq:proof:blockMallows:KLFinalBound:1} to
\eqref{eq:MallowsBlockConcetration} and we get
\[ \Prob_{\vec{\pi} \sim \distr^n_{\vec{\theta}^{\star}, \pi_0, \matB}} \left( \theta(r(\vec{\pi})) \notin [\theta_-, \theta_+] \right) \le 2 \exp\left( - \frac{1}{16} m_i \eps^2 n \right). \]
\noindent Hence for $n \ge 16 \frac{\ln(2/\delta)}{m_i \eps^2}$ we have that
\[ \Prob_{\vec{\pi} \sim \distr^n_{\vec{\theta}^{\star}, \pi_0, \matB}} \left( \theta(r(\vec{\pi})) \notin [\theta_-, \theta_+] \right) \le \delta. \]

\paragr{Case $\boldsymbol{\phi_i^{\star} \le 2 \eps}$.} For this case we will
set $\theta_- = 0$ and $\theta_+ = \phi_i^{\star} + k \eps$, where $k \in \N$ to
be determined later. Hence, from \eqref{eq:thm:exponentialProperties:4} we have
that
\begin{align*}
  \KL\left(\distr^i_{\theta_+} || \distr^i_{\theta_i^{\star}}\right) & = (\ln(\phi_i^{\star} + k \eps) - \ln(\phi_i^{\star})) \cdot \dot{\alpha}_i\left(\ln \left(\phi_i^{\star} + k \eps\right)\right) + \alpha\left(\ln\left(\phi_i^{\star}\right)\right) - \alpha\left(\ln\left(\phi_i^{\star} + k \eps\right)\right)
\end{align*}

  Our first goal is to show that for $\phi_i^{\star} \le 2 \eps$ the right hand
side of the KL-divergence is a decreasing function of $\phi_i^{\star}$. We set
\[ f(x) \triangleq \p{\ln\p{x + k \eps} - \ln\p{x}} \cdot \dot{\alpha}_i\p{\ln\p{x + k \eps}} + \alpha_i\p{\ln\p{x}} - \alpha_i\p{\ln\p{x + k \eps}} \]
\noindent we get that
\begin{align*}
  f'(x) & = \p{\frac{1}{x + k \eps} - \frac{1}{x}} \cdot \dot{\alpha}_i\p{\ln\p{x + k \eps}} + \p{\ln\p{x + k \eps} - \ln\p{x}} \cdot \frac{\ddot{\alpha}_i\p{\ln\p{x + k \eps}}}{x + k \eps} + \\
  & ~~~~~~~~~~ + \frac{\dot{\alpha}_i\p{\ln\p{x}}}{x} - \frac{\dot{\alpha}_i\p{\ln\p{x + k \eps}}}{x + k \eps} \\
  & = \frac{\dot{\alpha}_i\p{\ln(x)} - \dot{\alpha}_i\p{\ln{x + k \eps}}}{x} + \p{\ln\p{x + k \eps} - \ln\p{x}} \cdot \frac{\ddot{\alpha}_i\p{\ln\p{x + k \eps}}}{x + k \eps} \\
  & =  -\frac{\ddot{\alpha}_i\p{\ln\p{x + k \eps}}}{x} \left(\frac{\dot{\alpha}_i\p{\ln\p{x + k \eps}} - \dot{\alpha}_i\p{\ln\p{x}}}{\ddot{\alpha}_i\p{\ln\p{x + k \eps}}} + \frac{x}{x + k \eps}\p{\ln\p{\frac{x}{x + k \eps}}}\right).
  \intertext{We use now the easy to check facts that (1) the function $z \mapsto z \ln\p{z}$ is a decreasing function of $z$ for $z \le 1/e$, (2) the function $x \mapsto \frac{x}{x + k \eps}$ is an increasing function of $x$, (3) we pick $k$ such that for $x \in [0, 2 \eps]$ we have that $\frac{x}{x + k \eps} \le \frac{1}{e}$ and hence we get that}
  f'(x) & \le -\frac{\ddot{\alpha}_i\p{\ln\p{x + k \eps}}}{x} \left(\frac{\dot{\alpha}_i\p{\ln\p{x + k \eps}} - \dot{\alpha}_i\p{\ln\p{x}}}{\ddot{\alpha}_i\p{\ln\p{x + k \eps}}} - \frac{2}{(k + 2)}\p{\ln\p{\frac{k + 2}{2}}}\right).
\end{align*}

  Now we want to lower bound the term
$\frac{\dot{\alpha}_i\p{\ln\p{x + k \eps}} - \dot{\alpha}_i\p{\ln\p{x}}}{\ddot{\alpha}_i\p{\ln\p{x + k \eps}}}$
in the parentheses in the last upper bound of $f'(x)$. From
\eqref{eq:thm:exponentialProperties:1} and
\eqref{eq:thm:exponentialProperties:2} we have that
\begin{align*}
  \frac{\dot{\alpha}_i\p{\ln\p{x + k \eps}} - \dot{\alpha}_i\p{\ln\p{x}}}{\ddot{\alpha}_i\p{\ln\p{x + k \eps}}} & = \frac{\Exp_{z \sim \distr^i_{\ln\p{x + k \eps}}}\left[z\right] - \Exp_{z \sim \distr^i_{\ln\p{x}}}\left[z\right]}{\Var_{z \sim \distr^i_{\ln\p{x + k \eps}}}\left[z\right]} \triangleq \frac{E}{D}
\end{align*}

\noindent To lower bound this expression we use the following simple claim.

\begin{claim} \label{clm:expectationBoundTechnical}
    Let $x \in [0, 1]$ and $y \in \R_+$ and we define the function
  $g(y) = y \frac{x^y}{\left(1 - x^y\right)}$. The function $g$ is an
  decreasing function of $y$.
\end{claim}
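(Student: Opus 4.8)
The plan is to show directly that $g'(y) \le 0$ on $y \in \R_+$, following exactly the structure of the proof of Claim~\ref{clm:varianceBoundTechnical}: compute the derivative, factor it, and reduce the sign question to a one-line inequality in an auxiliary variable $z = x^y$.

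First I would set $u = x^y = e^{y \ln(x)}$, so that $u' = \ln(x)\, u$ and $g(y) = \frac{y u}{1 - u}$. Differentiating by the quotient rule, the numerator of $g'(y)$ is $(u + y u')(1 - u) + y u u'$; expanding and cancelling, this collapses to $u(1-u) + y u' = u\bigl[(1-u) + y \ln(x)\bigr]$, while the denominator $(1 - u)^2$ is positive. Since $u = x^y > 0$ for $x \in (0,1]$ and $y > 0$, the sign of $g'(y)$ equals the sign of $(1 - u) + y \ln(x)$.

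Next I would substitute $z = x^y \in [0,1]$, so that $y \ln(x) = \ln(z)$ and the quantity in question becomes $h(z) = (1 - z) + \ln(z)$. It then remains to verify $h(z) \le 0$ on $[0,1]$, which is immediate: $h(1) = 0$ and $h'(z) = \frac{1}{z} - 1 \ge 0$ for $z \in [0,1]$, so $h$ is increasing and hence $h(z) \le h(1) = 0$. (Equivalently, this is just the standard inequality $\ln(z) \le z - 1$, the same elementary fact used in the proof of Claim~\ref{clm:varianceBoundTechnical}.) Therefore $g'(y) \le 0$, and $g$ is a decreasing function of $y$.

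The computation is entirely routine, so there is no genuine obstacle; the only point requiring care is sign bookkeeping, since $\ln(x) \le 0$ for $x \in (0,1]$, and it is precisely the term $y \ln(x)$ that forces the factored numerator to be nonpositive.
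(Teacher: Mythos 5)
Your proposal is correct and follows essentially the same route as the paper's own proof: compute $g'(y)$, observe that its sign is governed by $(1 - x^y) + y\ln(x)$, substitute $z = x^y$, and conclude via the elementary inequality $\ln(z) \le z - 1$. The only differences are cosmetic — you spell out the quotient-rule cancellation and verify $h(z)\le 0$ by monotonicity of $h$ rather than quoting the inequality directly, which you correctly note is equivalent.
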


\begin{prevproof}{Claim}{clm:varianceBoundTechnical}
    We first compute the derivative of $g$ with respect to $y$ and we get
  \[ g'(y) = \frac{x^y}{\left(1 - x^y\right)^2} \left(\left(1 - x^y\right) + y \ln(x) \right). \]
  \noindent The sign of $g'(y)$ is therefore determined by the sign of the
  following quantity
  \[ h(z) = (1 - z) + \ln(z) \]
  \noindent where we have replaced $z = x^y$ and the only restriction that we
  have is $z \in [0, 1]$. But we know that $\ln(x) \le x - 1$ and hence
  $h(z) \le 0$ which implies $g'(y) \le 0$ and the claim follows.
\end{prevproof}

\noindent From Lemma \ref{lem:expectationVarianceComputation} and Claim
\ref{clm:expectationBoundTechnical} we have that
\begin{align*}
  \Exp_{z \sim \distr^i_{\ln\p{x + k \eps}}}\left[z\right] & = \frac{m_i(x + k \eps)}{(1 - (x + k \eps))} - \sum_{j \in B_i} \frac{j (x + k \eps)^j}{\p{1 - (x + k \eps)^j}} \\
  & \ge \frac{m_i (x + k \eps)}{(1 - (x + k \eps))} - \frac{2 m_i (x + k \eps)^2}{\p{1 - (x + k \eps)^2}} \\
  & = \frac{m_i(x + k \eps)}{(1 + x + k \eps)}
\end{align*}
\noindent where again we have excluded the trivial case $j = 1$ that does not
contribute to the above expression. It is also direct from Lemma
\ref{lem:expectationVarianceComputation} that
\[\Exp_{z \sim \distr^i_{\ln\p{x}}}\left[z\right] \le \frac{m_i x}{(1 - x)} \]

\noindent From these two bounds, the fact that $x \in [0, 2 \eps]$ and the
assuming that $\eps \le \frac{1}{10 k}$ we conclude that
\begin{align*}
  E & \ge \frac{m_i k \eps}{1 + k \eps} - \frac{m_i 2 \eps}{1 -  2 \eps} = m_i \eps \left( \frac{k}{1 + k \eps} - \frac{2}{1 - 2 \eps} \right) \ge m_i \eps \frac{k - 2}{(1 + k \eps)} \\
  & \ge m_i \eps (k - 2) \frac{10}{11}
\end{align*}

\noindent Also directly from  Lemma \ref{lem:expectationVarianceComputation},
the fact that $x \in [0, 2 \eps]$ and assuming $\eps \le \frac{1}{10(k + 2)}$ we
get that
\[ D = \Var_{z \sim \distr^i_{\ln\p{x + k \eps}}}\left[z\right] \le \frac{m_i (k + 2) \eps}{(1 - (k + 2) \eps)^2} \le m_i (k + 2) \eps \frac{100}{81} \]

\noindent Putting all these together we get
\[ \frac{\dot{\alpha}_i\p{\ln\p{x + k \eps}} - \dot{\alpha}_i\p{\ln\p{x}}}{\ddot{\alpha}_i\p{\ln\p{x + k \eps}}} \ge \frac{81}{110} \frac{k - 2}{k + 2} \]

\noindent Hence we have the following upper bound on $f'(x)$
\begin{align*}
  f'(x) & \le -\frac{\ddot{\alpha}_1\p{\ln\p{x + k \eps}}}{x} \left(\frac{81}{110}\frac{k - 2}{k + 2} - \frac{2}{(k + 2)}\p{\ln\p{\frac{k + 2}{2}}}\right) \\
  \intertext{for $k = 14$ we have that}
  f'(x) & \le -\frac{\ddot{\alpha}_1\p{\ln\p{x + k \eps}}}{x} \left(\frac{243}{440} - \frac{1}{8}\ln\p{8}\right) \le 0 \\
\end{align*}

\noindent Therefore we have that for $k = 14$ and $\eps \le \frac{1}{10 k}$
the function $f$ is a decreasing function of $x$ and hence $f(x) \ge f(2 \eps)$
for $x \in [0, 2\eps]$. Let $\theta' = \exp\p{\ln\p{2 \eps}}$ and
$\theta'_+ = \exp\p{\ln\p{(k + 2) \eps}}$ then we have
\begin{equation} \label{eq:blockMallows:KLmonotonicity}
  \KL\left(\distr^i_{\theta_+} || \distr^i_{\theta_i^{\star}}\right) \ge \KL\left(\distr^i_{\theta'_+} || \distr_{\theta'_i}\right).
\end{equation}
\noindent Hence we can now use \eqref{eq:proof:blockMallows:KLFinalBound:1} to
bound the right hand side and we get
\begin{equation} \label{eq:blockMallows:KLlowerBoundLowVariance}
  \KL\left(\distr^i_{\theta_+} || \distr_{\theta_i^{\star}}\right) \ge \frac{(k + 2)^2}{16} m_i \eps^2.
\end{equation}
\noindent and now we can use \eqref{eq:MallowsBlockConcetration} to get that
for $n \ge \frac{16}{(k + 2)^2} \frac{\ln(2/\delta)}{m_i \eps^2}$ and
$\phi^{\star} \in [0, 2 \eps]$ it holds
\[ \Prob_{\vec{\pi} \sim \distr^n_{\vec{\theta}^{\star}, \pi_0, \matB}} \left( \theta(r) \notin [0, \theta_2] \right) \le \delta. \]

\noindent Now if we combine the results that we have for the two regimes
$\phi_i^{\star} \ge 2 \eps$, $\phi_i^{\star} < 2 \eps$ and given that
we computing $\hat{\phi}_i$ such that
$\abs{\hat{\phi}_i - \exp\left(\theta_i(r(\vec{\pi}))\right)} \le \eps$ we get
that for any $n \ge \frac{1}{16} \frac{\ln(2/\delta)}{m_i \eps^2}$ it holds that
\[ \Prob_{\vec{\pi} \sim \distr^n_{\vec{\theta}^{\star}, \pi_0, \matB}} \left(\hat{\phi}_i \notin \left[\phi_i^{\star} - \eps, \phi_i^{\star} + \eps\right] \right) \le \delta. \]
\noindent for any $i \in [d]$. Our goal of course is to compute an estimate
$\hat{\vec{\phi}}$ such that the total $\ell_2$ error from all coordinates is
less than $\eps$. To do so we estimate each $\phi_i^{\star}$ with accuracy
$\eps' = \eps/\sqrt{d}$ and with error probability $\delta' = \delta/d$.
Therefore we have that for any
$n \ge \frac{1}{16} \frac{d}{m_i \eps^2} \ln\p{d/\delta}$ is holds that
\[ \Prob_{\vec{\pi} \sim \distr^n_{\vec{\theta}^{\star}, \pi_0, \matB}} \left( \abs{\hat{\phi}_i - \phi_i^{\star}} \ge \frac{\eps}{\sqrt{d}} \right) \le \frac{\delta}{d} \]
\noindent and therefore using union bound over all coordinates we get that
\[ \Prob_{\vec{\pi} \sim \distr^n_{\vec{\theta}^{\star}, \pi_0, \matB}} \left( \norm{\hat{\vec{\phi}} - \vec{\phi}^{\star}}_2 \ge \eps \right) \le \delta \]
\noindent and Theorem \ref{thm:MallowsBlockParameterLearning} follows.


\end{document}